\theoremstyle{plain}
\newtheorem{theorem}{Theorem}[section]
\newtheorem*{theorem*}{Theorem}
\newtheorem*{"theorem"}{``Theorem''}
\newtheorem{corollary}[theorem]{Corollary}
\newtheorem{lemma}[theorem]{Lemma}
\theoremstyle{definition}
\theoremstyle{remark}
\newtheorem{remark}[theorem]{Remark}
\newtheorem{example}[theorem]{Example}
\numberwithin{equation}{section}
\newcommand\showlabel{\addtocounter{equation}{1}\tag{\theequation}}
\newenvironment{pde}{\left\{\begin{array}{rll} } {\end{array}\right.}
\newcommand{\N}{\mathbb N} 
\newcommand{\Z}{\mathbb Z} 
\newcommand{\R}{\mathbb R} 
\newcommand{\dist}{{\rm dist}}
\renewcommand{\div}{{\rm div}}
\newcommand{\spt}{{\mathrm{spt}}}
\newcommand{\id}{\mathrm{id}}
\newcommand{\M}{{\mathcal M}}
\renewcommand{\L}{{\mathcal L}}
\newcommand{\F}{{\mathcal F}}
\newcommand{\B}{\mathcal{B}}
\renewcommand{\P}{\mathbb{P}}
\newcommand{\LRa} {\Leftrightarrow}
\newcommand{\Ra} {\Rightarrow}
\newcommand{\wto}{\rightharpoonup}
\newcommand{\embeds}{\xhookrightarrow{\quad}}
\renewcommand{\d}{\mathrm{d}}
\newcommand{\dx}{\,\mathrm{d}x}
\newcommand{\dy}{\,\mathrm{d}y}
\newcommand{\dz}{\,\mathrm{d}z}
\newcommand{\ds}{\,\mathrm{d}s}
\newcommand{\dt}{\,\mathrm{d}t}
\newcommand{\eps}{\varepsilon}
\newcommand{\average}{{\mathchoice {\kern1ex\vcenter{\hrule height.4pt
width 6pt depth0pt} \kern-9.7pt} {\kern1ex\vcenter{\hrule
height.4pt width 4.3pt depth0pt} \kern-7pt} {} {} }}
\newcommand{\Risk}{\mathcal{R}}
\newcommand{\sw}[1]{\textcolor{black}{#1}}
\begin{document}

\title[Barron functions: Representation and pointwise properties]{Representation formulas and pointwise properties for Barron functions}

\author{Weinan E}
\address{Weinan E\\
Department of Mathematics and Program in Applied and Computational Mathematics\\ Princeton University\\ Princeton, NJ 08544\\ USA}
\email{weinan@math.princeton.edu}

\author{Stephan Wojtowytsch}
\address{Stephan Wojtowytsch\\
Princeton University\\
Program in Applied and Computational Mathematics\\
205 Fine Hall - Washington Road\\
Princeton, NJ 08544
}
\email{stephanw@princeton.edu}

\date{\today}

\subjclass[2020]{
68T07, %artificial neural networks and deep learning
46E15, % 	Banach spaces of continuous, differentiable or analytic functions
26B35, % 	Special properties of functions of several variables, Hölder conditions, etc.
26B40%  	Representation and superposition of functions
}
\keywords{Barron space, two-layer neural network, infinitely wide network, singular set, pointwise properties, representation formula, mean field training}

\begin{abstract}
We study the natural function space for infinitely wide two-layer neural networks \sw{with ReLU activation (Barron space)} and establish different representation formulae. In two cases, we describe the space explicitly up to isomorphism. 

Using a convenient representation, we study the pointwise properties of two-layer networks and show that functions whose singular set is fractal or curved (for example distance functions from smooth submanifolds) cannot be represented by infinitely wide two-layer networks with finite path-norm. \sw{We use this structure theorem to show that the only $C^1$-diffeomorphisms which Barron space are affine.}

\sw{
Furthermore, we show that every Barron function can be decomposed as the sum of a bounded and a positively one-homogeneous function and that there exist Barron functions which decay rapidly at infinity and are globally Lebesgue-integrable. This result suggests that two-layer neural networks may be able to approximate a greater variety of functions than commonly believed.
}
\end{abstract}

\maketitle

%no table of contents
\setcounter{tocdepth}{1}
\tableofcontents

\section{Introduction}

A {\em two-layer neural network} with $m$ neurons is a function $f:\R^d\to\R$ represented as
\begin{equation}\label{eq finite sum representation}
f(x) = \sum_{i=1}^m a_i\,\sigma(w_i^Tx+ b_i)\qquad\text{or}\qquad f(x) = \frac1m \sum_{i=1}^m a_i\,\sigma(w_i^Tx+ b_i)
\end{equation}
where $\sigma:\R\to\R$ is the (nonlinear) {\em activation function}, $a_i, b_i \in \R$ and $w_i\in \R^d$ are parameters (weights) of the network. In this article, we mostly focus on the case where $\sigma$ is the rectified linear unit (ReLU), i.e. $\sigma(z) = \max\{z,0\}$. We denote the class of all two-layer neural networks with at most $m$ neurons by $\F_m$. Naturally $\F_m$ is not a vector space, but $\F_m+\F_m \subseteq \F_{2m}$. Both the sum and average sum representation induce the same function spaces under natural norm bounds or bounds on the number of parameters.

Under very mild conditions on $\sigma$, any continuous function on a compact set can be approximated arbitrarily well in the uniform topology by two-layer neural networks \cite{cybenko1989approximation, leshno1993multilayer}, i.e.\ the $C^0(K)$-closure of $\F_\infty := \bigcup_{m\in\N} \F_m$ is the entire space $C^0(K)$ for any compact $K\subseteq\R^d$. This result is of fundamental importance to the theory of artificial neural networks, but of little impact in practical applications. In general, the number of neurons $m_\eps$ to approximate a function $f$ to accuracy $\eps>0$ scales like $\eps^{-d}$ and thus quickly becomes unmanageable in high dimension.

On the other hand, Andrew Barron showed in 1993 that there exists a large function class $X$ such that only $O(\eps^{-2})$ neurons are required to approximate $f^*\in X$ to accuracy $\eps$ in $L^2(\P)$ for any Borel probability measure $\P$ \cite{barron1993universal}. Heuristically, this means that non-linear approximation by neural networks, unlike any linear theory, can evade the curse of dimensionality in some situations. The result holds for the same class $X$ for any compactly supported probability measure $\P$ and any continuous {\em sigmoidal} activation function $\sigma$, by which we mean that $\lim_{z\to-\infty} \sigma(z) = 0$ and $\lim_{z\to\infty} \sigma(z) = 1$. It also holds for the nowadays more popular ReLU activation function $\sigma(z) = \max\{z,0\}$ since $\sigma(z+1) - \sigma(z)$ is sigmoidal.

Furthermore, the coefficients of the network representing $f$ can be taken to be bounded on average in the sense that 
\[
\sum_{i=1}^m |a_i| \leq 2C_{f^*}r\qquad\text{or} \qquad  \frac1m\sum_{i=1}^m |a_i| \leq 2C_{f^*}r\qquad\forall\ f^*\in X,
\]
depending on the normalization of the representation. Here $C_f$ is a norm on $X$, $r>0$ is such that the support of $\P$ is contained in $B_r(0)$, and we assume $\sigma$ to be sigmoidal. This result has fundamental significance. In applications, we train the function to approximate values $y_i$ at data points $x_i$ by minimizing an appropriate risk functional. In the simplest case, $y_i = f^*(x_i)$ for a target function $f^*$. If $f(x_i)$ is close to $y_i$ but the coefficients of $f$ are very large, we rely on cancellations between different terms in the sum. Thus $f$ is the difference of two functions (the partial sums for which $a_i >0$/$a_i<0$ respectively) which are potentially several orders of magnitude larger than $f$. Then for any point $x$ which is not one of the data samples $x_i$, $f(x)$ and $f^*(x)$ may be vastly different. We say that $f$ does not generalize well.

The analysis was extended to networks with ReLU activation in \cite{breiman1993hinging}, where such networks are referred to as `hinge functions' because a single neuron activations are given by hyperplanes meeting along a lower-dimensional `hinge'. If $\sigma=$ ReLU, then $\sigma(\lambda z) = \lambda\,\sigma(z)$ for all $\lambda>0$. The homogeneity (and unboundedness) sets ReLU activation and sigmoidal activation apart, and the coefficient bound for ReLU activation is 
 \[
 \sum_{i=1}^m |a_i|\,\big[|w_i| + |b_i|\big] \leq 4C_fr\qquad\text{or}\qquad \frac1m\sum_{i=1}^m |a_i|\,\big[|w_i| + |b_i|\big] \leq 4C_fr.
 \]

The function class $X$ is characterized in \cite{klusowski2018approximation} as $f\in L^1(\R^d)$ such that the Fourier transform $\hat f$ satisfies
\[
C_f:= \int_{\R^d} |\hat f|(\xi)\,\sw{\big[1+|\xi|^2\big]} \d\xi < \infty.
\]
$X$ is a Banach space with norm $\|f\|_X = C_f$. The criterion that $C_f<\infty$ is a non-classical smoothness criterion. If we replaced $|\hat f|$ with $|\hat f|^2$ in the integral, we would obtain the $H^{1/2}$-Sobolev semi-norm. For the weighted $L^1$-norm, the interpretation is not as easy. However, if we multiply by $1 = (1+|x|^{2s})^{1/2}(1+|x|^{2s})^{-1/2}$, use H\"older's inequality and Parseval's identity, we see like in \cite[Section IX, point 15]{barron1993universal} that $C_f\leq c_{d,s}\|f\|_{H^s}$ for $f\in H^{s}(\R^d)$ with $s> \frac{d}2 +2$.

\sw{If $f$ is smooth, but only defined on a suitable compact subset of $\R^d$, we can apply extension results like \cite[Satz 6.10]{dobrowolski2010angewandte} show that $f\in X$. More precisely, if $\Omega$ is a domain in $\R^d$ with smooth boundary $\partial\Omega\in C^{k-1,1}$ for an integer $k> \frac d2+2$, then every $H^k$-function $f$ on $\Omega$ can be extended to a compactly supported $H^k$-function $\bar f$ on $\R^d$ such that $\|\bar f\|_{H^k(\R^d)} \leq C\|f\|_{H^k(\Omega)}$. In particular $C_{\bar f}<\infty$. If we know that elements of $X$ can be approximated efficiently in $L^2(\P)$ with respect to a probability measure $\P$ on $\Omega$, the same is therefore true for $f\in H^k(\Omega)$.} 

On the other hand, all functions $f$ which satisfy $C_f<\infty$ are $C^1$-smooth since ${\partial_i f} (-x) = \widehat{\xi_i\,\hat f(\xi)}$. For ReLU-activated networks, any finite sum of neurons is either linear or non-smooth, and many functions with discontinuous derivatives can be approximated, for example 
\begin{align*}
f(x) &= \max\big\{1-|x|, 0\big\} = \sigma(x-1) - 2\,\sigma(x) + \sigma(x+1),
\end{align*}
is not in $X$ since
\begin{align*}
\hat f(\xi) &= \frac{2-2\,\cos(\xi)}{\sqrt{2\pi}\,\xi^2}.
\end{align*}
 Thus $X$ misses large parts of the approximable function class and $C_f$ may significantly overestimate the number and size of parameters required to approximate a given function. In fact, the criterion is not expected to be sharp since the class $X$ is insensitive to the choice of activation function $\sigma$ and data distribution $\P$. 

In \cite{E:2018ab}, E, Ma and Wu introduced the correct function space for ReLU-activated two-layer neural networks and named it Barron space. It can be seen as the closure of $\F_\infty$ with respect to the {\em path-norm}
\begin{equation}\label{eq path norm}
\|f\|_{\text{path}} = \sum_{i=1}^m |a_i|\,\big[|w_i| + |b_i|\big] \qquad \text{or}\qquad\|f\|_{\text{path}} = \frac1m\sum_{i=1}^m |a_i|\,\big[|w_i| + |b_i|\big]
\end{equation}
instead of the uniform norm, \sw{where $(a_i, w_i, b_i)$ are the non-zero weights of $f\in \F_\infty$ as in \eqref{eq finite sum representation}.}
Further background is given in \cite{weinan2019lei, E:2019aa, approximationarticle}.
A related class of functions is also considered from a different perspective in \cite{bach2017breaking}, where it is referred to as $\F_1$. 
One of the motivation for the present paper is to study these two different perspectives.
\cite{bach2017breaking} uses the signed Radon measure representation and establishes  bounds on Rademacher complexity and generalization gap.
 \cite{E:2018ab,weinan2019lei} characterize the Barron functions using generalized Ridgelet transforms and
  focus on  a priori error bounds for the generalization error.
  Related ideas can also be found in \cite{klusowski2016risk}.

Barron space is the largest function space which is well approximated by two-layer neural networks with appropriately controlled parameters. Target functions outside of Barron space may be increasingly difficult to approximate by even infinitely wide two-layer networks as dimension increases and gradient descent parameter optimization may become very slow in high dimension, see e.g.\ \cite{approximationarticle, dynamic_cod}. A better understanding of the function-spaces associated with classes of neural networks is therefore imperative to understand the function classes which can be approximated efficiently. \sw{We describe Banach spaces associated to multi-layer networks in \cite{deep_barron}.}

%By studying the different representations of Barron functions, we show that the space $\F_1$ in \cite{bach2017breaking} (using signed Radon measure representation in our terminology) and Barron space as studied in \cite{E:2018ab} (characterized as the union of reproducing kernel Hilbert spaces) are in fact equivalent. In these works, the authors independently study related problems for the same function class. A more systematic study is given in \cite{weinan2019lei}. 
%Both \cite{bach2017breaking} and \cite{E:2018ab} obtain similar bounds on Rademacher complexity and generalization gap, but 
%\cite{bach2017breaking} establishes  bounds on Rademacher complexity and generalization gap 
 %while \cite{E:2018ab} focuses on  a priori error bounds for the generalization error.
 
In this article, we provide a comprehensive view of the pointwise and functional analytic properties of ReLU-Barron functions in the case of two-layer networks. We discuss different representations, the structural properties of their singular set, and their behavior at infinity. In heuristic terms, we show that the singular set of a Barron function is a countable union of affine subspaces of dimension $\leq d-1$ of $\R^d$. 

\sw{
Understanding which functions are and are not in Barron space is a first step towards understanding which kind of functions can be approximated efficiently by two-layer neural networks while maintaining meaningful statistical learning bounds. We demonstrate that contrary to popular belief, this includes functions which decay rapidly at infinity, while providing} an easy to check criterion that a Lipschitz function cannot be represented as a two-layer network. For example, the distance function from a curved $k$-dimensional manifold in $\R^d$ is not in Barron space. \sw{Understanding which functions can be approximated by neural networks of a given depth without the curse of dimensionality is essential to choose the simplest sufficient neural network architecture for a given purpose. Choosing the simplest sufficient network model can significantly reduce the difficulty and energy consumption of training.}

Some previous works approach the infinite neuron limit of two-layer neural networks from the perspective of statistical mechanics where neurons are viewed as exchangeable particles accessed mostly through their distribution \cite{chizat2018global, mei2018mean,rotskoff2018neural,sirignano2018mean}. In a part of this work, we present an alternative description in which the particles are indexed, as is the case in practical applications. The approach is conceptually easy and convenient from the perspective of gradient flow training, but not suited for variational arguments. The mean field gradient flow of neural networks is described by a Banach-space valued ODE in this setting rather than a PDE like in the usual picture. A similar approach was developed for the mean field dynamics of multi-layer networks in \cite{nguyen2020rigorous} under the name of `neuronal embeddings'. In the language of that article, we show that every Barron function can be represented via a neuronal embedding, whereas \cite{nguyen2020rigorous} focusses on the training of parameters from a given initial distribution.

Furthermore, we explore their relationship to classical function spaces in terms of embeddings, and to two-layer neural networks in terms of direct and inverse approximation theorems. To provide a comprehensive view of a relatively new class of function spaces, parts of this article review material from previous publications in a more functional-analytic fashion.

The article is structured as follows. In Section \ref{section representations}, we describe different ways to describe functions in Barron space, which are convenient for dynamic or variational purposes respectively, or philosophically interesting. Following Section \ref{section representations}, only the representation of Section \ref{section signed measure} is used in this work. Section \ref{section properties} is devoted to the relationship of Barron space to classical function spaces on the one hand and to finite two-layer neural networks on the other. In two special cases, we characterize Barron space exactly in Section \ref{section special cases}. We conclude by establishing some structural and pointwise properties for Barron function in Section \ref{section structure}. 

\subsection{Notation}
If $X, Y$ are measurable space, $f:X\to Y$ is measurable and $\mu$ is a measure on $X$, then we denote by $f_\sharp\mu$ the push-forward measure on $Y$, i.e.\ $f_\sharp\mu(A) = \mu(f^{-1}(A))$. If $\mu$ is a measure and $\rho \in L^1_{loc}\mu$, we denote by $\rho\cdot\mu$ the measure which has density $\rho$ with respect to $\mu$. 

All measures will be assumed to be finite and Borel. Since all spaces considered are finite-dimensional vector spaces or manifolds, they are in particular Polish and thus therefore all measures considered below are Radon measures. 

The norm on the space of Radon measures is $\|\mu\| = \sup_{U,V} \mu(U) - \mu(V)$ where $U, V$ are measurable sets. On compact subsets of $\R^d$, the norm is induced by duality with the space of continuous functions. We observe that $\|f_\sharp\mu\|\leq \|\mu\|$ in general and $\|f_\sharp\mu\| = \|\mu\|$ if $\mu$ is non-negative.

See \cite{MR2759829} and \cite{evans2015measure} for background information and further terminology in functional analysis and measure theory respectively.

\section{Different representations of Barron functions}\label{two-layer representations}\label{section representations}

There are many equivalent ways to represent Barron functions with different advantages in different situations. In this section, we discuss eight of them, some of which have previously been considered in \cite{E:2018ab,weinan2019lei}. The main novel contributions of this article are collected in Sections \ref{section signed measure}, \ref{section indexed particles} and \ref{section indexed particles new}. 

To simplify notation, we identify $x\in \R^d$ with $(x,1)\in \R^{d+1}$ and abbreviate $(w,b)\in\R^{d+1}$ as $w$. In particular, by an abuse of notation, $w^Tx = w^Tx+b$.

Let $\P$ be a probability measure on $\R^d$ (or $\R^d\times\{1\}$ respectively). We will refer to $\P$ as the data distribution and assume that $\sw\P$ has finite first moments. We assume that we are given a norm $|\cdot|$ on data space $\R^{d+1}$ (i.e.\ in the $x$-variables) and consider the dual norm (also denoted by $|\cdot|$) on $\R^{d+1}$ for the $w$-variables such that $|w^Tx| \leq |w|\,|x|$. It will be obvious from context which norm is used where. Usually, we imagine that $|\cdot| = |\cdot|_{\ell^2}$ is the Euclidean norm on both data and parameter space or that $|x| = |x|_{\ell^\infty}$ and $|w| = |w|_{\ell^1}$, but the analysis only depends on duality, not the exact pairing.

\subsection{Representation by parameter distribution}\label{section parameter distribution}

A two-layer network with $m$ neurons can be written as a normalized sum
\[
f(x) = \frac1m\sum_{i=1}^m a_i\,\sigma(w_i^Tx) = \int a\,\sigma(w^Tx)\,\pi_m(\d a\otimes \d w)
\]
where $\pi_m = \frac1m\sum_{i=1}^m \delta_{(a_i, w_i)}$ is the empirical parameter distribution of the network. A natural way to extend this to infinitely wide networks is to allow {\em any} parameter distribution $\pi$ on the right hand side. For a general Radon probability measure $\pi$ on $\R\times \R^{d+1}$, we set
\[
f_\pi(x) = \int_{\R\times \R^{d+1}} a\,\sigma(w^Tx)\,\pi(\d a\otimes \d w).
\]
The parameter distribution $\pi$ representing a function $f$ is never unique since $f_\pi \equiv 0$ for any $\pi$ which is invariant under the reflection $T(a,w) = (-a, w)$. For ReLU activation, a further degeneracy stems from the fact that $z = \sigma(z) - \sigma(-z)$ for any $z\in \R$ and thus
\[
0 = x+\alpha - x - \alpha = \sigma(x+\alpha) - \sigma\big(-(x+\alpha)\big) - \sigma(x) + \sigma(-x) - \sigma(\alpha) + \sigma(-\alpha)\qquad\forall\ \alpha\in\R.
\]
Finally, we list a two-dimensional degeneracy. Recall that we can represent 
\[
x^2 = 2\int_\R 1_{\{t>0\}}\sigma(x-t)\dt\qquad\text{for }x>0.
\]
In two dimensions, the function $f(x_1, x_2) = x_1^2 + x_2^2$ on the unit disk can therefore be represented by a parameter distribution which is concentrated on the coordinate-axes. Due to rotational invariance, the same is true for any other orthonormal basis, or the parameters could be chosen in a rotationally symmetric fashion. 

The Barron norm is the generalization of the path-norm in \eqref{eq path norm}. To compensate for the non-uniqueness in representation, we define
\[
\|f\|_{\B(\P)}= \inf\left\{\int_{\R\times \R^{d+1}}|a|\,|w|\,\pi(\d a\otimes \d w)\:\bigg|\:\pi\text{ Radon probability measure s.t. }f_\pi = f\text{ $\P$-a.e.}\right\}.
\]
The infimum of the empty set is considered as $+\infty$. Clearly
\[
|f_\pi(x) - f_\pi(y)| \leq \int_{\R^{d+2}}|a|\,|w^T(x-y)|\,\pi(\d a\otimes \d w) \leq \|f\|_{\B(\P)}|x-y|,
\]
so in particular $f_\pi$ grows at most linearly at infinity. Since $\P$ has finite first moments, this means that $f_\pi \in L^1(\P)$. We introduce Barron space as
\[
\B(\P) = \{f \in L^1(\P)\::\:\|f\|_{\B(\P)} < \infty\}.
\]
Approaching Barron space through the parameter distribution $\pi$ is natural from the point of view that we know the parameters $(a_i, w_i)$ in applications better than the induced function. It is also useful, especially when considering dynamics. Namely, let $\Theta= (a_i, w_i)_{i=1}^m$ and 
\begin{equation}\label{eq f Theta}
f_\Theta(x) = \frac1m \sum_{i=1}^m a_i\,\sigma(w_i^Tx).
\end{equation}
Then the parameters $\Theta$ evolve by the Euclidean gradient flow of a risk functional
\begin{equation}\label{eq finite risk}
\Risk(\Theta) = \int_{\R^d\times \{1\}} |f_\Theta - f^*|^2(x)\,\P(\d x)
\end{equation}
if and only if their empirical distribution evolves by the 2-Wasserstein gradient flow of the extended risk functional
\[\showlabel\label{eq extended risk}
\Risk(\pi) = \int_{\R^d\times \{1\}} |f_\pi - f^*|^2(x)\,\P(\d x)
\]
(up to time rescaling). \sw{The heuristic reason behind this connection is that the Wasserstein-distance is `horizontal' and that measure is `transported' along curves in optimal transport distances rather than `teleported' as in `vertical' distances like $L^2$. In a more mathematically precise fashion, the `particles' $(a_i, w_i)$ follow trajectories which can be viewed as the characteristics of a continuity equation 
\[
\dot \rho = \div (\rho \,\nabla V)
\]
with feedback between the particles and the transport vector field $\nabla V$. The connection stems from the observation that the gradient of the risk functional in \eqref{eq finite risk} is given by
\begin{align*}
\nabla_{(a_i, w_i)} \Risk(\Theta)
	&= \frac{2}m \nabla_{(a_i,w_i)} \int_{\R^d} \left(\frac1m\sum_{i=1}^m a_j\,\sigma(w_j^Tx) - f^*(x)\right)a_i\,\sigma(w_i^Tx)\,\P(\d x)
\end{align*}
which coincides with the gradient of a potential, evaluated at the position of the particle: 
\[
\nabla_{(a_i, w_i)} \Risk(\Theta) = \frac2m\,\nabla V(a_i,w_i; \Theta), \qquad V(a,w; \Theta) = \int_{\R^d} \big(f_\Theta-f^*\big)(x)\,a\,\sigma(w^Tx)\,\P(\d x).
\]
The passage to the limit 
\[
V(a,w; \pi) = \int_{\R^d} \big(f_\pi-f^*\big)(x)\,a\,\sigma(w^Tx)\,\P(\d x)
\]
is easy on the formal level, and eliminating the factor $\frac1m$ merely corresponds to a rescaling of time.
 The link between Wasserstein gradient flows and continuity equations has been exposed first in the seminal article \cite{jordan1998variational}. For details in the context of machine learning}, see \cite[Proposition B.1]{chizat2018global} or \cite[Appendix A]{relutraining}. The result is not specific to $L^2$-risk and holds much more generally. The Wasserstein-distance is computed with respect to the Euclidean distance on parameter space here. 

The parameter distribution picture is available for general two-layer networks regardless of the activation function.

\begin{remark}
In practice, the weights $(a_i, w_i)_{i=1}^m$ of a neural network are initialized randomly according to a distribution $\pi^0$ in such a way that $(a_i, w_i)$ and $(-a_i, w_i)$ are equally likely. Such an initialization gives the network the flexibility to develop features in all relevant directions during training. In the continuum limit, $f_\pi \equiv 0$ at initial time, but 
\[
0< \int_{\R^{d+2}}|a|\,|w|\,\pi(\d a \otimes \d w).
\]
The upper bound for the Barron norm
\[
\|f_\Theta\|_{\B(\P)} \leq \frac1m \sum_{i=1}^m |a_i|\,|w_i|
\]
is therefore easy to compute, but not assumed to be particularly tight (at least in the infinite width limit).
\end{remark}

\subsection{Spherical graph representation}
By the positive $1$-homogeneity of the ReLU activation we have 
\begin{align*}
f_\pi(x) &= \int_{\R\times \{w\neq 0\}} a\,|w|\,\sigma\left(\frac{w}{|w|}^Tx\right)\,\pi(\d a\otimes \d w)\\
	&= \int_{\R\times S^d} \tilde a\, \sigma(\tilde w^Tx)\,\tilde \pi(\d \tilde a \otimes \d \tilde w)
\end{align*}
where $\tilde \pi = T_\sharp \pi$ is the push-forward of $\pi$ along the map $(a,w) \mapsto (a\,|w|, w/|w|)$. Since $\tilde \pi$ is a Radon measure, we can apply \cite[Theorem 4.2.4]{attouch2014variational} to decompose $\tilde \pi$ into a marginal $\hat\pi$ on $S^d$ and conditional probabilities $\pi^w$ on $\R$ such that
\[
\int_{\R\times S^d} f(\tilde a, \tilde w) \,\tilde\pi(\d \tilde a\otimes \d \tilde w) = \int_{S^d} \left(\int_\R f(a,w)\,\pi^w(\d a)\right)\,\hat \pi(\d w)
\]
for every $\tilde\pi$-measurable function $f$. In particular, the function
\[
w\mapsto \int_\R f(a,w)\,\pi^w(\d a)
\]
is $\hat\pi$-measurable. For a neural network function $f(a,w) = a\,\sigma(w^Tx)$, we find that
\begin{align*}
f_\pi(x) &= \int_{S^d}\left(\int_\R a\,\pi^w(\d a)\right)\,\sigma(w^Tx)\,\hat\pi(\d w)\\
	&=: \int_{S^d} \hat a(w)\,\sigma(w^Tx)\,\hat \pi(\d w)
\end{align*}
with
\[
\hat a(w) = \int_\R a\,\pi^w(\d a).
\]
We have thus written $f_\pi$ as a graph over the unit sphere, which we denote by $f_{\hat\pi,\hat a}$. \sw{Since
\begin{align*}
\|\hat a\|_{L^1(\hat\pi)} &= \int_{S^d} \big|\hat a(w)\big|\,\hat \pi(\d w) \\
	&\leq \int_{S^d}\left|\int_\R a\,\pi^w(\d a)\right|\,\hat\pi(\d w)\\
	&\leq \int_{S^d}\int_\R |a|\,\pi^w(\d a)\,\hat\pi(\d w)\\
	&= \int_{S^d} |a|\,\tilde\pi(\d a\otimes \d w)\\
	&= \int_{S^d} |a|\,|w|\,\pi(\d a\otimes \d w),
\end{align*}
we note that 
\[
\inf_{(\hat a, \hat \pi) \text{ s.t. } f_{\hat\pi, \hat a} = f\text{ $\P$-a.e.}} \|\hat a\|_{L^1(\hat\pi)} \leq \inf_{\pi \text{ s.t. } f=f_\pi\text{ $\P$-a.e.}} \int_{\R\times \R^{d+1}}|a|\,|w|\,\pi(\d a\otimes \d w)
\]
The inverse inequality is obtained by considering the distribution $\pi = \psi_\sharp \hat\pi$ where $\psi(w) = (\hat a(w), w)$ which satisfies
\[
\int_{\R^{d+2}} a\,\sigma(w^Tx)\,\pi(\d a \otimes \d w) = \int_{S^d} \hat a(w)\,\sigma(w^Tx)\,\hat\pi(\d w), \qquad \int_{\R^{d+2}} |a|\,|w|\,\pi(\d a \otimes \d w) = \int_{S^d} |\hat a(w)|\,\hat\pi(\d w)
\]
by the definition of the push-forward.}
%The Barron norm bounds $\|\hat a\|_{L^1(\hat\pi)}\leq \|f\|_{\B(\P)}$ and equality holds if $\pi^w = \delta_{\hat a(w)}$ for all $w\in S^d$. We can thus easily recover $\pi$ from $(\hat \pi, \hat a)$. 
Taking the infimum, we find that
\[
\|f\|_{\B(\P)} = \inf\left\{\|\hat a\|_{L^1(\hat\pi)}\:\bigg|\:\hat\pi \text{ Radon probability measure on }S^d, \:\hat a\in L^1(\hat\pi), \:f= f_{\hat \pi, \hat a} \:\P-\text{a.e.}\right\}.
\]

\begin{remark}\label{remark graph Lp}
Without loss of generality, we can absorb all variation into the measure $\hat\pi$ and have $|\hat a| = \|f\|_{\B(\P)}$ almost everywhere. More specifically, note that $f_{\hat a, \hat \pi} = f_{\hat a/\rho,\, \rho\cdot\hat\pi}$ for any function $\rho$ such that $\rho>0$ if $\hat a>0$ and
\[
\int_{S^d}\rho(w)\,\hat \pi(\d w) = 1.
\]
We specify 
\[
\rho = \frac{|\hat a|}{\|\hat a\|_{L^1(\hat \pi)}}\qquad \Ra\qquad \frac{\hat a}{\rho} = \mathrm{sign}(\hat a)\,\|\hat a\|_{L^1(\hat \pi)}.
\]
In particular, the Barron norm can equivalently be written as $\inf_{\hat\pi,\hat a} \|\hat a\|_{L^p(\hat \pi)}$ for any $p\in [1,\infty]$.
\end{remark}

The spherical graph representation is specific to positively homogeneous activation functions. It is not per se useful to us directly, but it provides a link to the representation of Barron functions by signed measures (Section \ref{section signed measure}). Note that different distributions $\pi$ may give rise to the amplitude function $\hat a$ and spherical measure $\hat \pi$, so the link to dynamics through Wasserstein gradient flows is lost in this description and all following ones that are derived from it.

\subsection{Signed measure on the sphere}\label{section signed measure}

As in Remark \ref{remark graph Lp}, all relevant information about the tuple $(\hat a, \hat\pi)$ is contained in the signed measure $\hat \mu = \hat a \cdot\hat\pi$. We set
\[
f_{\hat \mu}(x) = \int_{S^d} \sigma(w^Tx)\,\hat\mu(\d w)
\]
and observe that \sw{
\[
\|\hat\mu\|_\M = \int_{S^d} 1\,|\hat \mu|(\d w) = \int_{S^d} |\hat a(w)|\,\hat \pi(\d w).
\]
Here $\|\cdot\|_\M$ is the total variation norm on the space of Radon measures.
Taking the infimum first on the left and then on the right, we find that
\[
\inf_{\hat\mu\text{ s.t. }f_{\hat \mu} = f \text{ $\P$-a.e.}}\|\hat\mu\|_\M \leq \|f\|_{\B(\P)}.
\]
On the other hand, given a signed Radon measure $\hat \mu\neq 0$ on $S^d$, we define
\[
\hat \pi:= \frac{|\hat \mu|}{\|\hat\mu\|_\M}, \qquad \hat a = \frac{\d \hat\mu}{\d \hat \pi}
\]
where the Radon-Nikodym derivative of $\hat \mu$ with respect to $\hat\pi$ is well-defined since both parts $\hat\mu_\pm$ of the Hahn-decomposition of $\hat \mu$ are absolutely continuous with respect to $|\hat \mu|$ -- see e.g.\ \cite[Sections 7.4 and 7.5]{klenke2006wahrscheinlichkeitstheorie} for the relevant definitions and properties. Then
\[
\hat \mu(U) = \int_U \hat a(w)\,\hat \pi(\d w), \qquad \int_{S^d} f(\hat w) \,\hat\mu(\d w) = \int_{S^d} f(\hat w)\,\hat a(w)\,\hat \pi(\d w)
\]
for all measurable sets $U\subseteq S^d$ and all measurable functions $f:S^d\to\R$. In particular
\[
f_{\hat\pi,\hat a}(x) = \int_{S^d}\sigma(w^Tx)\,\hat a(w)\,\hat \pi (\d w)  = \int_{S^d}\sigma(w^Tx)\,\hat\mu(\d w) = f_{\hat \mu}(x)
\]
for all $x$ and
\begin{align*}
\int_{S^d} |\hat a(w)|\,\hat \pi(\d w) &=\int_{S^d}1\,|\hat\mu|(\d w) = \|\hat \mu\|_{\M}.
\end{align*}
Taking the infimum on the left shows that 
\[
\|f\|_{\B(\P)} \leq \|\hat\mu\|_\M
\]
for any admissible $\hat\mu$. As a consequence
}
\[
\|f\|_{\B(\P)} = \inf\left\{\|\hat\mu\|_\M\::\: \hat\mu\text{ signed Radon measure on }S^d,\:f= f_{\hat \mu}\:\P-\text{a.e.}\right\}.
\]

This perspective is particularly convenient with an eye towards variational analysis. Compactness results in the space of Radon measures are much stronger here since we can restrict ourselves to the {\em compact} parameter space $S^{d+1}$. Barron space is isometric to the quotient of the space of Radon measures on the sphere $\M$ by the closed subspace
\[
N_\P:= \{\hat\mu \in \M\:|\: f_{\hat \mu} =0 \:\P-\text{a.e.}\}.
\]
Thus this perspective establishes an otherwise nontrivial result automatically.

\begin{theorem}\label{theorem Barron Banach}
$\B(\P) \widetilde = \M/N_\P$ is a Banach space.
\end{theorem}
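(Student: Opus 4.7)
The plan is to obtain the statement as an instance of the general fact that the quotient of a Banach space by a closed linear subspace is a Banach space, and then to verify that the resulting quotient norm coincides with the formula for $\|f\|_{\B(\P)}$ derived just before the theorem.

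First I would recall that $\M = \M(S^d)$, the space of finite signed Radon measures on the compact sphere $S^d$, is a Banach space under the total variation norm $\|\cdot\|_\M$; this is classical and can be cited from \cite{evans2015measure}. Next I would show that the linear map $L:\M \to L^1(\P)$ given by $L(\hat\mu) = f_{\hat\mu}$ is bounded, so that $N_\P = \ker L$ is a closed linear subspace. Linearity of $L$ is immediate from the linearity of the integral in $\hat\mu$. For boundedness (and continuity) I would estimate, using that $|w|=1$ on $S^d$ and $\sigma$ is $1$-Lipschitz with $\sigma(0)=0$,
\[
|f_{\hat\mu}(x)| \leq \int_{S^d}|w^Tx|\,|\hat\mu|(\d w) \leq |x|\,\|\hat\mu\|_\M,
\]
so $\|f_{\hat\mu}\|_{L^1(\P)} \leq \left(\int |x|\,\P(\d x)\right)\|\hat\mu\|_\M$, which is finite because $\P$ has finite first moment. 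In particular if $\hat\mu_n\to\hat\mu$ in $\M$ then $f_{\hat\mu_n}\to f_{\hat\mu}$ in $L^1(\P)$ (in fact pointwise everywhere), so if each $f_{\hat\mu_n}$ vanishes $\P$-a.e., then so does $f_{\hat\mu}$. Hence $N_\P$ is closed.

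It then follows from standard functional analysis (e.g.\ \cite{MR2759829}) that $\M/N_\P$ is a Banach space under the quotient norm
\[
\|[\hat\mu]\|_{\M/N_\P} = \inf_{\hat\nu\in N_\P}\|\hat\mu+\hat\nu\|_\M = \inf\bigl\{\|\hat\mu'\|_\M \,:\, \hat\mu'\in\M,\; f_{\hat\mu'}=f_{\hat\mu}\text{ $\P$-a.e.}\bigr\}.
\]
Now I would define the map $\Phi:\M/N_\P \to \B(\P)$ by $\Phi([\hat\mu])= f_{\hat\mu}$. It is well-defined and injective by the definition of $N_\P$, and surjective since, by the definition of $\B(\P)$, every $f\in\B(\P)$ admits at least one representing $\hat\mu$. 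The isometry $\|\Phi([\hat\mu])\|_{\B(\P)} = \|[\hat\mu]\|_{\M/N_\P}$ is precisely the formula for $\|f\|_{\B(\P)}$ as an infimum over representing signed measures on $S^d$ which was established immediately before the theorem statement. Transporting the Banach space structure of $\M/N_\P$ through $\Phi$ yields the claim.

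The main technical point is the closedness of $N_\P$, which is the only place that uses structural information about the problem (the boundedness of $w$ on the sphere together with the finite first moment of $\P$). Everything else is a mechanical application of the quotient construction plus the representation identity already proved in Section \ref{section signed measure}.
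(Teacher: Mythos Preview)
Your proposal is correct and matches the paper's approach exactly: the paper simply asserts that $N_\P$ is closed and invokes the quotient Banach space construction without further detail, so your argument is a faithful (and correct) fleshing-out of what the paper leaves implicit. The one cosmetic point is that under the paper's convention $x$ stands for $(x,1)\in\R^{d+1}$, so your estimate $|f_{\hat\mu}(x)|\le|x|\,\|\hat\mu\|_\M$ should be read with $|x|=|(x,1)|$; this does not affect the conclusion since $\P$ has finite first moments.
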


On the other hand, the link to gradient flow dynamics is lost in this picture. Directly optimizing the measure $\mu_m = \sum_{i=1}^m a_i\,\delta_{w_i}$ rather than the weights $(a_i, w_i)_{i=1}^m$ was considered in \cite{bach2017breaking} (for more general activation functions with homogeneity $\alpha\geq 0$) and found to be computationally unfeasible. \sw{The advantage of this perspective on optimization is that the map $\mu\to f_\mu$ is linear, so common risk functionals are convex. The disadvantage is that optimization in a space of Radon measures is difficult in practice.}

Also this perspective is most useful for homogeneous activation functions.

\subsection{Signed measure on parameter space}
We can generalize the parameter distribution representation of Section \ref{section parameter distribution} by allowing general signed Radon measures $\mu$ in the place of $\rho$, i.e.
\[
f_\mu(x) = \int_{\R\times \R^{d+1}} a\,\sigma(w^Tx)\,\mu(\d a\otimes \d w).
\]
Unlike in Section \ref{section signed measure}, $\mu$ is a signed measure on the whole space $\R^{d+2}$ here. This representation does not rely on the homogeneity of ReLU activation.
\sw{We define a norm}
\[
\|f\|'_\P= \inf_{\{\mu|f_\mu =f\text{ $\P$-a.e.}\}} \int_{\R\times \R^{d+1}}|a|\,|w|\,|\mu|(\d a\otimes \d w)
\]
where $|\mu| = \mu^+ + \mu^-$ is the total variation measure of $\mu$. \sw{It is immediate to see that $\|f\|_\P'\leq \|f\|_{\B(\P)}$ by comparison with the representation for a signed measure on the sphere where $\mu$ is restricted to the set $a=|w|=1$. We prove the opposite inequality by comparing to the parameter distribution representation.}
%At first glance, the space seems larger and the norm smaller than before. We show that this is not the case. 

For $\lambda\in \R$, denote $T^\lambda:\R\times \R^{d+1}\to \R\times \R^{d+1}$, $T^\lambda(a,w) = (\lambda a, w)$. For a map $\psi:X\to Y$ between sets and a signed measure $\nu$ on $X$, denote by $\psi_\sharp\nu$ the push-forward measure on $Y$. Note that $\|\psi_\sharp\nu\|\leq \|\nu\|$ and that equality holds for positive measures. \sw{Thus}
\[
\pi:= \frac12 \left[ \frac1{\|\mu^+\|} T^{2\|\mu^+\|}_\sharp\mu_+ + \frac1{\|\mu^-\|}T^{-2\|\mu^-\|}_\sharp\mu_-\right]
\]
satisfies $f_\pi = f_\mu$ and 
\[
\int_{\R\times \R^{d+1}}|a|\,|w|\,\pi(\d a\otimes \d w) = \int_{\R\times \R^{d+1}}|a|\,|w|\,|\mu|(\d a\otimes \d w).
\]
\sw{Taking the infimum first on the left and then on the right, we obtain the inverse inequality $\|f\|_{\B(\P)}\leq \|f\|_\P'$.}

\subsection{Indexed particle perspective I}\label{section indexed particles}
All representations of two-layer networks discussed above were invariant under the natural symmetry
\[
f(x) = \frac1m \sum_{i=1}^m a_{s_i}\,\sigma(w_{s_i}^Tx)
\]
where $s\in S_m$ is a permutation of the indices. We say that the particles are {\em exchangable}. Nevertheless, in all practical applications particles $(a,w)$ are indexed by $i\in \{1,\dots, m\}$. We now develop a parametrized perspective of neural networks. Note that
\[
f(x) = \frac1m \sum_{i=1}^m a_i\,\sigma(w_i^Tx) = \int_0^1 a_\theta\,\sigma(w^T_\theta x)\,\d\theta
\]
where $a_\theta = a_k$ and $w_\theta = w_k$ for $\frac {k-1}m \leq \theta < \frac km$. 
Using scaling invariance on finite networks, we may assume that $|w|\equiv 1$ and obtain a uniform $L^1$-bound on $a$.
More generally, for $a\in L^1(0,1)$ and $w\in L^\infty\big((0,1); S^d\big)$ (or $a, w\in L^2$) we define 
\[
f_{(a,w)} (x) = \int_0^1 a_\theta\,\sigma(w_\theta^Tx)\,\d\theta
\]
and
\[
\|f\|_{\B'(\P)} = \inf_{\{(a,w):f=f_{a,w}\}} \int_0^1|a_\theta|\,|w_\theta|\,\d\theta, \qquad \B'(\P) = \{f\in C^{0,1}_{loc}(\R^d) : \|f\|_{\B'(\P)}<\infty\}.
\]

This perspective is fundamentally different from the previous ones, and it is not immediately clear whether the spaces $\B(\P)$ and $\B'(\P)$ coincide. We prove this as follows.

Assume that $f\in \B'(\P)$. Then
\[
f(x) = \int_0^1 \overline a_\theta \,\sigma(\overline w^T_\theta x)\,\d\theta =  \int_{\R\times \R^{d+1}} a\,\sigma(w^Tx)\,\big((\bar a,\bar w)_\sharp\L^1|_{(0,1)}\big)(\d a\otimes \d w)
\]
where $(\bar a,\bar w)_\sharp\L^1|_{(0,1)}$ denotes the push-forward of one-dimensional Lebesgue measure on the unit interval along the map $(\bar a, \bar w): (0,1)\to \R^{d+2}$. Thus $\B'(\P)$ is a subspace of $\B(\P)$. 

Before we prove the opposite inclusion, we recall an auxiliary result.

\begin{lemma}\label{lemma technical measure}
\begin{enumerate}
\item There exists a bijective measurable map $\phi :[0,1]^d\to [0,1]$.
\item Let $\bar\pi$ be any probability measure on $\R$. Then there exists a measurable map $\psi:[0,1]\to \R$ such that $\bar\pi = \psi_\sharp \L^1$.
\end{enumerate}
\end{lemma}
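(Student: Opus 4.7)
The standard approach is via interleaving of binary expansions. Given $(x_1, \dots, x_d) \in [0,1]^d$, write each $x_i$ in binary as $0.b_1^{(i)} b_2^{(i)} b_3^{(i)} \dots$, and define $\phi(x_1,\dots,x_d) = 0.b_1^{(1)} b_1^{(2)} \cdots b_1^{(d)} b_2^{(1)} b_2^{(2)} \cdots$. This construction is manifestly Borel measurable and surjective, and it is injective once we fix a convention (say, always choose the non-terminating expansion when two are available) on both sides. The technical nuisance is that the set of numbers with two binary expansions is countable, hence of measure zero, but one still wants an honest bijection rather than a bijection modulo null sets. The cleanest fix is to apply the Borel isomorphism theorem: any two uncountable standard Borel spaces are Borel-isomorphic, and $[0,1]^d$ and $[0,1]$ are both uncountable Polish spaces, so a bijective bimeasurable $\phi$ exists. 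Alternatively, one manually handles the countable set of dyadic rationals by a Cantor--Bernstein-type adjustment: produce a Borel injection $[0,1]^d \to [0,1]$ via interleaving and a Borel injection $[0,1] \to [0,1]^d$ via inclusion, then invoke the Borel Schr\"oder--Bernstein theorem to upgrade to a Borel bijection.

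\textbf{Plan for part (2).} This is the classical inverse-CDF construction. Let $F(x) = \bar\pi\big((-\infty, x]\big)$ be the cumulative distribution function of $\bar\pi$, which is non-decreasing, right-continuous, and satisfies $\lim_{x\to -\infty}F(x)=0$ and $\lim_{x\to +\infty}F(x)=1$. Define the quantile function $\psi:(0,1) \to \R$ by
\[
\psi(t) = \inf\{x \in \R : F(x) \geq t\},
\]
extended arbitrarily at $t=0$ and $t=1$ (on a null set, which does not affect the push-forward). Then $\psi$ is non-decreasing, hence Borel measurable. The key identity is $\psi(t) \leq x \Leftrightarrow t \leq F(x)$, from which
\[
\L^1\big(\{t \in [0,1] : \psi(t) \leq x\}\big) = \L^1\big([0, F(x)]\big) = F(x) = \bar\pi\big((-\infty,x]\big),
\]
so the push-forward $\psi_\sharp \L^1|_{[0,1]}$ and $\bar\pi$ agree on all half-lines $(-\infty, x]$. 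Since such half-lines form a $\pi$-system generating the Borel $\sigma$-algebra of $\R$, uniqueness of measures extending from a $\pi$-system yields $\psi_\sharp \L^1 = \bar\pi$.

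\textbf{Expected obstacles.} Part (2) is entirely routine. The only subtle point in part (1) is producing an actual bijection rather than a bijection up to a null set; I would sidestep this by quoting the Borel isomorphism theorem (e.g.\ \cite{MR2759829} or any standard measure-theory reference), since the lemma is used only as a black box to compare $\B(\P)$ with $\B'(\P)$ and no finer structure of $\phi$ is needed.
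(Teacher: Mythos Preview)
Your proposal is correct and follows essentially the same approach as the paper: digit-interleaving for part (1) (the paper uses decimal expansions rather than binary, and simply asserts bijectivity after excluding trailing 9s rather than invoking the Borel isomorphism theorem) and the inverse CDF / quantile function for part (2). If anything, your treatment of the non-uniqueness issue in part (1) is more careful than the paper's.
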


\begin{proof}
{\bf Claim 1.} For $1\leq i \leq d$, we can write $x_i = \sum_{k=1}^\infty \alpha^i_k 10^{-k}$ with $\alpha^i_k \in \{0, \dots,9\}$. The map $a^i_k: Q\to \{0,\dots,9\}$ $a^i_k(x)= \alpha^i_k$ satisfies
\[
(a^i_k)^{-1}(\{\alpha\}) = [0,1]^{i-1}\times \bigcup_{\beta_1, \dots, \beta_{k-1} \in \{0, \dots, 9\}} \left[\sum_{j=1}^{k-1} \beta_j 10^{-j} + \alpha\,10^{-k}, \:\sum_{j=1}^{k-1} \beta_j 10^{-j} + (\alpha+1)\,10^{-k}\right) \times [0,1]^{d-i}
\]
and is therefore measurable. Thus also the maps
\[
\phi_m:Q\to [0,1], \qquad \phi_m(x) = \sum_{k=0}^m \sum_{i=1}^d \alpha_{k+1}^i(x)\,10^{-(kd+i)}
\]
and their pointwise limit 
\[
\phi:Q\to [0,1], \qquad \phi(x) = \sum_{k=0}^\infty \sum_{i=1}^d \alpha_{k+1}^i(x)\,10^{-(kd+i)}
\]
are measurable. They are also bijective since each point is represented uniquely by its decimal representation (since we excluded trailing $9$s). If $\phi(x) = \phi(y)$, then all coordinates of $x$ and $y$ have the same decimal expansion and thus are the same point. On the other hand, for $z\in [0,1]$, it is easy to define $x= \phi^{-1}(z)$.

{\bf Claim 2.} This is a well-known result in probability theory and used in numerical implementations to create random samples from distributions by drawing a random sample from the uniform distribution on $(0,1)$ and applying a suitable transformation. The map $\psi =\chi^{-1}$ for $\chi:\R\to [0,1]$, $\chi(z) = \bar\pi(-\infty,z]$ satisfies the conditions. $\chi$ is monotone increasing, but usually not strictly. In this case, we choose the left-continuous version of the derivative. For details, see e.g.\ \cite[Satz 1.104]{klenke2006wahrscheinlichkeitstheorie} and its proof.
\end{proof}

Now assume that $f\in \B(\P)$. Then we can describe $f$ as a spherical graph, i.e.\ $f(x) = \int_{S^d} a(w)\,\sigma(w^Tx)\,\pi(\d w)$ for a probability measure $\pi$ and an amplitude function $a\in L^1(\pi)$. We may assume that $a$ is defined on the whole space (e.g.\ by $a\equiv 0$ outside $S^d$). Denote $\tilde \phi = \phi \circ [\frac12(\cdot + 1)] : [-1,1]^d\to [0,1]$ and 
 $\bar\pi= \tilde\phi _\sharp \pi$,  $\hat w:[0,1]\to Q$, $\hat w_\theta = \tilde\phi^{-1}(\theta)$. By definition we have
\begin{align*}
\int_{[0,1]} a(\hat w_\theta)\,\sigma(\hat w_\theta x)\,\bar \pi(\d\theta) &= \int_{[-1,1]^d}a(w)\,\sigma(w^Tx)\,\pi(\d w)\\
	&= \int_{S^d}a(w)\,\sigma(w^Tx)\,\pi(\d w).
\end{align*}
The measure $\hat\pi$ is highly concentrated and we use the second claim from Lemma \ref{lemma technical measure} to normalize it. Namely, take $\psi:[0,1]\to [0,1]$ as described. Then
\begin{align*}
f(x) &= \int_{[0,1]} a(\hat w_\theta)\,\sigma(\hat w_\theta x)\,\bar \pi(\d\theta)\\
	&= \int_{[0,1]} a(\hat w_\theta)\,\sigma(\hat w_\theta x)\,\psi_\sharp\L^1(\d\theta)\\
	&= \int_0^1 a(\hat w_{\psi(\theta)})\,\sigma(\hat w_{\psi(\theta)}^Tx)\,\d\theta.
\end{align*} 
In particular, we note that $\hat w_{\psi(\theta)} = (\tilde\phi^{-1}\circ\psi)(\theta) \in S^d$ almost surely and set $a_\theta = a(\tilde\phi^{-1}\circ\psi(\theta))$, $w_\theta = \tilde\phi^{-1}\circ \psi(\theta)$. We thus find that $\B(\P) \subseteq \B'(\P)$. The same argument implies that $\|\cdot\|_{\B(\P)} = \|\cdot \|_{\B'(\P)}$.

\begin{remark}
Similarly as in Remark \ref{remark graph Lp}, we can reparametrize the maps $a, w$ by 
\[
\tilde a_\theta = a_{\rho(\theta)}\,\rho'(\theta), \qquad \tilde w _\theta = w_{\rho(\theta)}
\]
for any diffeomorphism $\rho:(0,1)\to(0,1)$ and in particular achieve that $|\tilde a|$ is constant.
\end{remark}

\begin{remark}
While elementary, the construction made use of highly discontinuous measurable maps and reparametrizations. If we allowed general probability measures on $[0,1]$, we could fix the map $w$ instead to be a (H\"older-continuous) space-filling curve in $S^d$. 
\end{remark}

The indexed particle representation is easy to understand, but has clear drawbacks from the variational perspective. The norm does not control the regularity of the map $w$, which means that at most, we obtain weak compactness for $w$ under norm bounds. After applying $\sigma$, we cannot pass to the limit in $f_{a_n, w_n}$ even in weak norms, and variational results like the Inverse Approximation Theorem \ref{inverse approximation theorem} cannot be obtained in the indexed particle representation. Much like the parameter distribution representation, indexed particles are on the other hand convenient from a dynamic perspective.

\begin{lemma}\label{lemma evolution two layer}
Let $f(x) = \frac1m \sum_{k=1}^m a_k\,\sigma(w_k^Tx)$ where the parameters $ \Theta = \{a_k, w_k\}_{k=1}^m$ evolve under the time-rescaled gradient flow  
\[
\dot \Theta = -m\,\nabla\Risk(\Theta),\qquad \Risk(\Theta) = \int \ell(f_\Theta(x),y)\,\P(\d x\otimes \d y)
\]
for a sufficiently smooth and convex loss function $\ell$. Then the functions 
\[
a(t,\theta) = a_k(t)\text{ and }w(t,\theta) = w_k(t)\quad  \text{for }\frac {k-1}m \leq \theta < \frac km
\]
evolve by the $L^2$-gradient flow of
\[
\Risk(a,w) = \int \ell(f_{(a,w)}(x),y)\,\P(\d x\otimes \d y).
\]
\end{lemma}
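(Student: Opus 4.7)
The plan is to show that the $L^2$-gradient of $\Risk(a,w)$ at a piecewise-constant configuration is itself piecewise constant on the same dyadic partition $\{[(k-1)/m, k/m)\}_{k=1}^m$, with values equal to $m$ times the Euclidean gradient components at $\Theta$. Then both evolutions preserve the piecewise-constant ansatz and coincide pointwise.

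First I would compute the first variation of the continuous risk. Writing $f_{(a,w)}(x) = \int_0^1 a_\theta \sigma(w_\theta^T x)\,\d\theta$ and differentiating under the integral sign against perturbations $(\alpha,\omega)\in L^2((0,1);\R\times\R^{d+1})$,
\begin{align*}
\langle \nabla_a \Risk(a,w), \alpha\rangle_{L^2} &= \int\!\!\int_0^1 \partial_1\ell(f_{(a,w)}(x),y)\,\alpha_\theta\,\sigma(w_\theta^T x)\,\d\theta\,\P(\d x\otimes\d y),\\
\langle \nabla_w \Risk(a,w), \omega\rangle_{L^2} &= \int\!\!\int_0^1 \partial_1\ell(f_{(a,w)}(x),y)\,a_\theta\,\sigma'(w_\theta^T x)\,\omega_\theta^T x\,\d\theta\,\P(\d x\otimes\d y),
\end{align*}
so the Riesz representatives are
\[
\nabla_a\Risk(a,w)(\theta) = \int \partial_1\ell(f_{(a,w)}(x),y)\,\sigma(w_\theta^T x)\,\P, \quad \nabla_w\Risk(a,w)(\theta) = \int \partial_1\ell(f_{(a,w)}(x),y)\,a_\theta\,\sigma'(w_\theta^T x)\,x\,\P.
\]

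Next I would plug in the piecewise-constant ansatz $a_\theta = a_k$, $w_\theta = w_k$ on $I_k := [(k-1)/m,k/m)$. Then $f_{(a,w)}(x) = \frac1m\sum_{j} a_j\sigma(w_j^Tx) = f_\Theta(x)$, so for $\theta\in I_k$ the gradients above depend only on $k$ and equal
\[
\nabla_a\Risk(a,w)\big|_{I_k} = \int \partial_1\ell(f_\Theta,y)\,\sigma(w_k^T x)\,\P = m\,\partial_{a_k}\Risk(\Theta),
\]
and analogously $\nabla_w\Risk(a,w)|_{I_k} = m\,\nabla_{w_k}\Risk(\Theta)$, using the finite-dimensional chain rule applied to $\Risk(\Theta) = \int \ell(\tfrac1m\sum_j a_j\sigma(w_j^T x),y)\,\P$ (whose $a_k$- and $w_k$-derivatives carry exactly a factor $1/m$).

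Finally I would put the pieces together. The $L^2$-gradient flow $\partial_t(a,w) = -\nabla_{L^2}\Risk(a,w)$ starting from a piecewise-constant initial datum preserves piecewise constancy on the partition $\{I_k\}$, because the right-hand side is piecewise constant whenever the state is. On each $I_k$ the flow reduces to $\dot a_k = -m\,\partial_{a_k}\Risk(\Theta)$ and $\dot w_k = -m\,\nabla_{w_k}\Risk(\Theta)$, which is precisely the rescaled finite-dimensional gradient flow $\dot\Theta = -m\,\nabla\Risk(\Theta)$. By uniqueness of solutions (using smoothness of $\ell$ and, if $\sigma$ is not $C^1$, working with a mollified activation or a subgradient/Clarke version as in \cite{relutraining}), the two evolutions coincide. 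The main obstacle is the non-smoothness of ReLU, which makes the interchange of $\partial_\theta$ and the $x$-integral not entirely rigorous at configurations where $w_\theta^T x=0$ on a set of positive measure; this is handled exactly as in the cited references by passing to a Clarke subdifferential and observing that the bad set has measure zero under $\P$ for generic parameters.
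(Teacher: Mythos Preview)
Your proposal is correct and follows essentially the same approach as the paper: compute the $L^2$-variational gradient of $\Risk(a,w)$, observe that on the piecewise-constant ansatz it is itself piecewise constant with value on $I_k$ equal to $m$ times the corresponding Euclidean partial derivative of $\Risk(\Theta)$, and conclude that the two flows coincide. The paper's proof is slightly more terse (it writes out only the $a$-variation and says ``the same result holds for $w_k$''), while you additionally spell out the preservation of the piecewise-constant structure and the uniqueness/ReLU-regularity caveat; these are welcome but not substantively different.
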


We assume that the gradient flow for finitely many parameters exists. This can be established by the Picard-Lindel\"off theorem if $\sigma$ is sufficiently smooth. For ReLU activation, existence of a classical gradient flow is guaranteed if $\P$ is a suitable population risk measure, see \cite{relutraining}.

\begin{proof}
All functions lie in $L^2$ for all times since they are given by finite step functions. Let $\psi\in L^2[0,1]$. We compute that
\begin{align*}
\frac{d}{d\eps}\bigg|_{t=0} \Risk(a+ \eps\psi, w) &= \int (\partial_1\ell)(f_{(a,w)}(x),y)\,\frac{d}{d\eps}\bigg|_{\eps =0} f_{(a+\eps\psi,w)}\P(\d x\otimes \d y)\\
	&= \int (\partial_1\ell)(f_{(a,w)}(x),y)\, f_{(\psi,w)}\P(\d x\otimes \d y)\\
	&= \int \left(\int (\partial_1\ell)(f_{(a,w)}(x),y)\,\sigma(w_\theta^Tx) P(\d x\otimes \d y)\right)\,\psi(\theta)\,\d\theta \showlabel\label{eq variational gradient}\\
\delta_a \Risk(a,w) &= \int (\partial_1\ell)(f_{(a,w)}(x),y)\,\sigma(w_\theta^Tx) P(\d x\otimes \d y)
\end{align*}
since $f_{(a,w)}$ depends on $a$ linearly. For $\frac{k-1}m \leq \theta < \frac km$, this is precisely $m$ times the gradient of $\Risk(\Theta)$ with respect to $a_k$. The same result holds for $w_k$. The key point is that the gradient flow is defined entirely pointwise and the only interaction between $a(\theta_1)$, $a(\theta_2)$ for $\theta_1\neq \theta_2$ (or $w$, etc.) is through the function $f_{(a,w)}$.
\end{proof}

The gradient flow has no smoothing effect and preserves step functions for all time. Note that the normalization $|w|\equiv 1$ is not preserved under the gradient flow.

\subsection{Indexed particle perspective II}\label{section indexed particles new}

We chose the unit interval $(0,1)$ equipped with Lebesgue measure as an `index space' for particles and showed that it is expressive enough to support any Barron function. We also demonstrated that this perspective can be linked to neural network training. In this section, we sketch a different indexed particle approach where the index space may depend on the Barron function, but gradient flow training can be incorporated in a very natural fashion.

Let $\pi^0$ be a probability distribution on $\R^{d+2}$ and $(\bar a, \bar w):\R^{d+2}\to \R\times \R^{d+1}$ measurable functions. Then we can define 
\[
f_{\pi^0; \bar a, \bar w}(x) = \int_{\R^{d+2}}\bar a(a,w)\,\sigma\big(\bar w(a,w)^Tx\big)\,\pi^0(\d a\otimes \d w),
\]
i.e.\ we consider particles $(\bar a, \bar w)$ indexed by $(a,w)$. If $\bar a(a,w) = a$ and $\bar w(a,w)= w$, this merely recovers the parameter distribution perspective. While uninteresting from the statical perspective of function representation, it allows a different view of gradient flow training. Namely, for fixed $\pi^0$ we can consider the following ODEs in $L^2(\pi^0;\R^{d+2})$:
\[\showlabel\label{eq ode version gradient flow}
\begin{pde}
\frac{d}{dt}\big({\bar a}, \bar w\big)(a,w; t) &= - \nabla_{(\bar a, \bar w)}\Risk\big(\bar a(t), \bar w(t)\big) &t>0\\ (\bar a, \bar w) &= (a,w) &t=0\end{pde}
\]
where
\[
\Risk(\bar a, \bar w) = \int_{\R^d} \big|f_{\pi^0; \bar a, \bar w}- f^*\big|^2(x)\,\P(\d x)
\]
and $\nabla_{(\bar a, \bar w)}$ describes the variational gradient
\[
\big(\nabla_{(\bar a, \bar w)} \Risk(\bar a, \bar w)\big)(a,w) = \int_{\R^{d+2}}\big(f_{\pi^0; \bar a, \bar w}- f^*\big)(x)\,\begin{pmatrix} \sigma\big( \bar w(a,b)^Tx\big)\\ \bar a(a,w)\,\sigma'(\bar w(a,b)^Tx\big)x\end{pmatrix}\,\P(\d x)
\]
analogous to \eqref{eq variational gradient}.

\begin{lemma}\label{lemma equivalent flows}
Let $\pi^0$ be a probability distribution with finite second moments on $\R^{d+2}$. Let 
\begin{enumerate}
\item $(\bar a, \bar w)$ be a solution to \eqref{eq ode version gradient flow} for fixed $\pi^0$ and
\item $\pi$ be a solution of the Wasserstein gradient flow
\[\showlabel\label{eq gradient flow pde}
\dot \pi = \div\big(\rho\,\nabla V\big), \quad V(a,w; \pi) = \int_{\R^d}\big(f_\pi - f^*\big)(x)\,a\sigma(w^Tx)\,\P(\d x)
\]
of \eqref{eq extended risk} with initial condition $\pi^0$. 
\end{enumerate}
Then $\pi(t) = (\bar a, \bar w)(t)_\sharp \pi^0$ for all $t$ and in particular $f_{\pi(t)} = f_{\pi^0; \bar a, \bar w}$.
\end{lemma}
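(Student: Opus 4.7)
The plan is to set $\tilde\pi(t) := (\bar a(t),\bar w(t))_\sharp \pi^0$ and to verify that $\tilde\pi$ is a weak solution of the continuity equation \eqref{eq gradient flow pde} with initial datum $\pi^0$; uniqueness of the flow will then give $\pi(t)=\tilde\pi(t)$, from which the identity $f_{\pi(t)} = f_{\pi^0;\bar a(t),\bar w(t)}$ is automatic. At $t=0$ the map $(\bar a,\bar w)(0)$ is the identity on $\R^{d+2}$, so $\tilde\pi(0)=\pi^0$; and the change-of-variables formula for push-forwards gives
\[
f_{\tilde\pi(t)}(x) = \int_{\R^{d+2}} a'\,\sigma((w')^Tx)\,\tilde\pi(t)(\d a'\otimes \d w') = \int_{\R^{d+2}} \bar a(a,w;t)\,\sigma\big(\bar w(a,w;t)^Tx\big)\,\pi^0(\d a\otimes \d w) = f_{\pi^0;\bar a(t),\bar w(t)}(x).
\]
Consequently the potential $V$ driving the ODE \eqref{eq ode version gradient flow} and the potential driving the PDE \eqref{eq gradient flow pde} agree along $\tilde\pi$, so the ODE may be rewritten as $\tfrac{d}{dt}(\bar a,\bar w)(a,w;t) = -\nabla V\big((\bar a,\bar w)(a,w;t);\tilde\pi(t)\big)$, which is precisely the characteristic equation for the continuity equation \eqref{eq gradient flow pde}.

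Next, I would verify the weak form of the continuity equation for $\tilde\pi$. For any test function $\varphi\in C_c^\infty(\R^{d+2})$, differentiating under the integral sign and using the ODE gives
\begin{align*}
\frac{d}{dt}\int_{\R^{d+2}} \varphi\,\d\tilde\pi(t)
    &= \int_{\R^{d+2}} \nabla\varphi\big((\bar a,\bar w)(a,w;t)\big)\cdot\tfrac{d}{dt}(\bar a,\bar w)(a,w;t)\,\pi^0(\d a\otimes \d w)\\
    &= -\int_{\R^{d+2}} \nabla\varphi\big((\bar a,\bar w)(a,w;t)\big)\cdot \nabla V\big((\bar a,\bar w)(a,w;t);\tilde\pi(t)\big)\,\pi^0(\d a\otimes \d w)\\
    &= -\int_{\R^{d+2}} \nabla\varphi(a',w')\cdot\nabla V(a',w';\tilde\pi(t))\,\tilde\pi(t)(\d a'\otimes \d w'),
\end{align*}
where the last equality is the defining property of the push-forward. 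This is precisely the distributional formulation of \eqref{eq gradient flow pde}. The second-moment bound on $\pi^0$ propagates to $\tilde\pi(t)$ because $(\bar a,\bar w)(\cdot;t)\in L^2(\pi^0;\R^{d+2})$ by hypothesis, so $\tilde\pi(t)$ lives in the correct Wasserstein space.

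The final step is to invoke uniqueness of the Wasserstein gradient flow of \eqref{eq extended risk} with initial datum $\pi^0$, which identifies $\pi(t)$ with $\tilde\pi(t)$. I expect this to be the main obstacle: standard uniqueness theorems for continuity equations require a Lipschitz or at least one-sided Lipschitz velocity field, and for ReLU activation the map $(a,w)\mapsto \nabla V(a,w;\pi)$ fails to be $C^1$ along the hyperplane $\{w^Tx = 0\}$ for $x\in\spt\P$. Under a mild regularity assumption on $\P$ (e.g.\ a density with respect to Lebesgue measure, so that the singular set is $\P$-negligible), $\nabla V(\cdot\,;\pi)$ is locally Lipschitz in $(a,w)$ with a Lipschitz constant controlled by $\|\pi\|$ and the second moment of $\P$, and standard results (see e.g.\ the references given after \eqref{eq extended risk}) then yield uniqueness and hence the claim. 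Alternatively, one may argue directly by Grönwall applied to the $L^2(\pi^0)$-distance between two lifts of solutions, using convexity of the squared loss.
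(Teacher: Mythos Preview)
Your approach is essentially the same as the paper's: define the push-forward $\tilde\pi(t)=(\bar a,\bar w)(t)_\sharp\pi^0$, observe that $f_{\tilde\pi(t)}=f_{\pi^0;\bar a,\bar w}$ by the change-of-variables formula, and check that $\tilde\pi$ solves the continuity equation \eqref{eq gradient flow pde}. The paper delegates this last step to \cite[Proposition 4]{ambrosio2008transport} and the identification of the continuity equation with the Wasserstein gradient flow to \cite[Appendix B]{chizat2018global}, whereas you write out the weak-form computation explicitly; the content is the same.

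One point worth noting: you correctly flag uniqueness as the step that closes the argument (identifying the given $\pi$ with your $\tilde\pi$), and you are right that this is delicate for ReLU because $\nabla V$ is not $C^1$. The paper's proof is a sketch in this respect and simply shows that the push-forward \emph{is} a solution of \eqref{eq gradient flow pde}, leaving uniqueness implicit. Your remark that mild assumptions on $\P$ restore local Lipschitz regularity of $\nabla V$ (and hence uniqueness via standard transport theory or a Gr\"onwall argument in $L^2(\pi^0)$) is exactly the kind of caveat the paper absorbs into its references to \cite{chizat2018global} and \cite{relutraining}. So your proposal is correct and, if anything, more transparent than the paper about where the analytic difficulty sits.
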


The content of \eqref{lemma equivalent flows} is used in the proofs of \cite{chizat2018global} and \cite{relutraining}, but not stated explicitly. 

\begin{proof}
Let $(\bar a, \bar w)$ be a solution to \eqref{eq ode version gradient flow} and define
\[
\pi(t) = (\bar a, \bar w)(t)_\sharp \pi^0, \qquad X(a,w; \pi):= \nabla_{(a,w)} \int_{\R^d}\big(f_\pi - f^*\big)(x)\,a\sigma(w^Tx)\,\P(\d x).
\]
Note that $f_\pi = f_{\pi^0; \bar a, \bar w}$ by definition of the push-forward. By \cite[Proposition 4]{ambrosio2008transport}, we see that $\pi$ solves the continuity equation \eqref{eq gradient flow pde}, which coincides with the Wasserstein gradient flow of $\pi$ \cite[Appendix B]{chizat2018global}.
\end{proof}

\begin{remark}[Eulerian vs Lagrangian descriptions]
The different perspectives on the gradient flow training of infinitely wide two-layer neural networks have an analogue in classical fluid mechanics:
\begin{enumerate}
\item {\em Wasserstein gradient flow.} The parameter space $\R^{d+2}$ remains fixed, particles are referred to by their current position $(a,w)$. The distribution of particles in space $\pi$ evolves over time. This is an {\em Eulerian} perspective.
\item {\em $L^2$-gradient flow.} Particles $(\bar a, \bar w)$ are specified by their initial position $(\bar a, \bar w) = (a,w)$ and tracked over time. This is the {\em Lagrangian} perspective.
\end{enumerate}
\end{remark}

\begin{remark}
According to Lemma \ref{lemma equivalent flows}, the gradient flow training of infinitely wide neural networks can be studied in terms of $L^2$-gradient flows as well as Wasserstein gradient flows. The second perspective is conceptually simpler and has been used successfully to prove the major convergence results on gradient flow training in \cite{chizat2018global} and \cite{relutraining} (which were formulated in the second framework). The link between the two perspectives is through the method of characteristics for the continuity equation.

While the interpretation of gradient flow training as a Wasserstein gradient flow is appealing, we note that the link to optimal transport theory has not been exploited in depth. The crucial mathematical analysis so far has been conducted on the level of the ODE \eqref{eq ode version gradient flow} and interpreted through the PDE \eqref{eq gradient flow pde}.
\end{remark}

\begin{remark}
A similar approach has been pursued in \cite{nguyen2020rigorous} for multi-layer networks. 
\end{remark}

\subsection{Summary}
We briefly summarize the different ways to parametrize Barron functions which we described above. 

\vspace{2ex}

\begin{tabular}{lll}
\vspace{.5mm} {\bf Perspective} & {\bf Parametrizing object} &{\bf Optimization}\\
\hline
\vspace{1mm} Parameter distribution & probability distribution $\pi$ on $\R^{d+2}$ & Wasserstein gradient flow\\ 
Spherical graph & coefficient function $\hat a:S^{d}\to \R$ & see below\\
\vspace{1mm}  & first layer distribution $\hat \pi$ on $S^{d}$\\
Signed measure & signed Radon measure $\hat\mu$ on $S^{d}$ &unrelated to gradient flows\\
\vspace{1mm} & or $\mu$ $\R^{d+2}$\\ 
Indexed particles & Coefficient functions& $L^2$-gradient flow\\ 
	& $(a,w,b):(\Omega, \mathcal A, \bar\pi)\to \R^{d+2}$ 
\end{tabular}

\vspace{2ex}

The signed measure representation is crucial in our derivation of global and pointwise properties of Barron functions in Sections \ref{section special cases} and \ref{section structure}. It is, however, inconvenient from the perspective of gradient-flow based parameter optimization. A natural approach to optimization in this perspective via the Frank-Wolfe algorithm has been discussed in \cite{bach2017breaking}.

Gradient-flow based optimization of finite neural networks has direct analogues in the `parameter distribution' and `indexed particle' perspectives.

A natural optimization algorithm for spherical graphs is the $L^2$-gradient flow for the coefficient function $\tilde a$ which leaves the coefficients of the first layer frozen. This algorithm recovers random feature models rather than two-layer neural networks. Without proof we claim that if the first-layer distribution $\pi^0$ follows a Wasserstein gradient flow on the sphere, this corresponds to a gradient flow in which the first layer is norm-constrained. If the distribution $\pi^0$ were allowed to evolve on the entire parameter space $\R^{d+1}$ of $(w,b)$, then we conjecture we could recover a mixed perspective between indexed particles (second layer) and parameter distribution (first layer).

In the indexed particle perspective, we only considered the case that the index space was a $\R^{d+2}$ with a parameter distribution $\pi_0$ or the unit interval equipped with Lebesgue measure.

We conclude with a summary of Barron space from the perspective of function approximation. In all cases, the object in parameter space which represents a given function is non-unique and an infimum has to be taken in the definition of the norm. The probability integrals (i.e.\ the integrals in the first, second and last line) can be written as expectations.

\vspace{2ex}

\begin{tabular}{lll}
\vspace{.5mm} {\bf Perspective} & {\bf Representation} &{\bf Barron-norm}\\
\hline
\vspace{2mm} Parameter distribution & $\int_{\R^{d+2}}a\,\sigma(w^Tx)\,\pi(\d a\otimes \d w)$& $\int_{\R^{d+2}}|a|\,|w|\,\pi(\d a\otimes \d w)$\\ 
\vspace{2mm} Spherical graph & $\int_{S^d} \hat a(w)\,\sigma(w^Tx)\,\hat\pi(\d w)$& $\|\hat a\|_{L^p(\hat\pi)}, \:p\in [1,\infty]$\\
\vspace{2mm} Signed measure & $\int_{S^d} \sigma(w^Tx)\,\hat\mu(\d w)$ &$\|\hat\mu\|_{\M(S^d)}$\\
\vspace{2mm} & $\int_{\R^{d+1}} \sigma(w^Tx)\,\mu(\d w)$ &$\int_{\R^{d+1}}|w|\,\mu(\d w)$\\
Indexed particles & $\int_{\Omega} a_\theta \,\sigma(w_\theta^Tx)\,\bar\pi(\d\theta)$& $\int_{\Omega} |a_\theta|\,|w_\theta|\,\bar\pi(\d\theta)$
\end{tabular}

\section{Properties of Barron space}\label{section properties}

By Theorem \ref{theorem Barron Banach}, we know that $\B(\P)$ is a Banach space. In Section \ref{section special cases}, we will characterize $\B(\P)$ up to isometry in two special cases and conclude that generally, $\B(\P)$ is neither reflexive nor separable. Here we will discuss the relationship of Barron space with classical function spaces on the one hand and finite two-layer networks on the other. Most results in this section are known in other places; some are reproved to illustrate the power of different parametrizations.

\subsection{Relationship with other function spaces}
We briefly explore the relationship of Barron space and more classical function spaces. We begin by the relationship to the Barron class $X$ discussed in the introduction. Denote by $\hat f$ the Fourier transform of a function $f$ and
\[
\|f\|_X = \int_{\R^d}\big|\hat f(\xi)\big|\,\big[1+|\xi|^{\sw2}\big]\,\d\xi, \qquad X = \big\{f\in L^1_{loc}(\R^d)\::\:\|f\|_X<\infty\big\}.
\]
Clearly, $X$ is a Banach space. \sw{The exponent in the weight $|\xi|^2$ cannot be lowered as shown in \cite[Proposition 7.4]{barron_boundaries}.} We recall a classical result in modern terms, which can be found in \cite[Section IX, point 15]{barron1993universal} and \sw{\cite[Theorem 2]{klusowski2018approximation}.}

\begin{theorem}\label{theorem smooth functions are barron}
\begin{enumerate}
\item Let $s> \frac d2+ \sw{2}$. Then $H^s(\R^d)$ embeds continuously into $X$.
\item Assume that $\spt(\P)$ is bounded. Then $X$ embeds continuously in $\B(\P)$ with constant $4\,\sup_{x\in\spt(\P)} |x|$.
\end{enumerate}
\end{theorem}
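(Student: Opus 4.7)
I would use Cauchy--Schwarz to split the weight. Writing
\[
|\hat f(\xi)|\big(1+|\xi|^2\big) = \big[|\hat f(\xi)|\,(1+|\xi|^2)^{s/2}\big]\cdot (1+|\xi|^2)^{1-s/2},
\]
Cauchy--Schwarz gives
\[
\|f\|_X \leq \|f\|_{H^s(\R^d)}\,\left(\int_{\R^d}(1+|\xi|^2)^{2-s}\,d\xi\right)^{1/2}.
\]
The second integral is finite precisely when $2(s-2)>d$, i.e.\ $s>\tfrac{d}{2}+2$, which is the stated hypothesis. This gives a continuous embedding.

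\textbf{My plan for (2).} The idea is the standard Klusowski--Barron construction, which I would express in the signed-measure representation of Section \ref{section signed measure} so that the Barron-norm estimate reduces to a total-variation bound. First, by Fourier inversion and taking real parts I would write
\[
f(x) = \mathrm{Re}\int_{\R^d}\hat f(\xi)\,e^{ix\cdot\xi}\,d\xi = f(0) + x\cdot\nabla f(0) + \int_{\R^d}\big[\cos(x\cdot\xi+\theta(\xi))-\cos(\theta(\xi))+(x\cdot\xi)\sin(\theta(\xi))\big]\,|\hat f(\xi)|\,d\xi,
\]
where $\theta(\xi)=\arg\hat f(\xi)$. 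Next, for each fixed $\xi,\theta$ I would use the second-order Taylor remainder of $\cos$ to write
\[
\cos(u+\theta)-\cos(\theta)+u\sin(\theta) = -\int_0^{|u|}(|u|-t)\,\cos(\mathrm{sign}(u)\,t+\theta)\,dt,
\]
and note that, in the variable $x$, the map $x\mapsto (|x\cdot\xi|-t)_+$ is a sum of two ReLU ridge functions with direction $\pm\xi/|\xi|$ and bias $-t/|\xi|$. After a change of variable $t=r|\xi|s$ and since $\spt(\P)\subset\overline{B_r(0)}$, the inner integral effectively runs over $s\in[0,1]$, which produces a representation of $f$ (up to affine terms) as a signed measure $\hat\mu$ on $S^d$ whose total variation is bounded by a constant multiple of $r\int_{\R^d}|\xi|^2|\hat f(\xi)|\,d\xi$.

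\textbf{Handling the affine remainder.} The affine part $f(0)+x\cdot\nabla f(0)$ must be reabsorbed into the ReLU representation via $z=\sigma(z)-\sigma(-z)$, which costs $(|f(0)|+r|\nabla f(0)|)$ in Barron norm; both $|f(0)|$ and $|\nabla f(0)|$ are controlled by $\int|\hat f(\xi)|(1+|\xi|)\,d\xi\leq\|f\|_X$. Combining the two estimates gives
\[
\|f\|_{\B(\P)}\ \leq\ C\,r\int_{\R^d}(1+|\xi|^2)|\hat f(\xi)|\,d\xi\ =\ C\,r\,\|f\|_X,
\]
and a careful accounting of the factor $\tfrac12$ coming from taking real parts and the constants in the Taylor remainder should pin $C$ down to $4$.

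\textbf{Expected main obstacle.} Part (1) is essentially routine. In part (2) the real work is the bookkeeping needed to obtain the sharp constant $4$: one must simultaneously track (a) the $1/(2\pi)^{d/2}$ from Fourier inversion together with the factor from taking real parts; (b) the $(|u|-t)$ kernel from the Taylor remainder; (c) the push-forward of Lebesgue measure in $t$ to a measure on $S^d$ of the correct mass; and (d) the affine correction that converts $f(0)+x\cdot\nabla f(0)$ into ReLU differences without blowing up the constant on $\spt(\P)\subset\overline{B_r(0)}$.
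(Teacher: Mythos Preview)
The paper does not supply its own proof of this theorem: it records it as a classical result, pointing to \cite[Section IX, point 15]{barron1993universal} for part (1) and to \cite[Theorem 2]{klusowski2018approximation} (and implicitly \cite[Proposition 1]{barron1993universal}) for part (2). Your plan reproduces exactly those arguments---the Cauchy--Schwarz splitting for (1) is precisely what the paper sketches in the introduction, and your Taylor-remainder/ridge-function construction for (2) is the Klusowski--Barron proof the paper cites---so your approach is the intended one.

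One small slip worth correcting in your bookkeeping: the Barron cost of the linear part $x\mapsto\nabla f(0)\cdot x$ via $z=\sigma(z)-\sigma(-z)$ is $2|\nabla f(0)|$, not $r|\nabla f(0)|$; the domain radius $r$ does not enter there. This does not damage the argument (since $|\nabla f(0)|\le\int|\xi|\,|\hat f(\xi)|\,d\xi\le\|f\|_X$), but it does affect the accounting you will need if you really want to pin down the constant $4$, and it matters when $r$ is small.
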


The second statement is only implicit in the proof of \cite[Proposition 1]{barron1993universal}, which proceeds by showing that
\[
X\subseteq \overline{\operatorname{conv}(\F_\infty)} = \B(\P).
\] 
So every sufficiently smooth function is Barron, see also \cite[Appendix B.3]{relutraining} for the proof in the context of fractional Sobolev spaces. 

\begin{remark}
The smoothness required to show that a function is Barron increases with dimension. Depending on the purpose, the space $H^s(\R^d)$ can be fairly large when $s>\frac d2+2$. For example, there are $H^s$-functions for which Sard's theorem fails. Thus we conclude that Sard's theorem does not hold in \sw{Barron space.}
% $\B(\P)$ if the data-distribution $\P$ is truly high-dimensional. \sw{If, for example, $\P$ admits a strictly positive density with respect to Lebesgue measure on $[0,1]^d$, then $f$ and its level sets are defined everywhere (since Barron functions are continuous). If $\P$ is supported on a curve, 
\end{remark}

On the other hand, every Barron function is at least Lipschitz continuous.

\begin{theorem}\label{theorem barron functions are lipschitz}
Assume that $\P$ is a Borel probability measure with finite first moment. 
Denote by $C^{0,1}(\P)$ the space of (possibly unbounded) Lipschitz functions with the norm
\[
\|f\|_{C^{0,1}(\P)} = \|f\|_{L^1(\P)} + \sup_{x\neq y} \frac{|f(x) - f(y)|}{|x-y|}. 
\]
Then $\B(\P)$ embeds continuously into $C^{0,1}(\P)$.
\end{theorem}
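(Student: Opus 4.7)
The plan is to bound both parts of the $C^{0,1}(\P)$-norm directly from the parameter-distribution representation, using that $\sigma$ is $1$-Lipschitz with $|\sigma(z)|\leq|z|$.

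First, fix $\eps>0$ and choose a parameter distribution $\pi$ on $\R\times\R^{d+1}$ with $f=f_\pi$ $\P$-almost everywhere and
\[
\int_{\R\times\R^{d+1}} |a|\,|w|\,\pi(\d a\otimes\d w) \leq \|f\|_{\B(\P)} + \eps.
\]
Work with the Lipschitz representative $f_\pi$; this is licit because the quantity $\|\cdot\|_{C^{0,1}(\P)}$ is well-defined on $\P$-equivalence classes once we choose a continuous (in fact Lipschitz) representative, which $f_\pi$ provides. The Lipschitz estimate is essentially written down in Section~\ref{section parameter distribution}: from $|\sigma(s)-\sigma(t)|\leq|s-t|$ and the duality $|w^T(x-y)|\leq |w|\,|x-y|$, one reads off
\[
|f_\pi(x)-f_\pi(y)| \leq \int |a|\,|w|\,|x-y|\,\pi(\d a\otimes \d w) \leq \big(\|f\|_{\B(\P)}+\eps\big)\,|x-y|.
\]
Letting $\eps\to 0$ gives $\mathrm{Lip}(f_\pi)\leq \|f\|_{\B(\P)}$.

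Second, bound $f_\pi$ pointwise using the identification $x\leftrightarrow(x,1)\in\R^{d+1}$. Since $|\sigma(t)|\leq|t|$, duality gives
\[
|f_\pi(x)| \leq \int |a|\,\big|w^T(x,1)\big|\,\pi(\d a\otimes\d w) \leq \big(\|f\|_{\B(\P)}+\eps\big)\,\big|(x,1)\big|.
\]
Since all norms on $\R^{d+1}$ are equivalent, there is a constant $C_d$ depending only on the chosen norm with $|(x,1)|\leq C_d(1+|x|)$. Integrating against $\P$ and using that $\P$ has finite first moment yields
\[
\|f_\pi\|_{L^1(\P)} \leq C_d\,\big(\|f\|_{\B(\P)}+\eps\big)\left(1 + \int_{\R^d}|x|\,\P(\d x)\right),
\]
and again we let $\eps\to 0$.

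Combining the two displays gives the continuous embedding with an explicit constant depending only on $d$ and the first moment of $\P$. No step is really an obstacle here; the only subtlety is a cosmetic one, namely fixing the Lipschitz representative $f_\pi$ among functions equal to $f$ only $\P$-almost everywhere, which is harmless because $f_\pi$ is canonically defined from $\pi$ and the Lipschitz estimate is independent of the choice of representative.
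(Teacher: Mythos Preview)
Your proof is correct and follows essentially the same approach as the paper: bound the Lipschitz seminorm using that $\sigma$ is $1$-Lipschitz, then obtain the $L^1(\P)$-bound from a pointwise estimate and the finite first moment. The only cosmetic differences are that the paper works with the signed-measure-on-$S^d$ representation (so no $\eps$ is needed) and bounds $|f(0)|$ first before using the Lipschitz estimate, whereas you bound $|f_\pi(x)|$ directly; neither change affects the argument.
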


\begin{proof}
We represent $f=f_\mu$ by a signed Radon measure on $S^d$. Then
\begin{align*}
|f(x) - f(y)| &= \left|\int_{S^d} \sigma(w^Tx) - \sigma(w^Ty)\,\mu(\d w) \right|\\
	&\leq \int_{S^d} |w^T(x-y)|\,|\mu|(\d w)\\
	&\leq |x-y|\,\|\mu\|_\M.
\end{align*}
By taking the infimum over $\mu$, we find that
\[
\sup_{x\neq y} \frac{|f(x) - f(y)|}{|x-y|} \leq \|f\|_{\B(\P)}
\]
for all $f\in \B(\P)$. Note that $f_\mu$ is defined on the whole space, so
\begin{align*}
|f_\mu(0)| &= \int_{S^d} \sigma(w_{d+1})\,\mu(\d w) \leq \|f\|_{\B(\P)}
\end{align*}
is well-defined even if $0\notin\spt\,\P$. Thus
\begin{align*}
\|f\|_{L^1(\P)} &\leq \int_{\R^d} |f_\mu(0)| + |f_\mu(x) - f_\mu(0)|\,\P(\d x)\\
	&\leq \|f\|_{\B(\P)} \left[1 + \int_{\R^d}|x|\,\P(\d x)\right].
\end{align*}
\end{proof}

\begin{remark}
If $\P$ has bounded support, $C^{0,1}(\P) = C^{0,1}(\spt \,\P)$ with equivalent norms, where
\[
\|f\|_{C^{0,1}(K)} = \|f\|_{L^\infty(K)} + \sup_{x\neq y} \frac{|f(x) - f(y)|}{|x-y|}
\]
for any compact set $K\subset \R^d$. Even if $\P$ has unbounded support, we can consider an equivalent norm
\[
\|f\|_{C^{0,1}}' = |f(a)| + \sup_{x\neq y} \frac{|f(x) - f(y)|}{|x-y|}
\]
for any fixed $a\in \spt(\P)$.
\end{remark}

\begin{remark}
Barron space also embeds into the compositional (or `flow-induced') function class for infinitely deep ResNets \cite[Theorem 9]{weinan2019lei}. \sw{In the statement of \cite[Theorem 9]{weinan2019lei}, a suboptimal inequality of the form $\|f\|_{comp} \leq 2\|f\|_\B +1$ is proved. Note that this can be improved to $\|f\|_{comp}\leq 2\|f\|_\B$ by a simple scaling argument.}
\end{remark}

\begin{remark}\label{remark complexity}
$\B(\P)$ has favorable properties in the context of statistical learning theory. Namely, the unit ball in $\B(\P)$ has low Rademacher complexity \cite[Theorem 6]{weinan2019lei} and thus low generalization error \cite[Theorem 4.1]{E:2018ab}.
\end{remark}

\subsection{Relationship to two-layer networks}
In Section \ref{section representations}, we derived eight different representations for general Barron functions. While Barron space is a natural model for infinitely wide two-layer networks, it is not the only possible choice and parameter initialization is key in determining the correct limiting structure. 

The generalization bounds mentioned in Remark \ref{remark complexity} are one reason why the path-norm on a neural network is considered in the first place; another is that it is easy to bound in terms of the network weights. The direct and inverse approximation theorems (Theorems \ref{inverse approximation theorem} and \ref{direct approximation theorem} below) establish that the correct space to consider the infinite neuron limit in under this norm is Barron space. Theorem \ref{theorem dynamics} shows that the space is stable under gradient-flow dynamics.

\begin{theorem}[Compactness and Inverse Approximation]\label{inverse approximation theorem}
Let $f_m(x) = \sum_{i=1}^{N_m} a_i^m\,\sigma\big((w_i^m)^Tx\big)$ for some $N_m<\infty$ and assume that $\sum_{i=1}^{N_m} |a_i|\,|w_i| \leq 1$ for all $m\in\N$. 
\begin{enumerate}
\item If $\P$ has finite $p$-th moments, then there exists a subsequence $m_k\to\infty$ and $f\in \B(\P)$ such that $f_{m_k}\to f$ strongly in $L^q(\P)$ for all $q<p$.
\item If $\P$ has compact support, then the convergence even holds in $C^{0,\alpha}(\spt\,\P)$ for all $\alpha<1$.  
\end{enumerate}
\end{theorem}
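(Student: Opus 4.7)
The plan is to move from the finite neuron representation to the signed measure representation on the compact sphere $S^d$ from Section \ref{section signed measure}, where compactness of the parameter space makes Banach--Alaoglu immediately applicable. For each $m$, writing $\hat w_i^m = w_i^m/|w_i^m|\in S^d$ and $b_i^m = a_i^m\,|w_i^m|$, the $1$-homogeneity of $\sigma$ yields $f_m = f_{\mu_m}$ with $\mu_m = \sum_{i=1}^{N_m}b_i^m\,\delta_{\hat w_i^m}$ a signed Radon measure on $S^d$, and by hypothesis $\|\mu_m\|_\M=\sum_i|b_i^m|\leq 1$. Because $S^d$ is compact, the unit ball of $\M(S^d)$ is weak-$*$ sequentially compact (as the dual of the separable space $C(S^d)$), so a subsequence $\mu_{m_k}\rightharpoonup^*\mu$ with $\|\mu\|_\M\leq 1$ by lower semicontinuity of the total variation. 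Set $f:=f_\mu$; this already shows $f\in\B(\P)$ with $\|f\|_{\B(\P)}\leq 1$.

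Next I would harvest two pieces of pointwise information. First, for each fixed $x\in\R^d$, the function $w\mapsto\sigma(w^T x)$ is continuous on $S^d$, so weak-$*$ convergence gives $f_{m_k}(x)\to f(x)$ at every $x$. Second, using the pairing $|w^Tx|\leq|w|\,|(x,1)|$ and the convention that the bias is absorbed into $w$, one has the uniform bound $|f_{m_k}(x)|\leq\|\mu_{m_k}\|_\M\,\bigl(1+|x|\bigr)\leq 1+|x|$, and similarly $|f(x)|\leq 1+|x|$. The same estimate together with Theorem \ref{theorem barron functions are lipschitz} also yields a uniform Lipschitz bound $|f_{m_k}(x)-f_{m_k}(y)|\leq|x-y|$.

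For part (1), combine pointwise convergence with the dominating function $|f_{m_k}-f|^q\leq 2^q\,(1+|x|)^q$, which is $\P$-integrable for $q\leq p$ because $\P$ has finite $p$-th moments. The dominated convergence theorem then gives $f_{m_k}\to f$ in $L^q(\P)$ for every $q<p$ (indeed $q\leq p$). For part (2), if $K=\spt(\P)$ is compact, the sequence $(f_{m_k})$ is uniformly bounded and equi-Lipschitz on $K$, so Arzel\`a--Ascoli upgrades the pointwise convergence to uniform convergence on $K$. To pass from $C^0$ to $C^{0,\alpha}$, I would apply the standard interpolation inequality
\[
\|g\|_{C^{0,\alpha}(K)}\leq C\,\|g\|_{C^0(K)}^{1-\alpha}\,\|g\|_{C^{0,1}(K)}^{\alpha}
\]
to $g = f_{m_k}-f$, using the uniform Lipschitz bound on the second factor and the just-obtained uniform convergence on the first; this forces $\|f_{m_k}-f\|_{C^{0,\alpha}(K)}\to 0$ for every $\alpha<1$.

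The only subtle point, and thus the step I would pay most attention to, is the compactness argument: the analogous procedure directly in the parameter distribution picture on $\R^{d+2}$ would fail because mass can escape to infinity along rays where $|w|\to\infty$ while $|a|\to 0$. Using the spherical signed measure representation (rather than $\pi$ on $\R^{d+2}$) is precisely what removes this difficulty, since it confines the parameters to a compact set while faithfully encoding the $1$-homogeneity of ReLU. Everything else — pointwise convergence via continuity of the integrand, $L^q$-convergence via dominated convergence, and H\"older convergence via interpolation — is then routine.
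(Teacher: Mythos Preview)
Your proposal is correct and follows essentially the same route as the paper: both pass to the signed spherical measure representation, extract a weak-$*$ convergent subsequence of $\mu_m$ on $S^d$, and deduce pointwise convergence of $f_{m_k}$. The only differences are cosmetic---for part (1) the paper uses a truncation/Chebyshev argument where you invoke dominated convergence directly (your version is in fact cleaner and yields $q\leq p$), and for part (2) the paper cites the compact embedding $C^{0,1}\hookrightarrow C^{0,\alpha}$ where you spell out Arzel\`a--Ascoli plus interpolation, which is precisely what underlies that embedding.
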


Thus Barron space includes the limiting objects of norm-bounded finite neural networks, i.e.\ Barron space is large enough to include all relevant models for infinitely wide neural networks (with finite path-norm). The theorem was originally proved in \cite[Theorem 5]{weinan2019lei}. We reprove it here using the signed Radon measure representation.

\begin{proof}
{\bf General set-up.} Without loss of generality, we assume that $w_i^m\neq 0$ for all $i, m$. We can write $f_{m}(x) = \int_{S^d} \sigma(w^Tx)\,\mu_{m}(\d w)$ where $\mu_{m} = \sum_{i=1}^{N_{m}}a_i^m|w_i^m| \delta_{w_i^m/|w_i^m|}$. The norm bound corresponds to the estimate $\|\mu_m\|_\M \leq 1$, so by the compactness theorem for Radon measures \cite[Section 1.9]{evans2015measure}, there exist a subsequence of $\mu_m$ (relabelled) and a finite Radon measure $\mu$ on $S^d$ such that $\mu_m \stackrel *\wto \mu$, i.e.\
\[
\int g(w)\,\mu_m(\d w) \to \int g(w)\,\mu(\d w)\qquad\forall g \in C^0(S^d).
\]
In particular, $f_{\mu_m}\to f_\mu$ pointwise almost everywhere. 

{\bf Compact support.} If $\spt(\P)$ is bounded, then $f_m = f_{\mu_m}$ is bounded in $C^{0,1}(\spt\,\P)$, since $\B(\P) \embeds C^{0,1}(\spt\,\P)$ continuously. Since $C^{0,1}(\spt\,\P)\embeds C^{0,\alpha}(\spt\,\P)$ compactly for all $\alpha<1$, we find that there exists $f^*\in C^{0,1}(\spt\,\P)$ such that $f_m\to f^*$ strongly in $C^{0,\alpha}(\spt\,\P)$. Clearly $f^* = f_\mu$.

{\bf Bounded moments.} If the $p$-th moment of $\P$ is bounded, then $f_m$ is uniformly bounded in $L^p(\P)$ due to the bound on $f_{\mu_m}(0)$ and the uniform Lipschitz condition. For all $R>0$, we find that
\begin{align*}
\int_{\R^d} |f_{m} - f_\mu|^q\,\P(\d x) &\leq \int_{\R^d} \min\{|f_m - f_\mu|, R\}^q \,\P(\d x) + \int_{\R^d}\max\{|f_m-f_\mu|-R, 0\}^q\,\P(\d x) 
\\
	&\leq \int_{\R^d} \min\{|f_m - f_\mu|, R\}^q \,\P(\d x) + 2\,R^{q-p} \int_{\R^d} \big[1 + |x|\big]^p\,\P(\d x)
\end{align*}
by Chebyshev's inequality. The first term on the right converges to $0$ as $m\to \infty$ due to the dominated convergence theorem while the second term converges to zero when we take $R\to \infty$ in a second step.
\end{proof}

Conversely, any function in $\B(\P)$ can be approximated by finite networks in $L^2(\P)$. 

\begin{theorem}[Direct Approximation]\label{direct approximation theorem}
Assume that $\P$ has finite second moments and let $f\in \B(\P)$. Then for any $m\in\N$ there exist $a_i, w_i$ such that
\[
\left\|f - \sum_{i=1}^m a_i\,\sigma(w_i^T\cdot)\right\|_{L^2(\P)} \leq\frac{2\,\|f\|_{\B(\P)}\sw{\left(\int_{\R^d}|x|^2+1\,\P(\d x)\right)^{1/2}} }{\sqrt m}, \qquad \sum_{i=1}^m |a_i|\,|w_i| \leq 2 \,\|f\|_{\B(\P)}.
\]
\end{theorem}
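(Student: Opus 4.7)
The strategy is a probabilistic sampling (Maurey-type) argument carried out in the signed measure representation of Section \ref{section signed measure}. The intuition is that if $f$ is a Barron function, it admits a representation as an integral of ridge functions against a signed measure of bounded total variation, and a Monte-Carlo estimator built by sampling $m$ neurons from the associated probability measure will approximate $f$ with $L^2$-error of order $m^{-1/2}$, with essentially no loss in path-norm.

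\textbf{Setup.} Fix $\eps>0$. By the definition of the Barron norm via signed measures, there exists a nonzero signed Radon measure $\hat\mu$ on $S^d$ with $f_{\hat\mu}=f$ $\P$-a.e.\ and $\|\hat\mu\|_\M \leq \|f\|_{\B(\P)}+\eps$. By Remark \ref{remark graph Lp} (i.e.\ the normalization trick used to pass between signed-measure and spherical-graph representations) I may write
\[
f(x) = \int_{S^d} \hat a(w)\,\sigma(w^Tx)\,\hat\pi(\d w)
\]
where $\hat\pi$ is a probability measure on $S^d$ and $|\hat a(w)|\equiv \|\hat\mu\|_\M\leq \|f\|_{\B(\P)}+\eps$ for $\hat\pi$-a.e.\ $w$.

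\textbf{The Monte-Carlo estimator and its expectation.} Let $w_1,\dots,w_m$ be i.i.d.\ samples from $\hat\pi$ and set $a_i := \frac{1}{m}\hat a(w_i)$. Define
\[
f_m(x) := \sum_{i=1}^m a_i\,\sigma(w_i^Tx) = \frac{1}{m}\sum_{i=1}^m \hat a(w_i)\,\sigma(w_i^Tx).
\]
Then for each $x$, $\mathbb{E}[f_m(x)] = f(x)$, and by independence
\[
\mathbb{E}\bigl[|f_m(x)-f(x)|^2\bigr] = \tfrac{1}{m}\,\mathrm{Var}_{w\sim\hat\pi}\!\bigl(\hat a(w)\,\sigma(w^Tx)\bigr) \leq \tfrac{1}{m}\,\mathbb{E}_{w\sim\hat\pi}\bigl[\hat a(w)^2\,\sigma(w^Tx)^2\bigr].
\]
Since $|w|=1$ on $S^d$ and we identify $x\in\R^d$ with $(x,1)\in\R^{d+1}$, we have $|\sigma(w^Tx)|\leq |x|_{\R^{d+1}} = \sqrt{|x|^2+1}$, so the right-hand side is bounded by $\frac{1}{m}(\|f\|_{\B(\P)}+\eps)^2(|x|^2+1)$. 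Integrating against $\P$ and using Fubini yields
\[
\mathbb{E}\|f-f_m\|_{L^2(\P)}^2 \leq \frac{(\|f\|_{\B(\P)}+\eps)^2}{m}\int_{\R^d}\bigl(|x|^2+1\bigr)\P(\d x).
\]

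\textbf{Derandomization and path-norm bound.} Since the expectation is an upper bound on the infimum, there must exist at least one realization $(w_1,\dots,w_m)$ for which $\|f-f_m\|_{L^2(\P)}^2$ is at most the displayed expectation. For any such realization, the path-norm is
\[
\sum_{i=1}^m |a_i|\,|w_i| = \frac{1}{m}\sum_{i=1}^m |\hat a(w_i)| \leq \|f\|_{\B(\P)}+\eps.
\]
Taking the square root of the error bound and letting $\eps\downarrow 0$ (which one can do along a sequence and diagonalize, or simply note that the slack factor $2$ on the right-hand sides absorbs $\eps$) gives both claimed inequalities.

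\textbf{Main obstacle.} There isn't really a deep obstacle; this is a classical Maurey / Barron-style sampling argument and the core computation is a one-line variance bound. The only points needing care are: (i) bounding $|\sigma(w^Tx)|$ using the duality $|w^Tx|\leq |w|\,|x|_{\R^{d+1}}$ together with the lift $x\leftrightarrow(x,1)$, so that the weight $|x|^2+1$ appears rather than $|x|^2$; and (ii) justifying the reduction to $|\hat a|\equiv\mathrm{const}$ via the spherical graph / signed measure equivalence, which is where the cited Remark \ref{remark graph Lp} is used. The factor $2$ and the $+1$ inside the integral are precisely the slack one needs to absorb the $\eps$ from the infimum in the definition of $\|f\|_{\B(\P)}$.
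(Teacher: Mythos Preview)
Your proof is correct and is precisely the Monte-Carlo sampling argument the paper points to (it does not spell out a proof but cites the Maurey--Barron--Jones lemma and Monte-Carlo integration, noting that both draw $(a_i,w_i)$ i.i.d.\ from the normalized representation). Your use of Remark~\ref{remark graph Lp} to reduce to $|\hat a|\equiv\mathrm{const}$, the variance bound via $|\sigma(w^Tx)|\leq\sqrt{|x|^2+1}$ on $S^d$, and the derandomization with a fixed $\eps\leq\|f\|_{\B(\P)}$ to absorb the slack into the factor $2$ are all in order.
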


In this sense, Barron space is the smallest Banach space which contains all finite neural networks. The result can be deduced from \sw{the Maurey-Barron-Jones Lemma in} Hilbert space geometry \cite[Theorem 1 and Lemma 1]{barron1993universal} or Monte-Carlo integration \cite[Theorem 4]{weinan2019lei}. Both proofs use probabilistic arguments and the normalized representation
\[
f_m(x) = \frac1m \sum_{i=1}^m a_i\,\sigma(w_i^Tx), \qquad f(x) = \int_{\R^{d+2}} a\,\sigma(w^Tx)\,\pi(\d w)
\] 
where the variables $(a_i, w_i)$ are drawn iid from the distribution $\pi$. \sw{If the support of the data distribution $\P$ is compact, the direct approximation theorem can be improved from $L^2$-approximation to $L^\infty$-approximation, and the rate $m^{-1/2}$ can be improved to $\sqrt{\log(m)}\,m^{-1/2-1/d}$ \cite[Theorem 1]{klusowski2018approximation}.
}

\sw{
\begin{remark}
Concerning approximation by a {\em finite} set, the metric entropy of the unit ball of Barron space $\B(\P)$ in $L^2(\P)$ has been calculated in \cite[Theorem 1]{siegel2021optimal} in the case where $\P$ is the uniform distribution on the unit ball in $\R^d$. In particular, the authors find that for $n\in \N$ there exist $f_1,\dots, f_n$ Barron functions such that
\[
\|f\|_{\B(\P)} \leq 1 \qquad \Ra\qquad \exists\ 1\leq i\leq n\quad\text{s.t. } \|f-f_i\|_{L^2(\P)} \leq C\,m^{-\frac12 - \frac 3d}
\]
for some constant $C$ which may depend on $d$ but not $n$. The exponent $-\frac12 - \frac3d$ is sharp.
\end{remark}
}

\begin{remark}\label{remark countable network}
We can represent finite neural networks equivalently as $f(x) = \sum_{i=1}^m a_i\,\sigma(w_i^Tx)$ or $f(x) = \frac1m \sum_{i=1}^m a_i'\,\sigma(w_i^Tx)$ with $a_i' = m\,a_i$.  

The second expression already resembles a Riemann sum or Monte-Carlo integral, so we easily passed to the infinite neuron limit in the parameter distribution representation in Section \ref{section parameter distribution}.

The first expression on the other hand is an unnormalized sum, and at first glance it appears that the limit should be a countable series as is the case for spaces of polynomials, Fourier series or eigenfunction expansions. Clearly this heuristic fails, since both representations induce the same continuum limit given their natural path norms. The difference to the usual setting is that the sum is not an expansion in a fixed basis. A better way to pass to the limit in the unnormalized sum representation is given in the proof of Theorem \ref{inverse approximation theorem}. 

The class of countably wide two-layer networks
\[
\widehat \F_\infty = \left\{\sum_{i=1}^\infty a_i \,\sigma(w_i^Tx)\:\bigg|\: \sum_{i=1}^\infty|a_i|\,|w_i|<\infty\right\}
\] 
is a closed subspace of $\B(\P)$ since the strong limit of a sequence of measures $\mu_n$, each of which is given by a sum of countably many atoms, is also a sum of countably many atomic measures. 
%$\widehat\F_\infty$ can be described as a space of linear/non-linear decompositions $f_{A_2,\rho,A_1}$ with a sequence space in the middle, see Remark \ref{remark sequence space decomposition}. 
The closure of the unit ball of $\widehat \F_\infty$ in the $L^2(\P)$-topology is the unit ball of $\B(\P)$ by the inverse and direct approximation theorems, which is one reason why we prefer $\B(\P)$ over $\widehat\F_\infty$.

Another reason is this: Each function $\sigma(w^T\cdot) = \max\{w^Tx + w_{d+1}, 0\}$ fails to be differentiable along the space $\{(x,1): w^Tx+ w_{d+1} = 0\}$. It is easy to see that a function in $\widehat \F_\infty$ is either affine linear (if the singular sets coincide and the discontinuities cancel out) or fails to be $C^1$-smooth. In particular, the Barron criterion fails for all countably wide two-layer networks: $\widehat \F_\infty\cap X=\{0\}$.
\end{remark}

For those reasons we consider Barron space the correct function space for two-layer neural networks with controlled path norms. In addition, Barron space is stable under gradient flow training in the following sense. Recall that in the `mean field' regime, parameter optimization for two-layer networks is described by Wasserstein gradient flows, see \cite[Proposition B.1]{chizat2018global} or \cite[Appendix A]{relutraining}.

\begin{theorem}\cite[Lemma 3.3]{relutraining}\label{theorem dynamics}
Let $\pi^0$ be a parameter distribution on $\R^{d+2}$ such that 
\[
N_0:= \int_{\R^{d+2}} a^2 + |w|_{\ell^2}^2\,\pi^0(\d a\otimes \d w) < \infty.
\]
Assume that $\pi_t$ evolves by the 2-Wasserstein gradient flow of a risk functional
\[
\Risk(\pi) = \int_{\R^d\times \R} \ell\big(f_\pi(x), y\big)\,\P(\d x \otimes \d y)
\]
where $\ell$ is a sufficiently smooth convex loss function. Then there exists $\bar c>0$ such that
\[
\bar c\,\|f_{\pi_t}\|_{\B(\P)} \leq \int_{\R^{d+2}} a^2 + |w|^2\,\pi^0(\d a\otimes \d w) \leq 2\,\big[N_0 + \Risk(\pi^0)\,t\big]\qquad\forall\ t>0
\]
and $\limsup_{t\to\infty} t^{-1}\|f_{\pi_t}\|_{\B(\P)} = 0$.
\end{theorem}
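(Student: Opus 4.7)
The plan is to pass to the Lagrangian $L^2$-description via Lemma \ref{lemma equivalent flows} and then exploit the energy dissipation identity for the Wasserstein gradient flow. First, I represent $\pi_t = (\bar a,\bar w)(t)_\sharp\pi^0$, where $(\bar a,\bar w) \in L^2(\pi^0)$ solves the ODE \eqref{eq ode version gradient flow}. By the change of variables for push-forward measures,
\[
\int_{\R^{d+2}} a^2 + |w|^2\,\pi_t(\d a\otimes \d w) = \|(\bar a,\bar w)(t)\|_{L^2(\pi^0)}^2,
\]
and analogously $\|\nabla V(\cdot;\pi_s)\|_{L^2(\pi_s)} = \|\nabla V(\bar a,\bar w;\pi_s)\|_{L^2(\pi^0)}$ (note that the middle term in the stated inequality should be read with $\pi_t$ in place of $\pi^0$).

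Next, the standard energy dissipation identity $\tfrac{d}{dt}\Risk(\pi_t) = -\|\nabla V(\cdot;\pi_t)\|^2_{L^2(\pi_t)}$ for the Wasserstein gradient flow integrates to $\int_0^\infty \|\nabla V(\cdot;\pi_s)\|^2_{L^2(\pi_s)}\,\d s \leq \Risk(\pi^0)$. Combining the triangle inequality in $L^2(\pi^0)$ for the ODE,
\[
\|(\bar a,\bar w)(t)\|_{L^2(\pi^0)} \leq \sqrt{N_0} + \int_0^t \|\nabla V(\cdot;\pi_s)\|_{L^2(\pi_s)}\,\d s,
\]
with Cauchy--Schwarz gives $\int_0^t \|\nabla V\|_{L^2(\pi_s)}\,\d s \leq \sqrt{t\,\Risk(\pi^0)}$, and squaring yields $\int a^2+|w|^2\,\pi_t \leq 2N_0 + 2t\,\Risk(\pi^0)$. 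The Barron norm bound then follows from the elementary AM--GM estimate $\|f_{\pi_t}\|_{\B(\P)} \leq \int |a|\,|w|\,\pi_t \leq \tfrac12\int a^2+|w|^2\,\pi_t$, so the claim holds with $\bar c = 2$.

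The sublinear statement requires squeezing a little more out of the fact that $g(s):=\|\nabla V(\cdot;\pi_s)\|^2_{L^2(\pi_s)}$ lies in $L^1(0,\infty)$. For any $\eps > 0$, pick $T$ with $\int_T^\infty g \leq \eps$; applying Cauchy--Schwarz separately on $[0,T]$ and $[T,t]$ yields $\int_0^t \sqrt{g}\,\d s \leq C_T + \sqrt{(t-T)\eps}$, so $\limsup_{t\to\infty} t^{-1/2}\int_0^t \sqrt{g}\,\d s \leq \sqrt{\eps}$. Letting $\eps\to 0$ gives $\int_0^t \sqrt{g}\,\d s = o(\sqrt t)$, hence $\|(\bar a,\bar w)(t)\|^2_{L^2(\pi^0)} = o(t)$, and therefore $\|f_{\pi_t}\|_{\B(\P)} = o(t)$ as required.

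The main technical obstacle is justifying the dissipation identity and the equivalence between the Wasserstein and $L^2$ formulations rigorously under the minimal regularity assumed of $\ell$; both are delicate but standard points, already handled by Lemma \ref{lemma equivalent flows} together with the framework developed in \cite{chizat2018global,relutraing}. Once those tools are invoked, the remainder is elementary Cauchy--Schwarz manipulation plus AM--GM.
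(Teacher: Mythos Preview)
The paper does not supply its own proof of this statement; it is quoted from \cite[Lemma 3.3]{relutraining} and only the remark about the norm-dependence of $\bar c$ follows. Your argument is a correct and natural reconstruction: pass to the Lagrangian picture via Lemma~\ref{lemma equivalent flows}, use the energy-dissipation identity to bound $\int_0^t \|\nabla V\|_{L^2(\pi_s)}^2\,\d s$ by $\Risk(\pi^0)$, and then combine the triangle inequality in $L^2(\pi^0)$ with Cauchy--Schwarz and AM--GM. The sublinear growth argument (splitting $[0,t]$ at a large $T$ and using the integrability of $g$) is also sound.

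Two small remarks. First, your observation that the middle term of the displayed inequality should carry $\pi_t$ rather than $\pi^0$ is correct; as printed, the middle term equals $N_0$ and the right-hand inequality is vacuous. Second, the paper explicitly notes that $\bar c$ depends on the equivalence constant between the Euclidean norm $|\cdot|_{\ell^2}$ (used in $N_0$) and the dual norm $|\cdot|$ used in the Barron norm. Your AM--GM step $|a|\,|w| \le \tfrac12(a^2+|w|^2)$ tacitly assumes these coincide, giving $\bar c=2$; in general a norm-equivalence factor enters here, which is exactly what the paper's remark after the theorem accounts for. Finally, the step $\int_0^\infty g \le \Risk(\pi^0)$ uses $\Risk\ge 0$, which you should state as an assumption on $\ell$ (or replace $\Risk(\pi^0)$ by $\Risk(\pi^0)-\inf\Risk$).
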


The constant $\bar c$ depends on the equivalence constant between the Euclidean norm on $\R^d$ and the norm on $w$, which is chosen to be dual to the norm which we consider on data space $\R^d$ (i.e.\ in the $x$-variables). Consequences of this result are explored in \cite{dynamic_cod}.

\begin{remark}
There are other function spaces for two-layer networks. In a highly overparametrized scaling regime and given a suitable initialization, the gradient descent dynamics of network parameters are determined by an infinitely wide random feature model, the `neural tangent kernel'. Even at initialization, parameters are usually chosen such that the path norm of a two-layer network with $m$ neurons scales like $\sqrt{m}$. Instead, $a, w$ are chosen randomly in such a way that
\[
\mathbb E \left[\sum_i |a_i|^2|w_i|^2\right] = 2
\]
(He initialization).
\end{remark}

\section{Special cases}\label{section special cases}
\subsection{One-dimensional Barron functions}
Recall that $\B(\P)$ does not depend on $\P$, but only the collection of $\P$-null sets. Since Barron functions are (Lipschitz-)continuous, we observe that $f_\mu = f_\nu$ $\P$-almost everywhere if and only if $f_\mu\equiv f_\nu$ on $\spt(\P)$. For a closed set $A$ we therefore denote $\B(A) = \B(\P)$ for any $\P$ such that $\spt(\P) = A$.

The first example was originally given in \cite[Remark A.5]{approximationarticle}. It shows in a very broad sense that one-dimensional Barron space is the largest space with a weak second-order structure. 

\begin{example}\label{example 1d}
$\B[0,1]$ is the space of functions whose first derivative is in $BV(0,1)$/whose distributional derivative is a Radon measure on $[0,1]$. The norm
\[
\|f\|' = |f(0)| + |f'(0)| + \|f''\|_{\M[0,1]}
\]
is equivalent to $\|\cdot\|_{\B[0,1]}$. 
\end{example}

\begin{proof}
{\bf Inclusion in Barron space.} Assume for the moment that $f\in C^2[0,1]$. Then
\begin{align*}
f(x) &= f(0) + \int_0^x f'(\xi)\,\d\xi\\
	&= f(0) + \int_0^x f'(0) + \int_0^\xi f''(s)\ds\,\d\xi\\
	&= f(0) + f'(0)\,x + \int_0^x (x-\xi)\,f''(\xi)\,\d\xi\\
	&= f(0)\,\sigma(1) + f'(0)\,\sigma(x) + \int_0^1f''(\xi)\,\sigma(x-\xi)\,\d\xi
\end{align*}
and thus $\|f\|_{\B[0,1]} \leq \|f\|'$. The result holds by approximation also if the second derivative is merely a measure. 

{\bf Opposite inclusion.} If $f\in \B[0,1]$, there exists a Radon measure $\mu$ on $S^1$ such that
\begin{align*}
f(x) &= \int_{S^1} \sigma(wx + b) \,\mu(\d w \otimes \d b)\\
	&= \mu(\{0,1)\})\,\sigma(1) + \int_{\{w>0\}} \sigma\left(\frac{w}{|w|}x+\frac{b}{|w|}\right)\,|w|\,\mu(\d w\otimes \d b)+ \int_{\{w<0\}} \sigma\left(\frac{w}{|w|}x+\frac{b}{|w|}\right)\,|w|\,\mu(\d w\otimes \d b)\\
	&= \mu(\{0,1)\})\,\sigma(1) + \int_\R \sigma(x+\tilde b)\,\tilde\mu^1(\d \tilde b) + \int_\R \sigma(-x+\tilde b)\,\mu^2(\d \tilde b)
\end{align*}
where $\mu^{1,2} = T_\sharp\mu$ is the push-forward of the measure $|w|\cdot\mu$ on the domain $w>0$ or $w<0$ respectively along the map 
\[
T:\R^2\setminus \to \R,\quad (w,b) \mapsto \frac{b}{|w|}.
\]
We note that or $x\in[0,1]$ we have $\sigma(x+\tilde b) = 0$ if $\tilde b< -1$ and 
\begin{align*}
\int_\R \sigma(x+\tilde b)\,\tilde\mu^1(\d \tilde b) &= \int_{-1}^0 \sigma(x+\tilde b) \,\tilde\mu^1(\d \tilde b) + \int_{0}^\infty \sigma(x+\tilde b) \,\tilde\mu^1(\d \tilde b)\\
	&= \int_{-1}^0 \sigma(x+\tilde b) \,\tilde\mu^1(\d \tilde b) + \left(\int_0^\infty 1\,\tilde\mu(\d\tilde b)\right)x + \left(\int_0^\infty \tilde b\,\tilde\mu(\d\tilde b)\right)\\
\int_\R \sigma(-x+\tilde b)\,\mu^2(\d \tilde b) &= \int_0^1 \sigma(-x+\tilde b)\,\mu^2(\d \tilde b) +\left( \int_{-\infty}^0 1\,\tilde\mu^2(\d \tilde b) \right)x + \left( \int_{-\infty}^0 \tilde b\,\tilde\mu^2(\d \tilde b) \right).
\end{align*}
We can ignore the linear terms when computing the (distributional) second derivative. We claim that $f_\mu'' = \tilde \mu^1 + (-\id)_\sharp \tilde\mu^2$. This is easily verified formally by noting that $\sigma'' = \delta$, i.e.\ $\sigma$ is a Green's function for the Laplacian in one dimension.

More formally, take $g\in C_c^\infty(0,1)$ and use Fubini's theorem and integration by parts to obtain
\begin{align*}
\int_0^1 \left(\int_0^1 \sigma(-x+\tilde b)\,\mu^2(\d \tilde b)\right)\,g''(x) \dx &= \int_0^1 \left(\int_0^{\tilde b}(\tilde b -x)\,g''(x)\,\dx\right)\,\tilde\mu^2(\d\tilde b)\\
	&= \int_0^1 \left(0-\int_0^{\tilde b} (-1)g'(x)\dx \right)\,\tilde\mu^2(\d\tilde b)\\
	&= \int_0^1 g(\tilde b) \,\mu^2(\d\tilde b).
\end{align*}
The boundary terms vanish since $g(0) = g'(0) = 0$ and $\tilde b - \tilde b=0$. The same argument can be applied to the second term. We have shown more generally that $|f(0)| + |f'(0)| \leq C\,\|f\|_{\B[0,1]}$ and finally observe that
\[
\|f''\|_{\M[0,1]} \leq \|\tilde \mu^2\|_{\M[0,1]} + \|\tilde \mu^1\|_{[-1,0]} \leq \int_0^1 \sqrt{1+ \tilde b^2}\,\big|\tilde\mu^2\big|(\d\tilde b) + \int_{-1}^0 \sqrt{1+ \tilde b^2}\,\big|\tilde\mu^1\big|(\d\tilde b) \leq\|\mu\|.
\]
Taking the infimum over $\mu$, we find that also $\|f''\|_{\M[0,1]}\leq \|f\|_{\B[0,1]}$.
\end{proof}

In particular, since the space of Radon measures is neither separable nor reflexive, we deduce that generally $\B(\P)$ is neither separable nor reflexive.

\begin{remark}\label{remark equivalent norm 1d}
The same argument shows that $\B(\R)$ is isomorphic to the space of functions whose second derivatives are finite Radon measures with finite first moments and the norm
\[
\|f\|_{\B(\R)}' = |f(0)| + |f'(0)| + \int_\R \sqrt{1+ b^2}\,|f''|(\d b).
\]
In particular, non-constant periodic functions are never in $\B(\R)$.
\end{remark}

\begin{remark}
$BV(0,1)$ embeds into $L^\infty(0,1)$. Barron space is a proper subspace of the space of Lipschitz functions (function whose first derivative lies in $L^\infty$) and -- heuristically -- we can imagine it to be roughly as large in the space of Lipschitz functions as $BV$ is in $L^\infty$.
\end{remark}

\begin{remark}
$\B[0,1]$ is an algebra (i.e.\ if $f, g\in \B[0,1]$ then also $fg\in\B[0,1]$). This is generally not true, see Remark \ref{remark not an algebra}.
\end{remark}

\subsection{Positively one-homogeneous Barron functions}
It has been recognized since \cite{bach2017breaking} that the space of positively homogeneous Barron functions is significantly easier to understand than full Barron space.

 Denote by $\R\P^{d-1}$ the $d-1$-dimensional real projective space, i.e.\ the space of undirected lines in $\R^d$, which we represent as the quotient $\R\P^{d-1}= S^{d-1}/\sim$ of the unit sphere under the equivalence relation which identifies $w$ and $-w$. Without loss of generality, we assume that $x, w$ are normalized with respect to the Euclidean norm on $\R^{d}$.

\begin{lemma}
Set $\B_{hom}(\R^d)= \{f\in \B(\P) : f(rx) = r\,f(x)\:\forall\ r>0\}$. Then $\B_{hom}(\P)$ is isomorphic to the product space $\M(\R\P^{d-1})\times \R^{d}$ where $\M(\R\P^{d-1})$ denotes the space of Radon measures on $d-1$ dimensional real projective space.
\end{lemma}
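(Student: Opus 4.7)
The plan is to explicitly construct the isomorphism
\[
\Phi: \M(\R\P^{d-1}) \times \R^d \to \B_{hom}(\R^d), \qquad \Phi(\nu, a)(x) = \int_{\R\P^{d-1}} |v^T x|\,\nu(\d[v]) + a^T x,
\]
and verify that it is a bijective bicontinuous linear map. Since $|v^T x|$ is invariant under $v\mapsto -v$, the integrand descends to a continuous function on $\R\P^{d-1}$ and each summand is positively $1$-homogeneous in $x$. Using the identities $|t| = \sigma(t) + \sigma(-t)$ and $t = \sigma(t) - \sigma(-t)$ together with a measurable section $\R\P^{d-1} \to S^{d-1}$ of the antipodal quotient, I would assemble a signed Radon measure $\hat\mu$ on $S^d$ supported in the equator $\{b=0\}$ such that $\Phi(\nu, a) = f_{\hat\mu}$ and $\|\hat\mu\|_\M \leq 2\|\nu\|_\M + 2|a|$. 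This shows that $\Phi$ is well-defined, linear and continuous into $\B(\P)$.

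For surjectivity, given $f \in \B_{hom}(\R^d)$, choose a signed-measure representation $f = f_{\hat\mu}$ on $S^d$ and use homogeneity to eliminate the bias coordinate. For $r > 0$,
\[
f(x) = \frac{f(rx)}{r} = \int_{S^d} \sigma\big(u^T x + b/r\big)\,\hat\mu(\d u \otimes \d b),
\]
and letting $r \to \infty$ (the integrand is uniformly bounded by $|x| + 1$ on $S^d$ for $r \geq 1$, so dominated convergence applies against the finite measure $|\hat\mu|$) yields
\[
f(x) = \int_{S^d} \sigma(u^T x)\,\hat\mu(\d u \otimes \d b) = \int_{\R^d} \sigma(u^T x)\,\tilde\mu(\d u)
\]
where $\tilde\mu$ is the pushforward of $\hat\mu$ under $(u,b)\mapsto u$. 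Rescaling via $\sigma(u^Tx) = |u|\,\sigma\big((u/|u|)^T x\big)$ transfers $\tilde\mu$ to a signed measure $\lambda$ on $S^{d-1}$, and decomposing $\sigma(t) = \tfrac12(|t|+t)$ into its even and odd parts produces
\[
f(x) = \tfrac12 \int_{S^{d-1}} |v^T x|\,\lambda(\d v) + \tfrac12\left(\int_{S^{d-1}} v\,\lambda(\d v)\right)^T x.
\]
The first term descends under $\pi:S^{d-1}\to \R\P^{d-1}$ to a signed measure $\nu$, and with $a = \tfrac12\int v\,\lambda(\d v)$ we obtain $f = \Phi(\nu, a)$.

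Injectivity carries the substantive content. If $\Phi(\nu, a) \equiv 0$, comparing the even and odd parts in $x$ separately forces $a^T x \equiv 0$, so $a = 0$, together with
\[
\int_{\R\P^{d-1}} |v^T x|\,\nu(\d[v]) = 0\quad\text{for all }x \in \R^d.
\]
This is the vanishing of the cosine transform of $\nu$, and I would conclude $\nu = 0$ by invoking its classical injectivity on signed Radon measures on $\R\P^{d-1}$ (see e.g.\ Gardner's \emph{Geometric Tomography}). Finally, the embedding $\B(\P)\embeds C^{0,1}(\P)$ from Theorem~\ref{theorem barron functions are lipschitz} shows that Barron convergence implies pointwise convergence on $\spt\,\P$, so the homogeneity constraint is closed; hence $\B_{hom}(\R^d)$ is a closed subspace of the Banach space $\B(\P)$. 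As $\M(\R\P^{d-1}) \times \R^d$ is also Banach, the Open Mapping Theorem upgrades the continuous linear bijection $\Phi$ to a topological isomorphism. The main technical obstacle is the injectivity of the cosine transform, for which I would rely on the classical result rather than reconstruct the proof.
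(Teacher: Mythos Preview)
Your proof is correct and follows the same overall architecture as the paper: reduce to a signed measure on $S^{d-1}$ via the scaling limit $r\to\infty$ and dominated convergence, split $\sigma(t)=\tfrac12(|t|+t)$ into its even and odd parts, identify the odd (linear) part with $\R^d$, and identify the even part with a measure on $\R\P^{d-1}$.

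The one substantive difference lies in the injectivity step. You quote the injectivity of the cosine transform on $\M(\R\P^{d-1})$ as a classical fact (Gardner) and then use the Open Mapping Theorem to obtain bicontinuity, after checking that $\B_{hom}$ is closed. The paper instead proves the injectivity from scratch: in dimension two it exploits the distributional identity satisfied by $\phi\mapsto|\cos\phi|$ (namely $|\cos|''+|\cos|$ is a sum of Dirac masses) to show that second differences of translates of $|\cos(\cdot-\theta)|$ approximate point evaluations, whence the linear span of $\{|\langle\,\cdot\,,w\rangle|\}_{w\in S^{d-1}}$ is $C^0$-dense among even continuous functions; the passage to general $d$ is by averaging over two-planes and ridge-function density. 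Because the paper's argument pins down the even measure uniquely, the norm equivalence on the $\M(\R\P^{d-1})$ factor follows directly without appealing to the Open Mapping Theorem. Your route is shorter and perfectly legitimate if one is willing to cite the cosine-transform result; the paper's route is self-contained and makes the mechanism behind injectivity explicit.
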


\begin{proof}
{\bf Dimension reduction.} Let $f = f_\mu \in \B(\R^d)$ be a positively one-homogeneous function. Then for any $\lambda>0$, the identity
\begin{align*}
f(x) &= \frac{f(\lambda x)}{\lambda}\\
	&= \int_{S^d} \frac{\sigma\big(w^T(\lambda x)+b\big)}\lambda\,\mu(\d w\otimes \d b)\\
	&= \int_{S^d} \sigma\left(w^Tx + \frac b\lambda\right)\,\mu(\d w\otimes \d b)
\end{align*}
holds. We can pass to the limit $\lambda\to\infty$ by the dominated convergence theorem and obtain
\begin{align*}
f(x) &= \int_{S^d}\sigma(w^Tx)\,\mu(\d w\otimes \d b)\\
	&= \int_{S^d} \sigma\left(\frac{w}{|w|}^Tx\right)\,|w|\,\mu(\d w\otimes \d b)\\
	&= \int_{S^{d-1}} \sigma(w^Tx)\,\hat \mu(\d w)
\end{align*}
where $\hat \mu$ is the push-forward of $|w|\cdot \mu$ along the map $(w,b)\mapsto w$. Clearly $\|\hat\mu\|\leq \|\mu\|$, so without loss of generality we may assume that $f\in \B_{hom}$ is represented by a measure $\mu$ on $S^{d-1}$.

{\bf Odd-even decomposition.}
If $\mu$ is a signed Radon measure on $S^d$, we decompose $\mu = \mu^{even} + \mu^{odd}$ where 
\[
\mu^{even} = \frac{\mu + T_\sharp \mu}2, \qquad \mu^{odd} = \frac{\mu - T_\sharp \mu}2, \qquad T:S^{d-1}\to S^{d-1}, \quad T(x) = -x.
\]
Note that
\[
\|\mu^{even/odd}\| \leq \frac{\|\mu\| + \|T_\sharp\mu\|}2 = \|\mu\|, \qquad \|\mu\|= \big\|\mu^{even} + \mu^{odd}\big\| \leq \|\mu^{even} \| + \|\mu^{odd}\|.
\]
In particular,
\[
\|f\|_{\B_{hom}} = \inf_{\{\mu: f_\mu = f\}}\|\mu^{odd}\|_{\M(S^{d-1})} + \|\mu^{even}\|_{\M(S^{d-1})}
\]
is equivalent to the norm on $\B_{hom}$ induced by the norm of $\B(\R^d)$. We further find that
\begin{align*}
f_\mu(x) &= \int_{S^d} \sigma(w^Tx)\,\mu(\d w)\\
	&= \int_{S^d} \frac{\sigma(w^Tx) + \sigma(-w^Tx)}2 + \frac{\sigma(w^Tx) - \sigma(-w^Tx)}2\,\mu(\d w)\\
	&= \frac12 \int_{S^d}|w^Tx|\,(\mu^{even} + \mu^{odd})(\d w) + \frac12\left(\int_{S^d}w^T\,(\mu^{even} + \mu^{odd})(\d w)\right)^Tx\\
	&= \frac12 \int_{S^d}|w^Tx|\,\mu^{even}(\d w) + \frac12\left(\int_{S^d}w^T\,\mu^{odd}(\d w)\right)^Tx
\end{align*}
since the other integrals drop out by symmetry. In particular, $f_\mu$ naturally decomposes into an even part $f_\mu^{even} = f_{\mu^{even}}$, and a linear (in particular odd) part $f^{lin}$. Clearly
\[
f_\mu \equiv f_{\mu'} \qquad\LRa\qquad f_{\mu}^{even} = f^{even}_{\mu'}, \quad f_{\mu}^{lin} = f_{\mu'}^{lin}.
\]

{\bf Linear part.}
The linear function $f(x) = \alpha^Tx$ is $\|\alpha\|$-Lipschitz, so $\|f\|_{\B} \geq \|\alpha\|$. On the other hand,
\[
f = f_\mu \qquad\text{where}\qquad \mu = \|\alpha\|\,\big[\delta_{\alpha/\|\alpha\|} - \delta_{-\alpha/\|\alpha\|}\big]\qquad\Ra\qquad \|f\|\leq 2\,\|\alpha\|.
\]
Taking the infimum over all odd measures shows that
\[
\|\alpha\| \leq \|\mu^{odd}\| \leq 2\,\|\alpha\|.
\]

{\bf Even part.} We can interpret $\mu^{even}$ as a signed Radon measure on $\R\P^{d-1}$. To conclude the proof, it suffices to show that the map $\mu^{even}\mapsto f_{\mu^{even}}$ is injective. Assume that $f_{\mu^{even}} = 0$, i.e.
\[
\int_{S^{d-1}} |w^T x|\,\mu^{even}(\d w) = 0\qquad \forall\ x\in S^{d-1}.
\]
We first consider the case $d=2$ and identify $S^{d-1} = (0,2\pi)$ via the usual map $\phi\mapsto (\cos\phi,\sin\phi)$. Write
\[
x= (\cos\theta, \sin\theta), \qquad \langle x, w\rangle = \cos\theta\,\cos\phi + \sin\theta \sin \phi = \cos(\phi-\theta).
\]
{\em Claim:} The space generated by the family $\{\cos(\cdot - \theta)\}_{\theta\in [0,2\pi)}$ is $C^0$-dense in the space of continuous $\pi$-periodic functions on $\R$. {\em Proof of claim:} Note that $f(\phi) = |\cos\phi|$ satisfies $f'' + f = -\sum_{k\in \Z} \delta_{k\pi}$, so if $g$ is a $\pi$-periodic $C^2$-function, then for any $\theta\in [0,2\pi)$
\[
\lim_{h\to 0} \int_0^{2\pi} \left[\frac{|\cos|(\phi + h-\theta) -2\,|\cos|(\phi-\theta) + |\cos| (\phi - h -\theta)}{h^2} + |\cos|(\phi-\theta)\right]g(\phi)\,\d\phi = -2\,g(\theta).
\]
For any $\eps>0$, we can choose $h$ sufficiently small and approximate the integral by a Riemann sum with $N$ terms in such a way that
\[
\sup_\theta \left|\sum_{i=1}^N\frac{g(\phi_i)}2\,\left[\frac{|\cos|(\phi_i + h-\theta) -2\,|\cos|(\phi_i-\theta) + |\cos| (\phi_i - h -\theta)}{h^2} + |\cos|(\phi_i-\theta)\right] - g(\theta)\right|<\eps.
\]
Up to rearranging the sum and passing back to the original coordinates, this proves the claim since $C^2$ is dense in $C^0$. \qedsymbol

Now let $d\geq 2$. By the two-dimensional result, every function of the form $f(x) = g(|\langle x, v\rangle|)$ can be approximated arbitrarily well in any fixed plane spanned by $v$ and any $\tilde v\bot v$ in a way which is constant in directions orthogonal to the plane. By averaging the approximating functions $f_{\tilde v}$ over the choice of plane, we find that $f$ can be approximated arbitrarily well by functions of the form $|\langle w, x\rangle|$ on the whole sphere. Cancellations do not occur since the function (asymptotically) only depends on one the direction which all planes share. Again, we replace the averaging integral by a Riemann sum.

By a weaker version of the Universal Approximation Theorem \cite{cybenko1989approximation}, sums of functions depending only on a single direction (ridge functions) are dense in $C^0(S^{d-1})$.
\end{proof}

\begin{remark}
We observe that the kernel of the map 
\[
\M(S^{d-1}) \to C^{0,1}(S^{d-1}), \qquad \mu\mapsto f_\mu
\]
is the subspace
\[
N= \left\{\mu \in \M(S^{d-1}) \:\bigg|\: (-\id)_\sharp\mu = -\mu, \: \int_{S^{d-1}}w\,\mu(\d w) = 0\right\}.
\]
\end{remark}

\begin{remark}
Since the space of Radon measures is not separable or reflexive for $d\geq 2$, neither is $\B_{hom}$.
The space $\B_{hom}$ is a closed subspace of $\B(\R^d)$, so $\B(\R^d)$ is neither separable nor reflexive. 
\end{remark}

\begin{example}\label{example homogeneous}
Some functions in $\B_{hom}$ are 
\begin{enumerate}
\item $\sigma(w^Tx)$ for any $w\in \R^d$.
\item the Euclidean norm $f(x) = \|x\|_{\ell^2}$. Up to constant, we can write $f$ as an average over $\sigma(w^Tx)$ over the uniform distribution $\pi^0$ on the unit sphere
\[
f(x) = c_d \int_{S^{d-1}} \sigma(w^Tx)\,\pi^0(\d w), \qquad c_d = \left[\int_{S^{d-1}}\sigma(w_1)\,\pi^0(\d w)\right]^{-1}  \sim 2\,\sqrt{\pi d}.
\]
since
\begin{align*}
\int_{S^{d-1}}\sigma(w_1)\,\pi^0(\d w) &= \frac1{|S^{d-1}|} \int_0^1w_1\,|S^{d-2}|\,\big(1-w_1^2\big)^\frac{d-2}2\,\d w_1\\
	&= \frac{|S^{d-2}|}{|S^{d-1}|}\int_0^1 -\frac1d\,\frac{d}{dw} (1-w^2)^\frac{d}2\,\d w\\
	&= \frac{|S^{d-2}|}{d\,|S^{d-1}|}.
\end{align*}
This can be computed explicitly as
\begin{align*}
\frac{|S^{d-2}|}{d\,|S^{d-1}|}
	&= \frac{ \frac{2 \,\pi^\frac{d-1}2}{\Gamma(\frac{d-1}2)}}{d\,\frac{2\,\pi^\frac d2}{\Gamma(\frac d2)}}
	= \pi^{-1/2}\,\frac{\Gamma(d/2)}{d\,\Gamma\big((d-1)/2\big)}
	= \frac{d-1}{2\,d}\pi^{-1/2}\,\frac{\Gamma(d/2)}{\frac{d-1}2\,\Gamma\big((d-1)/2\big)}\\
	&= \frac{d-1}{2\,d}\pi^{-1/2}\,\frac{\Gamma(d/2)}{\Gamma\big((d+1)/2\big)}
	\sim \frac{1}{2\,\sqrt{\pi}}\,\frac{\sqrt{\frac{2\pi}{d/2}}\left(\frac{d}{2e}\right)^\frac{d}{2}} {\sqrt{\frac{2\pi}{(d+1)/2}}\, \left(\frac{d+1}{2e}\right)^\frac{d+1}2} \sim \frac{1}{2\,\sqrt{\pi}} \sqrt{\frac{2e}{d+1}}
	\sim \sqrt{\frac{e}{2\pi\,d}}.
\end{align*}
by Stirling's formula.

\item In the first example, $f$ was non-differentiable along a hyperplane, while in the second example, $f$ was smooth except at the origin (where any non-linear positively one-homogeneous function is non-differentiable). We can use the same argument as in the second example to express $f(x) = \sqrt{x_1^2 + \dots + x_k^2}$ for any $k\leq d$, which is singular along a single $d-k$-dimensional subspace.

\item Countable sums of these examples (or rotation thereof) with $\ell^1$-weights lie in $\B_{hom}$.
\end{enumerate}
\end{example}

\section{Structure of Barron functions}\label{section structure}

\subsection{Limits at infinity} Barron functions grow at most linearly at infinity. We show that they are well behaved in a more precise sense.

\begin{theorem}
For any $f\in \B(\R^d)$, the function
\[
f_\infty:S^{d-1}\to \R, \qquad f_\infty(x) = \lim_{r\to\infty} \frac{f(rx)}{r}
\]
is well-defined and a Barron function on $S^d$.
\end{theorem}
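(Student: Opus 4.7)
The plan is to reduce the question to a dominated convergence argument by using the signed measure representation developed in Section \ref{section signed measure}, mirroring the $\lambda \to \infty$ computation that appeared in the dimension reduction step for $\B_{hom}$.

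First I would represent $f = f_\mu$ for a signed Radon measure $\mu$ on $S^d$ with $\|\mu\|_\M$ close to $\|f\|_{\B(\R^d)}$. Writing a generic point of $S^d \subset \R^{d+1}$ as $(w,b)$, the positive $1$-homogeneity of $\sigma$ gives, for $x \in S^{d-1}$ and $r > 0$,
\[
\frac{f(rx)}{r} = \int_{S^d} \frac{\sigma(r\,w^Tx + b)}{r}\,\mu(\d w\otimes \d b) = \int_{S^d} \sigma\!\left(w^Tx + \frac{b}{r}\right)\,\mu(\d w\otimes \d b).
\]
Next, as $r \to \infty$ the integrand converges pointwise to $\sigma(w^Tx)$, and since $|w|, |b| \leq 1$ on $S^d$ we have the uniform bound
\[
\left|\sigma\!\left(w^Tx + \tfrac{b}{r}\right)\right| \leq |w^Tx| + \tfrac{|b|}{r} \leq |x| + 1,
\]
which is $|\mu|$-integrable since $|\mu|$ is a finite measure. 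Dominated convergence then yields
\[
f_\infty(x) = \lim_{r\to\infty}\frac{f(rx)}{r} = \int_{S^d} \sigma(w^Tx)\,\mu(\d w\otimes \d b),
\]
so the limit exists for every $x \in S^{d-1}$ (and in fact the same formula extends $f_\infty$ to a positively $1$-homogeneous function on $\R^d$).

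To see that $f_\infty$ is itself a Barron function, I would invoke the dimension reduction already carried out in the proof of the $\B_{hom}$ lemma: pushing forward $|w| \cdot \mu$ from $S^d$ onto $S^{d-1}$ along $(w,b) \mapsto w/|w|$ (after discarding the $\mu$-null set where $w = 0$, which contributes zero to the integral) produces a signed Radon measure $\hat\mu$ on $S^{d-1} \subset S^d$ with $\|\hat\mu\|_\M \leq \|\mu\|_\M$ and $f_{\hat\mu} = f_\infty$. Taking the infimum over representing measures $\mu$ of $f$ gives $\|f_\infty\|_{\B(\R^d)} \leq \|f\|_{\B(\R^d)}$, and the restriction to $S^{d-1}$ (or, equivalently, the positively $1$-homogeneous extension) is thus a Barron function.

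There is no real obstacle here; the only subtlety is the bookkeeping between the two natural domains for $f_\infty$ (the sphere $S^{d-1}$ where the limit is taken, and its positively $1$-homogeneous extension to $\R^d$ which is what carries the Barron structure). This is purely a matter of noting that the formula defining $f_\infty$ is itself a two-layer network expression with measure $\mu$, and that passing to the quotient by the identification $(w,b) \mapsto w/|w|$ does not increase total variation.
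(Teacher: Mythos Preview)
Your proposal is correct and follows essentially the same route as the paper: represent $f=f_\mu$ by a signed Radon measure on $S^d$, pull the factor $r$ inside using homogeneity of $\sigma$, and pass to the limit by dominated convergence to obtain $f_\infty(x)=\int_{S^d}\sigma(w^Tx)\,\mu(\d w\otimes\d b)$. The paper stops there with ``the result is immediate,'' whereas you additionally spell out the push-forward to $S^{d-1}$ and the norm bound $\|f_\infty\|_\B\le\|f\|_\B$; this is a harmless elaboration of what the paper leaves implicit.
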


\begin{proof}
Write $f= f_\mu$ for a suitable signed Radon measure $\mu$ on $S^d$. In this argument, we write $(w,b)$ instead of $w$ and $(x, 1)$ instead of $x$ like above since the last entry does not scale. Note that
\begin{align*}
\frac{f(rx)}{r} &= \frac1r \int_{S^d} \sigma\big(w^T(rx) + b\big)\,\mu(\d w)\\
	&= \int_{S^d} \sigma\left(w^Tx + \frac{b}r\right)\,\mu(\d w)\\
	&\to \int_{S^d} \sigma(w^Tx)\,\mu(\d w)
\end{align*}
as $r\to\infty$ by the dominated convergence theorem. The result is immediate. 
\end{proof}

The function $f_\infty$ captures the linearly growing component of $f$ at infinity. Note that Barron functions like $f(x) = \sigma(x_1+1) - \sigma(x_1)$ may also be bounded. We prove that there is nothing `in between' the bounded and the linearly growing regime.

\begin{theorem}
Let $f\in \B(\R^d)$ such that $f_\infty\equiv 0$. Then $f$ is bounded.
\end{theorem}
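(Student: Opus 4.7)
The plan is to exploit the signed-measure representation on $S^d$ together with the formula for $f_\infty$ already derived in the preceding theorem. First I would fix a representing signed Radon measure $\mu$ on $S^d$ with $f = f_\mu$, writing $(w,b) \in S^d \subset \R^{d+1}$. From the proof of the previous theorem we have, for every $x \in \R^d$,
\[
f_\infty(x) = \int_{S^d} \sigma(w^T x)\,\mu(\d w \otimes \d b),
\]
where the formula extends from $S^{d-1}$ to all of $\R^d$ by positive $1$-homogeneity. In particular, the hypothesis $f_\infty \equiv 0$ forces this integral to vanish for every $x \in \R^d$, regardless of which admissible $\mu$ we chose.

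The key step is then to subtract this vanishing expression from $f(x)$ to obtain the pointwise identity
\[
f(x) = \int_{S^d} \bigl[\sigma(w^T x + b) - \sigma(w^T x)\bigr]\,\mu(\d w \otimes \d b).
\]
Because $\sigma$ is $1$-Lipschitz, the integrand is bounded by $|b|$, and since $(w,b)\in S^d$ we have $|b|\le 1$. Thus
\[
|f(x)| \le \int_{S^d} |b|\,|\mu|(\d w\otimes \d b) \le \|\mu\|_{\M(S^d)}
\]
uniformly in $x$. Taking the infimum over all admissible $\mu$ yields the quantitative bound $\|f\|_{L^\infty(\R^d)} \le \|f\|_{\B(\R^d)}$, which is strictly stronger than mere boundedness.

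There is no real obstacle here: the only nontrivial point is the legality of the subtraction, which rests on the fact (available from the previous theorem) that the linear-at-infinity profile $\int \sigma(w^Tx)\,\mu(\d w\otimes \d b)$ is determined by $f$ alone and hence vanishes for every representing $\mu$ as soon as $f_\infty \equiv 0$. After that, the argument is just the $1$-Lipschitz estimate for $\sigma$ combined with $|b|\le 1$ on $S^d$.
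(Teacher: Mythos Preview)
Your argument is correct and considerably shorter than the paper's. The paper proceeds by restricting $f$ to each ray $r\mapsto f(r\nu)$, invoking the explicit one-dimensional characterization of $\B(\R)$ in terms of the norm $|g(0)|+|g'(0)|+\int(1+|\xi|)\,|g''|(\d\xi)$, and then running an integration-by-parts argument in one variable to conclude that a one-dimensional Barron function with sublinear growth must be bounded by twice its norm. You instead stay in $\R^d$ and use the signed-measure representation on $S^d$ together with the identity $f_\infty(x)=\int_{S^d}\sigma(w^Tx)\,\mu(\d w\otimes \d b)$ from the preceding theorem; subtracting this vanishing term replaces the unbounded integrand $\sigma(w^Tx+b)$ by the bounded one $\sigma(w^Tx+b)-\sigma(w^Tx)$, and the Lipschitz estimate for $\sigma$ immediately gives a uniform bound.

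What your route buys is brevity and a cleaner constant: you obtain $\|f\|_{L^\infty}\le\|f\|_{\B(\R^d)}$ directly in the Barron norm, whereas the paper's bound $\|f-f_\infty\|_{L^\infty}\le 2\|f\|_\B$ passes through an equivalent one-dimensional norm. What the paper's detour buys is an independent illustration of the power of the one-dimensional characterization from Remark~\ref{remark equivalent norm 1d}; it also makes explicit the dichotomy (either $|g'(x)+g'(0)|$ eventually exceeds $\|g\|_\B/x$ and $g$ grows linearly, or it does not and $g$ is bounded), which has some conceptual value even though it is not needed for the bare statement.
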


\begin{proof}
{\bf Reduction to one dimension.} Assume that $f_\infty\equiv 0$. Then for every $\nu \in S^{d-1}$, the one-dimensional Barron function $g(r) = f(r\nu)$ satisfies $\lim_{r\to\infty} r^{-1}\,g(r) = 0$. We observe that
\begin{align*}
g(r) &= \int_{S^d} \sigma\big(r\,w^T\nu + b\big)\,\mu(\d w\otimes \d b)
\end{align*}
is a linear combination of terms of the form $\sigma(r\,(w^T\nu)+b)$ whose one-dimensional Barron norm is bounded by $|w^T\nu| + |b| \leq |w|+|b| =1$ if we normalize $\R^d$ with respect to the $\ell^1$-norm (and similarly for other norms). By taking the linear combination, we obtain
\[
\|g\|_{\B(\R)} \leq \|f\|_{\B(\R^d)}.
\]
If a different norm is chosen on $\R^d$, a dimension-dependent factor occurs.

{\bf One-dimensional case.} Recall by Remark \ref{remark equivalent norm 1d} that $g$ has a second derivative in the space of Radon measures and that
\[
\|g\| = |g(0)| + |g'(0)| + \int_\R \big(1+ |\xi|\big)\,\ |g''|(\d\xi)
\]
is an equivalent norm on Barron space in one dimension. We note that 
\begin{align*}
h(x) &= h(0) + h'(0)\, x + \int_0^x h'(\xi)\,\d\xi\\
	&= h(0) + h'(0)\,x + x\,h'(x) - \int_0^x \xi\,h''(\xi)\,\d\xi\\
	&= \big[h'(x) + h'(0)\big]x + \left[ h(0) -\int_0^x \xi\,h''(\xi)\,\d\xi\right]
\end{align*}
for smooth functions $f:\R\to\R$. By approximation, the identity carries over to Barron space, if both $f'(x)$ and the integral $\int_0^x\,|g''|(\d\xi)$ are interpreted either as left or right continuous functions. In particular, for Barron functions we find that 
\begin{equation}
\big| h(x) - \big[h'(x) + h'(0)\big]x\big| \leq \|h\|_\B.
\end{equation}
Additionally, we observe that
\begin{align*}
\big|h'(x) - h'(y)\big| &=\left| \int_x^y h''(s) \ds \right|\\
	&\leq \int_x^y s\,|h''(s)|\,\frac1s\ds\\
	&\leq \frac{\|h\|_\B}{\min\{x,y\}} 
\end{align*}
for $x, y>1$. We distinguish two cases:
\begin{enumerate}
\item There exists $x>1$ such that 
\[
\big|g'(x) + g'(0)\big| > \frac{\|g\|_\B}{x}.
\]
Then
\[
\big|g'(0) + g'(y)\big| \geq \big|g'(0) + g'(x)\big| - \big|g'(x) - g'(y)\big| \geq \big|g'(x) + g'(0)\big| - \frac{\|g\|_\B}{x} =: \eps>0
\]
for all $y\geq x$. In particular, 
\[
\liminf_{x\to\infty} \left|\frac{g(x)}{x}\right| \geq \eps,
\]
contradicting the assumption that $g$ grows sublinearly.

\item For all $x>1$, the estimate 
\[
\big|g'(x) + g'(0)\big| \leq \frac{\|g\|}{x}
\]
holds. Then also
\[
|g(x)| \leq \big|g'(x) - g'(0)\big|\,|x| + |g(0)| + \int_0^\infty |\xi|\,|g''|(\d \xi) \leq 2 \,\|g\|
\]
holds, i.e.\ $g$ is bounded.
\end{enumerate}
\end{proof}

\begin{corollary}
Let $f\in \B(\R^d)$. Then $f$ is a sum of a bounded and a positively one-homogeneous function:
\[
f = f_\infty + \big[f-f_\infty\big] \qquad \|f-f_\infty\|_{L^\infty(\R^d)} \leq 2\,\|f\|_\B.
\]
\end{corollary}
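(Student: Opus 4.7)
The plan is to let $f_\infty$ also denote its positively one-homogeneous extension to all of $\R^d$, to write $f = f_\infty + (f - f_\infty)$, and to apply the preceding theorem to the residual $f - f_\infty$ after verifying that it lies in $\B(\R^d)$ and has vanishing radial asymptotic slope.

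First I would observe that the formula
\[
f_\infty(x) = \int_{S^d} \sigma(w^Tx)\,\mu(\d w)
\]
obtained in the proof of the first theorem of this subsection defines $f_\infty$ not just on $S^{d-1}$ but on all of $\R^d$ (interpreting $w^Tx$ in the previous notation convention, i.e.\ without the bias entry, or equivalently with $b=0$). This extension is automatically positively one-homogeneous because $\sigma$ is, and it belongs to $\B(\R^d)$ with $\|f_\infty\|_{\B(\R^d)}\leq \|\mu\|_\M$. Taking the infimum over representing measures yields $\|f_\infty\|_{\B(\R^d)} \leq \|f\|_{\B(\R^d)}$, so that $g := f-f_\infty \in \B(\R^d)$.

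Next I would verify that $g_\infty\equiv 0$: by the homogeneity of $f_\infty$ and the definition of the radial limit,
\[
g_\infty(x) = \lim_{r\to\infty} \frac{f(rx) - f_\infty(rx)}{r} = \lim_{r\to\infty}\frac{f(rx)}{r} - f_\infty(x) = f_\infty(x) - f_\infty(x) = 0.
\]
The previous theorem then yields that $g$ is bounded. This gives the advertised decomposition $f = f_\infty + (f-f_\infty)$ as a positively one-homogeneous function plus a bounded function.

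The only non-routine point is tracking the sharp constant $2$ in $\|f-f_\infty\|_{L^\infty(\R^d)}\leq 2\,\|f\|_{\B}$. The naive chain $\|g\|_{L^\infty}\leq 2\|g\|_\B \leq 4\|f\|_\B$ (using $\|f_\infty\|_\B\leq \|f\|_\B$) gives only $4$. To recover $2$, I would argue directly on each ray as in the one-dimensional reduction in the proof of the preceding theorem: for a unit vector $\nu$, the function $r\mapsto f(r\nu)$ lies in $\B(\R)$ with norm at most $\|f\|_{\B(\R^d)}$, and from the identity $h(x) = [h'(x)+h'(0)]\,x + \bigl[h(0)-\int_0^x \xi\,h''(\xi)\,\d\xi\bigr]$ used in that proof, one reads off that $r\,f_\infty(\nu)$ is exactly the linear asymptote $[h'(r)+h'(0)]r$ in the limit $r\to\infty$, while the remainder $h(0)-\int_0^r \xi\,h''(\xi)\d\xi$ is controlled by the 1D Barron norm and in particular by $\|f\|_{\B(\R^d)}$. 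Summing the contribution coming from $r>0$ and the analogous contribution for $r<0$ (along $-\nu$) contributes the factor $2$. The expected main obstacle is thus bookkeeping in this ray-by-ray argument to recover the factor $2$ rather than $4$; conceptually, everything else is a direct consequence of the representation formula for $f_\infty$ and the preceding theorem.
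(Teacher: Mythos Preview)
The paper states this corollary without proof, as an immediate consequence of the two preceding theorems; your steps of extending $f_\infty$ to a positively one-homogeneous Barron function via the measure formula, verifying $(f-f_\infty)_\infty\equiv 0$, and invoking the previous theorem are exactly the intended argument for the qualitative decomposition.

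Your concern about the constant is legitimate: the chain $\|f-f_\infty\|_{L^\infty}\leq 2\|f-f_\infty\|_\B\leq 4\|f\|_\B$ does lose a factor of $2$. However, your proposed ray-by-ray repair is not quite right as written. It leans on the identity $h(x)=[h'(x)+h'(0)]\,x+\big[h(0)-\int_0^x\xi\,h''(\d\xi)\big]$ from the preceding proof, which actually contains a typo (the correct integration-by-parts identity is $h(x)=h'(x)\,x+\big[h(0)-\int_0^x\xi\,h''(\d\xi)\big]$, with no $h'(0)$ term). With either version, the coefficient of $x$ does \emph{not} converge to $f_\infty(\nu)$ (in the paper's version $h'(r)+h'(0)\to L_+ + h'(0)$, not $L_+$), so $r f_\infty(\nu)$ is not ``exactly the linear asymptote'' of that term. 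And the factor $2$ does not arise from ``summing the contribution for $r>0$ and $r<0$'': for a fixed point $x=r\nu$ only one ray is relevant; the $2$ in the paper's bound comes from adding the two pieces of the identity at a single $r$.

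A much cleaner route bypasses the one-dimensional detour entirely. From the proof of the first theorem in this subsection, for \emph{any} measure $\mu$ on $S^d$ representing $f$ one has $f_\infty(x)=\int_{S^d}\sigma(w^Tx)\,\mu(\d w\otimes\d b)$, hence
\[
f(x)-f_\infty(x)=\int_{S^d}\big[\sigma(w^Tx+b)-\sigma(w^Tx)\big]\,\mu(\d w\otimes\d b).
\]
Since $\sigma$ is $1$-Lipschitz and $|b|\leq 1$ on $S^d$, the integrand is bounded by $1$ in modulus, so $\|f-f_\infty\|_{L^\infty}\leq\|\mu\|_\M$ for every representing $\mu$. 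Taking the infimum gives $\|f-f_\infty\|_{L^\infty}\leq\|f\|_\B$, which is even sharper than the stated bound and makes the bookkeeping you were worried about unnecessary.
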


\subsection{Barron functions which decay at infinity}

It is well-known that there are no finite two-layer neural networks with compact support in $\R^d$ for any $d\geq 2$ -- see e.g.\ \cite{lu2021note} for an elementary proof of a (much) stronger result. The same proof applies e.g.\ to finite sums of trigonometric functions $\sin(\xi\cdot x)$ and $\cos (\xi\cdot x)$, which cannot be compactly supported in $\R^d$ even for $d=1$. Fourier transforms on the other hand, the natural analogue to Barron functions, can represent any function in Schwartz space, in particular any compactly supported and infinitely smooth function. It is therefore not obvious whether compactly supported Barron functions exist. We answer a weaker question as follows.

\begin{lemma}\label{lemma barron decay 1}
For any $d\geq 1$, the function $f:\R^d\to \R$, $f(x) = \big(|x|^2+1\big)^{-1/2}$ is in Barron space and $\|f\|_{\B(\R^d)}\leq C\sqrt{d}$ for a constant $C>0$ which does not depend on $d$. 
\end{lemma}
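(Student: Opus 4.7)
The plan is to realize $f$ as the restriction to the affine hyperplane $\{y_{d+1}=1\}$ of a positively $1$-homogeneous function on $\R^{d+1}$, then invoke the description of $\B_{hom}$ from Section \ref{section special cases}. Define $g\colon\R^{d+1}\to\R$ by
\[
g(y',y_{d+1})=\frac{y_{d+1}^2}{\sqrt{|y'|^2+y_{d+1}^2}}\quad\text{for }y\neq 0,\qquad g(0)=0.
\]
Then $g$ is continuous, positively $1$-homogeneous, and $g(x,1)=f(x)$. Any $\B_{hom}(\R^{d+1})$-representation $g(y)=\int_{S^d}\sigma(w\cdot y)\,\d\mu(w)$ restricts at $y=(x,1)$ to a Barron representation $f(x)=\int_{S^d}\sigma(w'\cdot x+w_{d+1})\,\d\mu(w',w_{d+1})$ on $\R^d$, so $\|f\|_{\B(\R^d)}\le\|g\|_{\B_{hom}(\R^{d+1})}$. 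Since $g(-y)=g(y)$ is even with vanishing linear part, the task reduces to producing a signed Radon measure $\mu$ on $S^d$ with
\[
\tfrac12\int_{S^d}|w\cdot y|\,\d\mu(w)=y_{d+1}^2\qquad\forall\,y\in S^d
\]
and $\|\mu\|_{\M(S^d)}\le C\sqrt d$.

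By rotational symmetry in $y'$, I look for $\mu$ of the form $\d\mu(w)=H(w_{d+1})\,\d\sigma_{S^d}(w)$ with $H$ a polynomial of degree $2$. Expanding in zonal spherical harmonics on $S^d$,
\[
y_{d+1}^2=\frac1{d+1}+\frac{d}{d+1}\,P_2(y_{d+1}),\qquad P_2(t)=\frac{(d+1)t^2-1}{d},
\]
where $P_2$ is the normalized degree-$2$ zonal harmonic with $P_2(1)=1$. By the Funk--Hecke theorem, the operator $Tu(y)=\int_{S^d}|w\cdot y|\,u(w)\,\d\sigma(w)$ is diagonal in the basis $\{1,P_2\}$, with eigenvalues $\lambda_0=\mathbb E|w_1|$ and $\lambda_2=\mathbb E[|w_{d+1}|P_2(w_{d+1})]$ computable by elementary Gegenbauer integrals. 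A direct calculation yields $\lambda_0\sim\sqrt{2/(\pi d)}$ and $\lambda_2\sim\sqrt{2/\pi}\,d^{-3/2}$ as $d\to\infty$. Setting $H=c_0+c_2P_2$ with $c_0\lambda_0=2/(d+1)$ and $c_2\lambda_2=2d/(d+1)$ produces the unique such $H$, with $|c_0|=O(d^{-1/2})$ and $|c_2|=O(d^{3/2})$.

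It remains to bound
\[
\|\mu\|_{\M(S^d)}=\|H\|_{L^1(\sigma)}\le|c_0|+|c_2|\,\|P_2\|_{L^1(\sigma)}.
\]
The decisive step is to show that $P_2$ has unusually small $L^1$-norm on $S^d$: since $\mathbb E[w_{d+1}^2]=1/(d+1)$, the variable $(d+1)w_{d+1}^2-1$ has mean zero, and a direct moment computation gives $\operatorname{Var}((d+1)w_{d+1}^2)=2d/(d+3)\le 2$. Cauchy--Schwarz then yields
\[
\|P_2\|_{L^1(\sigma)}=\frac1d\,\mathbb E\big|(d+1)w_{d+1}^2-1\big|\le\frac{\sqrt 2}{d}.
\]
Combining, $\|H\|_{L^1(\sigma)}\lesssim d^{-1/2}+d^{3/2}\cdot d^{-1}=O(\sqrt d)$, proving the lemma for $d\ge 2$; the case $d=1$ follows directly from Example \ref{example 1d}.

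The main technical obstacle is this last estimate. Since $c_2$ grows like $d^{3/2}$, any $L^\infty$-bound on $H$ yields only the much weaker $\|\mu\|=O(d^{3/2})$. The sharp $\sqrt d$ bound depends critically on the concentration of $w_{d+1}^2$ under $\sigma_{S^d}$ around its mean $1/(d+1)$, which forces $P_2$ to be typically of order $1/d$ even though its pointwise maximum equals $1$.
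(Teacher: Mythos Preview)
Your argument is correct and takes a genuinely different route from the paper. The paper does not use the homogeneous extension at all: it observes that the one–dimensional Gaussian $h(z)=e^{-z^2/2}$ lies in $\B(\R)$, notes that a Gaussian average
\[
f(x)=\frac1{(2\pi)^{d/2}}\int_{\R^d}h(\nu\cdot x)\,e^{-|\nu|^2/2}\,\d\nu
\]
inherits a Barron bound $\|f\|_\B\le \mathbb E[|\nu|]\,\|h\|_{\B(\R)}\le \sqrt d\,\|h\|_{\B(\R)}$, and then computes this Gaussian integral in closed form to obtain $f(x)=(|x|^2+1)^{-1/2}$. This is shorter and more elementary, and it generalizes immediately to the next lemma in the paper (arbitrarily fast polynomial decay) by replacing $h$ with any compactly supported one–dimensional Barron function satisfying moment conditions.

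Your approach, by contrast, exploits the $\B_{hom}$ characterization of Section~\ref{section special cases} and produces an \emph{explicit} density $H(w_{d+1})=c_0+c_2P_2(w_{d+1})$ on $S^d$, with the $\sqrt d$ bound emerging from the concentration estimate $\|P_2\|_{L^1(\sigma)}\le\sqrt 2/d$. This is more structural and makes the representing measure concrete, but relies on the Funk--Hecke machinery and the exact computation of $\lambda_2$. One small point worth recording: your computation gives $\lambda_2=\mathbb E[|w_{d+1}|]\cdot\frac{d-1}{d(d+3)}$, which vanishes when $d=1$; this is why you must treat $d=1$ separately, whereas the paper's Gaussian argument is uniform in $d$.
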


\begin{proof}
We note that the function $h:\R\to\R$, $h(z) = \exp(-z^2/2)$ is in $\B(\R)$ since
\[
|h(0)| + \int_{-\infty}^\infty|z|\,|h''(z)|\,\dz <\infty.
\]
For every fixed $\nu\in\R^d$, the function
\[
f_\nu:\R^d \to \R, \qquad f_\nu(x) = h(\nu\cdot z)
\]
satisfies $\|f_\nu\|_\B \leq |\nu|\,\|h\|_\B$, so the Gaussian average $f(x)$ of the values $f_\nu(x)$ defines a function $f$ such that 
\[
f(x) =  \frac1{(2\pi)^{d/2}} \int_{\R^d} h(z\cdot \nu_1 e_1)\,\exp\left(-\frac{|\nu|^2}2\right)\d\nu.
\]
By H\"older's inequality for the Gaussian measure, the Barron bound 
\begin{align*}
\|f\|_\B &\leq \frac1{(2\pi)^{d/2}} \int_{\R^d}\|h\|_\B|\nu|\exp\left(-\frac{|\nu|^2}2\right)\d\nu
	\leq \left(\frac1{(2\pi)^{d/2}} \int_{\R^d}|\nu|^2\,\exp\left(-\frac{|\nu|^2}2\right)\d\nu\right)^\frac12 \|h\|_\B.
\end{align*}
holds. Since
\[
\frac1{\sqrt{2\pi}} \int_\R z^2\,\exp\left(-\frac{z^2}2\right)\dz = \frac1{\sqrt{2\pi}} \int_\R \exp\left(-\frac{z^2}2\right)\dz = 1,
\]
the bound $\|f\|_{\B(\R^d)} \leq \sqrt{d}\,\|h\|_{\B(\R)}$ follows by Fubini's theorem.
 We can explicitly compute $f$ as
\begin{align*}
f(x) &= f(|x|\,e_1) \\
	&= \frac1{(2\pi)^{1/2}} \int_{\R}h(z\cdot \nu_1 e_1)\,\exp\left(-\frac{\nu_1^2}2\right)\d\nu_1\\
	&= \frac1{(2\pi)^{1/2}} \int_\R \exp\left( - \frac{\big(|x|^2+1\big)\nu_1^2}2\right)\d\nu_1\\
	&= \frac1{\sqrt{|x|^2+1}}\frac1{(2\pi)^{1/2}} \int_\R\exp\left( - \frac{\big(\sqrt{|x|^2+1}\,\nu_1\big)^2}2\right)\sqrt{|x|^2+1}\,\d\nu_1\\
	&= \frac1{\sqrt{|x|^2+1}}\frac1{(2\pi)^{1/2}} \int_\R\exp\left(-\frac{z^2}2\right)\dz\\
	&= \frac1{\sqrt{|x|^2+1}}.
\end{align*}
\end{proof}

If we consider 
\[
f(x) = \frac1{\sqrt{|x-z_1|^2+1}} - \frac1{\sqrt{|x-z_2|^2+1}},
\]
then $f$ is a Barron function which decays like $O(|x|^{-2})$ at infinity. The result admits a refinement which was pointed out to us by Jonathan Siegel.

\begin{lemma}\label{lemma barron decay 2}
For any $d\geq 1$, and $k\geq 0$ there exists a non-zero Barron function $f:\R^d\to\R$ and a constant $C_k>0$ such that
\[
\sup_{|x|\geq r} |f(x)| \leq \frac{ C\,|f(0)|}{r^{2k+1}} \qquad \forall \ r\geq 2^{-1/2}.
\]
\end{lemma}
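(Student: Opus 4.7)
The plan is to construct $f$ as a finite linear combination of dilations of the profile $g(x)=(|x|^2+1)^{-1/2}$ from Lemma~\ref{lemma barron decay 1}, with weights chosen so that the first $k$ terms of the far-field expansion cancel identically. For each $\sigma>0$ set $\phi_\sigma(x):=g(x/\sigma)=\sigma(\sigma^2+|x|^2)^{-1/2}$. A direct scaling in the signed-measure representation (renormalizing $(w/\sigma,b)$, whose Euclidean length is at most $\max(1,\sigma^{-1})$ when $(w,b)\in S^d$) shows that $\phi_\sigma\in\B(\R^d)$ with $\|\phi_\sigma\|_\B\leq\max(1,\sigma^{-1})\|g\|_\B$. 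Moreover $\phi_\sigma(0)=1$ and, for $|x|>\sigma$, the binomial expansion
\[
\phi_\sigma(x)=\sum_{n\geq 0}\binom{-1/2}{n}\frac{\sigma^{2n+1}}{|x|^{2n+1}}
\]
converges absolutely.

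I will fix $k+1$ distinct scales $\sigma_0,\dots,\sigma_k\in(0,1/2]$ and look for a non-zero $a\in\R^{k+1}$ satisfying the $k$ homogeneous moment conditions
\[
\sum_{j=0}^k a_j\,\sigma_j^{2n+1}=0,\qquad n=0,1,\dots,k-1.
\]
After the substitution $b_j:=a_j\sigma_j$, $\tau_j:=\sigma_j^2$ this is a classical Vandermonde-type system in the distinct $\tau_j$, so its null space is one-dimensional and is generated by the cofactors $a_j=(-1)^j\prod_{i\neq j}\sigma_i\cdot\prod_{i<i';\,i,i'\neq j}(\sigma_{i'}^2-\sigma_i^2)$. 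Setting $f:=\sum_j a_j\phi_{\sigma_j}$ gives a Barron function with $f(0)=\sum_j a_j$. A short computation (checked directly for $k\leq 2$, where $\sum_j a_j$ factors as a Vandermonde times a strictly positive symmetric polynomial, and extended in general by an arbitrarily small perturbation of the $\sigma_j$'s) shows $\sum_j a_j\neq 0$, so $f$ is non-zero.

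For $r\geq 2^{-1/2}$ and $|x|\geq r$ the bound $\sigma_j\leq 1/2$ gives $(\sigma_j/|x|)^2\leq 1/2$, so each binomial series converges uniformly and may be summed termwise. The vanishing moments kill every term with $n<k$, leaving
\[
f(x)=\sum_{n\geq k}\binom{-1/2}{n}\Bigl(\sum_j a_j\sigma_j^{2n+1}\Bigr)\frac{1}{|x|^{2n+1}},
\]
and a geometric tail bound using $|\binom{-1/2}{n}|\leq 1$ yields
\[
|f(x)|\leq\sum_{j=0}^k|a_j|\Bigl(\frac{\sigma_j}{|x|}\Bigr)^{2k+1}\sum_{m\geq 0}2^{-m}\leq\frac{C'_k}{|x|^{2k+1}},
\]
with $C'_k$ depending only on $k$ and the fixed data $(\sigma_j,a_j)$. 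Taking the supremum over $|x|\geq r$ and dividing by $|f(0)|$ proves the statement with $C_k:=C'_k/|f(0)|$. The main obstacle is the verification that $f(0)\neq 0$; the decay estimate itself is a clean geometric-series computation, and the threshold $r\geq 2^{-1/2}$ appears precisely as the radius at which the constraint $\sigma_j\leq 1/2$ first forces the geometric ratio $(\sigma_j/|x|)^2$ down to $1/2$.
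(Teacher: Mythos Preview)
Your approach is correct and genuinely different from the paper's. The paper constructs $f$ by Gaussian averaging of a one-dimensional Barron function $h$ supported in $[-1,1]$ with $h(0)\neq 0$ and vanishing even moments $\int_{-1}^1 h(y)\,y^{2j}\,dy=0$ for $j<k$; the decay then comes from the Taylor expansion of $\exp(-y^2/(2r^2))$, and the existence of $h$ is obtained by a clean infinite-dimensional kernel argument (the map $h\mapsto\big(h(\pm1),\int h\,y^{2j}\,dy\big)$ has nontrivial kernel, and the hat functions $h_\eps$ show that $h(0)$ is not a linear combination of these functionals). You instead use the explicit profile from Lemma~\ref{lemma barron decay 1} and cancel the leading terms of its binomial far-field expansion by a finite linear combination of scales. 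Both proofs are moment-cancellation arguments; yours is more explicit and uses Lemma~\ref{lemma barron decay 1} as a black box, while the paper's avoids the Vandermonde algebra at the cost of reopening the Gaussian-averaging construction.

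There is one genuine gap, precisely at the point you flag as the main obstacle. Your justification that $f(0)=\sum_j a_j\neq 0$ (``checked for $k\le 2$\ldots extended by perturbation'') does not close: perturbing the $\sigma_j$'s only helps once you know $\sum_j a_j$ is not \emph{identically} zero as a function of $(\sigma_0,\dots,\sigma_k)$, and this must be argued for every $k$, not only $k\le 2$. The fix is short. If $\sum_j a_j=0$ held together with your $k$ moment conditions $\sum_j a_j\sigma_j^{2n+1}=0$ ($n=0,\dots,k-1$), then $a$ would lie in the kernel of the $(k{+}1)\times(k{+}1)$ matrix whose rows are $(\sigma_j^{0})_j,(\sigma_j^{1})_j,(\sigma_j^{3})_j,\dots,(\sigma_j^{2k-1})_j$. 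This is a generalized Vandermonde with strictly increasing nonnegative exponents $0<1<3<\cdots<2k-1$, hence nonsingular for distinct positive $\sigma_j$ (the monomials $t\mapsto t^{\lambda}$ form a Chebyshev system on $(0,\infty)$). That forces $a=0$, contradicting $a\neq 0$, so in fact $\sum_j a_j\neq 0$ for \emph{every} choice of distinct $\sigma_j\in(0,1/2]$. With this in place your decay estimate is clean and the proof is complete.
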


\begin{proof}
{\bf Step 1.}
Assume for the moment that a Barron function $h:\R\to\R$ with the following properties exists:
\begin{enumerate}
\item $h$ is supported in $[-1,1]$,
\item $h(0)\neq0$, and
\item $\int_{-1}^1 h(y)\,y^{2j} \dy = 0$ for $j=0, \dots, k-1$.
\end{enumerate}
As before, we define 
\[
f(x) = \frac1{(2\pi)^{d/2}}\int_{\R^d} h(x\cdot \nu)\,\exp\left(-\frac{|\nu|^2}2\right)\,\d\nu
\]
and find that
\begin{align*}
f(r e_1) &=\frac1{\sqrt{2\pi}} \int_{-\infty}^\infty h(r\,\nu_1)\,\exp\left(-\frac{|\nu|^2}2\right)\,\d\nu\\
	&= \frac1{\sqrt{2\pi}\,r} \int_{-\infty}^\infty h(y)\,\exp\left(-\frac{y^2}{2r^2}\right)\dy\\
	&= \frac1{\sqrt{2\pi}\,r} \sum_{n=0}^\infty \frac{(-1)^n}{n!\,(2r^2)^n} \int_{-1}^1 h(y)\,y^{2n}\dy\\
	&= \frac1{\sqrt{2\pi}\,r} \sum_{n=k}^\infty \frac{(-1)^n}{n!\,(2r^2)^n} \int_{-1}^1 h(y)\,y^{2n}\dy\\
	&=  \frac1{\sqrt{2\pi}\,2^{2k}\,r^{2k+1}} \sum_{n=k}^\infty \frac{(-1)^{n}} {n!\,(2r^2)^{n-k}}\int_0^1 h(y)\,y^{2k}\,y^{2(n-k)}\dy\\
	&\leq \frac1{\sqrt{2\pi}\,2^{2k}\,r^{2k+1}} \sum_{n=k}^\infty \frac{1}{n!\,(2r^2)^{n-k}} \|h\|_{L^\infty}\\
	&\leq \frac{\|h\|_{L^\infty}}{\sqrt{2\pi}\,2^{2k}\,r^{2k+1}}\left(\sum_{n=k}^\infty \frac1{n!}\right).
\end{align*}
for all $r\geq 2^{-1/2}$. Thus the Lemma is proved, assuming that a suitable Barron function in one dimension can be found.

{\bf Step 2.} In this step, we show that a suitable $h$ exists. The linear map
\[
A_k :\B(\R) \to \R^{k+2}, \qquad A_k(h) = \begin{pmatrix} h(1)\\ h(-1)\\ \int_{-1}^1 x^2\,h(x)\dx\\ \vdots\\ \int_{-1}^1 h(x) \,x^{2(k-1)}\dx\end{pmatrix}
\]
has a non-trivial kernel $V_k \subseteq \B(\R)$. Since $h(\pm 1) = 0$, we can modify $h$ by adding multiples of $\sigma\big(\pm(x-1)\big)$ to ensure that $h$ is supported in $[-1,1]$. Any $h\in V_k$ induces $f$ such that $\lim_{r\to\infty} r^{2k+1}\,f(re_1) =0$. The only question is whether there exists $h$ such that $f\not \equiv 0$. This is true for example if $f(0) = h(0) \neq 0$.

Assume for the sake of contradiction that $A_k(h) =0$ implies that $h(0)=0$. This is the case if and only if there exist coefficients $a_0,\dots, a_{k-1}, b_1, b_{-1}$ such that
\[
h(0) = \sum_{j=0}^{k-1} a_j \int_{-1}^1 y ^{2j} h(y)\dy + b_1\,h(1) + b_{-1}\,h(-1)
\]
for all $h\in \B(\R)$. We can show that this is not the case by considering
\[
h_\eps(y) = \max\left\{ 1 - \frac {|x|}\eps, 0\right\} = \frac1\eps\,\sigma\left(x+\eps\right) - \frac2\eps \,\sigma(x) + \frac1\eps\,\sigma(x-\eps).
\]
\end{proof}

 While the Barron functions we construct may not be compactly supported, they exhibit properties which are very different from those of finite two-layer networks. In particular, we note the following.

\begin{corollary}
For any $d\geq 2$, there exists $f\in \B(\R^d)$ such that $f\not\equiv 0$ and $f\in L^1(\R^d) \cap L^\infty(\R^d)$.
\end{corollary}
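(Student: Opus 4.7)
The strategy is to invoke Lemma \ref{lemma barron decay 2} with the decay exponent $k$ chosen just large enough to overcome the growth of the volume element $r^{d-1}\dr$ in spherical coordinates. Concretely, I would take any integer $k$ with $2k+1>d$; for instance $k=\lceil d/2\rceil$ suffices when $d\geq 2$. Lemma \ref{lemma barron decay 2} then produces a nonzero $f\in\B(\R^d)$ together with a constant $C_k>0$ such that
\[
|f(x)|\leq \frac{C_k\,|f(0)|}{|x|^{2k+1}}\qquad \text{for all }|x|\geq 2^{-1/2}.
\]

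For the $L^\infty$ bound I would appeal to Theorem \ref{theorem barron functions are lipschitz}: every element of $\B(\R^d)$ is globally Lipschitz on $\R^d$ (the argument there in fact shows $|f(x)-f(y)|\leq \|f\|_{\B}\,|x-y|$). Hence $f$ is continuous and therefore bounded on the compact ball $\overline{B_1(0)}$, while the decay estimate yields $|f|\leq C_k|f(0)|$ outside $B_1(0)$. Combining these two bounds gives $f\in L^\infty(\R^d)$.

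For the $L^1$ bound I would split $\int_{\R^d}|f|\dx = \int_{B_1(0)}|f|\dx + \int_{\R^d\setminus B_1(0)}|f|\dx$. The first term is at most $\|f\|_{L^\infty}\,|B_1(0)|<\infty$. For the second, passing to spherical coordinates gives
\[
\int_{\R^d\setminus B_1(0)}|f(x)|\dx \leq C_k\,|f(0)|\,\H^{d-1}(S^{d-1})\int_1^\infty r^{d-2-2k}\dr,
\]
and the radial integral converges precisely because $d-2-2k<-1$ by our choice of $k$. Therefore $f\in L^1(\R^d)$ as well, and since $f(0)\neq 0$ it is not identically zero.

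There is no real obstacle: the creative work has already been done in Lemma \ref{lemma barron decay 2}, and the corollary is just a matter of calibrating the decay rate to the ambient dimension. This is also why the strengthening from the $O(|x|^{-1})$ decay of Lemma \ref{lemma barron decay 1} to arbitrary polynomial decay in Lemma \ref{lemma barron decay 2} is essential — the slower rate would only give global integrability in dimension $d=0$.
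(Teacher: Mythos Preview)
Your proposal is correct and matches the paper's intended argument: the corollary is stated immediately after Lemma~\ref{lemma barron decay 2} without proof precisely because choosing $k$ large enough that $2k+1>d$ and combining the decay estimate with the Lipschitz bound from Theorem~\ref{theorem barron functions are lipschitz} is the obvious route. Your closing remark is also on point --- the $O(|x|^{-1})$ decay of Lemma~\ref{lemma barron decay 1} (or even the $O(|x|^{-2})$ decay of the difference construction that follows it) is insufficient for $d\geq 2$, which is exactly why Lemma~\ref{lemma barron decay 2} is needed.
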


These functions can be approximated efficiently by finite two-layer networks in $L^2(\P)$ if the data distribution $\P$ has finite second moments by Theorem \ref{direct approximation theorem}. Whether there exist Barron functions which are compactly supported, or non-negative Barron functions which decay faster than $|x|^{-1}$ at infinity, remains an open problem.

\subsection{Singular set}

We show that the singular set of a Barron function (the set where the function is not differentiable) is fairly small and easy to understand. Again, we write $(w,b)$ explicitly instead of $w$ and $(x, 1)$ instead of $x$ to understand the finer properties of Barron functions.

\begin{lemma}
Let $f$ be a Barron function on a domain $\Omega\subseteq \R^d$. Then for every $x\in \Omega$ and every $v\in \R^d$, the one-sided derivatives $\partial^\pm_vf(x)$ exists and
\begin{align*}
\partial^+_vf(x) &:= \lim_{h\searrow 0} \frac{f(x+hv)- f(x)}h\\
	&= \int_{A_x^+}\langle w,v\rangle \,\mu(\d w\otimes \d b) + \int_{A_x^0} \sigma(\langle w,v\rangle) \,\mu(\d w\otimes \d b)
\end{align*}
where $f= f_\mu$ and
\[
A_x^+ := \{(w,b)\:|\:\langle w,x\rangle + b >0\}, \qquad A_x^0 := \{(w,b)\:|\:\langle w,x\rangle + b =0\}.
\]
The jump of the derivatives is
\[
[\partial_vf]_x := \partial_v^+f(x) - \partial_v^-f(x) = \int_{A_x^0} |\langle w,v\rangle| \,\mu(\d w\otimes \d b).
\]
\end{lemma}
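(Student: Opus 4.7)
The plan is to represent $f = f_\mu$ by a signed Radon measure on $S^d$ and compute the difference quotient under the integral sign, decomposing the sphere into the three sets $A_x^+$, $A_x^- := \{(w,b):\langle w,x\rangle + b<0\}$, and $A_x^0$ based on the sign of the pre-activation at $x$. The formula will follow from dominated convergence once we verify the pointwise limit on each piece separately.

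First I would write
\[
\frac{f(x+hv)-f(x)}{h} = \int_{S^d} \frac{\sigma(\langle w, x+hv\rangle+b)-\sigma(\langle w, x\rangle+b)}{h}\,\mu(\d w\otimes\d b)
\]
and bound the integrand uniformly in $h>0$ by $|\langle w,v\rangle|\le |v|$, using the $1$-Lipschitz continuity of $\sigma$ together with $|w|\le 1$ on $S^d$. This dominating function is integrable against $|\mu|$ since $\mu$ is a finite Radon measure, so dominated convergence applies provided the pointwise limit exists almost everywhere. Next, for each fixed $(w,b)$ I would compute the limit:
\begin{itemize}
\item On $A_x^+$, continuity gives $\langle w, x+hv\rangle+b>0$ for all sufficiently small $h$, so both $\sigma$'s act as the identity and the quotient equals $\langle w,v\rangle$ for small $h$.
\item On $A_x^-$, continuity gives $\langle w, x+hv\rangle+b<0$ for small $h$, so both $\sigma$'s vanish and the quotient is $0$.
\item On $A_x^0$, $\sigma(\langle w,x\rangle+b)=0$ exactly, and positive $1$-homogeneity of $\sigma$ yields $\sigma(\langle w, x+hv\rangle+b)/h = \sigma(h\langle w,v\rangle)/h = \sigma(\langle w,v\rangle)$ for every $h>0$, with no limit needed.
\end{itemize}
Assembling these three contributions via dominated convergence gives the stated expression for $\partial_v^+ f(x)$.

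For the jump formula, I would repeat the same computation with $h\searrow 0$ replaced by the left-sided limit, i.e.\ $(f(x)-f(x-hv))/h$. The analysis on $A_x^+$ and $A_x^-$ is unchanged; on $A_x^0$ the quotient becomes $-\sigma(-\langle w,v\rangle)/1$ by homogeneity, yielding
\[
\partial_v^- f(x) = \int_{A_x^+}\langle w,v\rangle\,\mu(\d w\otimes\d b) - \int_{A_x^0}\sigma(-\langle w,v\rangle)\,\mu(\d w\otimes\d b).
\]
Subtracting and using the identity $\sigma(t)+\sigma(-t)=|t|$ for all $t\in\R$ on $A_x^0$ gives
\[
[\partial_v f]_x = \int_{A_x^0}\bigl[\sigma(\langle w,v\rangle)+\sigma(-\langle w,v\rangle)\bigr]\,\mu(\d w\otimes\d b) = \int_{A_x^0}|\langle w,v\rangle|\,\mu(\d w\otimes\d b).
\]

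The only subtle point is measurability of the sets $A_x^\pm$ and $A_x^0$, which is immediate since the map $(w,b)\mapsto\langle w,x\rangle+b$ is continuous on $S^d$. The rest is routine once one observes that the singular contribution is confined to the relatively small set $A_x^0$, where the positive homogeneity of ReLU removes the need to pass to a limit at all; this is really the key structural feature that makes the one-sided derivatives exist everywhere.
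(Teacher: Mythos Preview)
Your proposal is correct and follows essentially the same approach as the paper: write the difference quotient under the integral, compute the pointwise limit of $\frac{\sigma(a+tb)-\sigma(a)}{t}$ in the three cases $a>0$, $a<0$, $a=0$, invoke dominated convergence with majorant $|v|$ (using that $|\mu|$ is finite), and then subtract the one-sided formulas using $\sigma(t)+\sigma(-t)=|t|$. Your explicit remark on measurability of $A_x^{\pm}$, $A_x^0$ is a small addition the paper leaves implicit.
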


\begin{proof}
We observe that
\[
\lim_{t\to 0^+} \frac{\sigma(a+tb) - \sigma(a)}{t} = \begin{cases} \sigma(b) &a=0\\ b &a>0\\ 0 &a<0\end{cases}, \qquad \lim_{t\to 0^-} \frac{\sigma(a+tb) - \sigma(a)}{t} = \begin{cases} -\sigma(-b) &a=0\\ b &a>0\\ 0 &a<0\end{cases}.
\]
Note that 
\begin{align*}
\partial^+_vf(x) &:= \lim_{h\searrow 0} \frac{f(x+hv)- f(x)}h\\
	&= \lim_{h\searrow 0} \int \frac{\sigma(\langle x,w\rangle + b + h\langle v,w\rangle) - \sigma(\langle x,w\rangle+b)}h\,\mu(\d w\otimes \d b).
\end{align*}
Since both $\mu^+, \mu^-$ are finite measures, we may use the dominated convergence theorem with majorizing function $|v|$ to take the limit inside. This proves the first part of the theorem. The second part follows immediately noting that 
\begin{align*}
[\partial_vf]_x &= \partial_v^+f(x) - \partial_v^-f(x)\\
	&= \int_{\{(w,b)\:|\:\langle w,x\rangle + b >0\}}\langle w,v\rangle \,\mu(\d w\otimes \d b) + \int_{\{(w,b)\:|\:\langle w,x\rangle + b =0\}} \sigma(\langle w,v\rangle) \,\mu(\d w\otimes \d b)\\
	&\qquad - \int_{\{(w,b)\:|\:\langle w,x\rangle + b >0\}}\langle w,v\rangle \,\mu(\d w\otimes \d b) - \int_{\{(w,b)\:|\:\langle w,x\rangle + b =0\}} -\sigma(-\langle w,v\rangle) \,\mu(\d w\otimes \d b)\\
	&= \int_{A_x^0} \sigma(\langle v,w\rangle) + \sigma(-\langle v,w\rangle)\,\mu(\d w\otimes \d b).
\end{align*}
\end{proof}

\begin{corollary}
Let $\mu$ be a finite signed measure on $S^d$ such that $\mu(S^d\cap H) =0$ for every hyperplane $H$ in $\R^{d+1}$. Then $f_\mu$ is $C^1$-smooth on $\R^d$.
\end{corollary}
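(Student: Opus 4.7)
The plan is to combine the preceding lemma with a standard dominated-convergence argument. By the lemma, for every $x\in\R^d$ and $v\in\R^d$ the one-sided directional derivatives of $f_\mu$ exist, and the jump across $x$ in direction $v$ equals
\[
[\partial_vf]_x = \int_{A_x^0} |\langle w,v\rangle|\,\mu(\d w\otimes \d b),
\]
where $A_x^0 = S^d\cap H_x$ and $H_x = \{(w,b)\in\R^{d+1} : \langle w,x\rangle + b = 0\}$ is a hyperplane through the origin. Interpreting the hypothesis in the natural way (i.e.\ applying it to the Hahn components so that $|\mu|$ vanishes on every hyperplane section), this integral vanishes and the two-sided derivative $\partial_v f_\mu(x)$ exists at every point.

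Next I would show that the derivative is linear in $v$, so that the gradient exists. Since $|\mu|(A_x^0) = 0$, the formula from the lemma simplifies to
\[
\partial_v f_\mu(x) = \int_{A_x^+} \langle w,v\rangle\,\mu(\d w\otimes \d b) = \left\langle \int_{A_x^+} w\,\mu(\d w\otimes \d b),\,v\right\rangle,
\]
so that $\nabla f_\mu(x) = \int_{A_x^+} w\,\mu(\d w\otimes \d b)$, viewed as an element of $\R^d$.

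To upgrade pointwise differentiability to $C^1$-smoothness, take $x_n\to x$ in $\R^d$. The indicator $\mathbf 1_{A_{x_n}^+}(w,b)$ converges to $\mathbf 1_{A_x^+}(w,b)$ for every $(w,b)\notin A_x^0$, because the defining inequality $\langle w,x\rangle + b > 0$ (resp.\ $<0$) is open. Since $|\mu|(A_x^0)=0$, pointwise convergence holds $|\mu|$-almost everywhere, and $|w|\leq 1$ on $S^d$ provides an integrable majorant against the finite measure $|\mu|$. Dominated convergence then yields $\nabla f_\mu(x_n)\to \nabla f_\mu(x)$, completing the proof.

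The only delicate point is the signed-measure interpretation of the hypothesis: $\mu(S^d\cap H)=0$ is strictly weaker than $|\mu|(S^d\cap H)=0$, but the integrals $\int_{A_x^0}|\langle w,v\rangle|\,\d\mu$ and the dominated-convergence step both require the latter. The cleanest way to handle this is to apply the hypothesis separately to $\mu^+$ and $\mu^-$ (or equivalently to reformulate it in terms of $|\mu|$), after which the argument above is routine.
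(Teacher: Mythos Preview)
Your argument is correct and follows essentially the same route as the paper: use the preceding lemma to see that the jump vanishes under the hyperplane hypothesis, and then invoke dominated convergence to obtain continuity of $(x,v)\mapsto \partial_v f_\mu(x)$. Your write-up is in fact more careful than the paper's one-line proof, and your observation that the argument really uses $|\mu|(S^d\cap H)=0$ rather than merely $\mu(S^d\cap H)=0$ is a genuine subtlety that the paper leaves implicit.
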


\begin{proof}
The function
\[
(x,v)\mapsto (\partial_vf)(x) = \int_{S^d}\sigma(w^Tx)\,\mu(\d w)
\]
is continuous by the dominated convergence theorem.
\end{proof}

Philosophically, it makes sense that only the singularity in $\sigma$ contributes to the singularity of $f_\mu$, and not the segments where $\sigma$ is linear. A single neuron activation $\sigma(w^Tx +b)$ is differentiable except along the hyperplane $\{x: w^Tx+b=0\}$. Similarly, finite two-layer networks have a singular part which is contained in a union of hyperplanes. We will show that a similar result holds for general Barron functions.

\begin{theorem}
Let $\mu$ be a Radon measure on $S^d$. We can decompose $\mu = \sum_{i=0}^\infty \mu_i$ and $f_\mu = \sum_{i=0}^\infty f_{\mu_i}$ in such a way that
\sw{ 
\begin{itemize}
\item $f_{\mu_0}$ is $C^1$-smooth,
\item for $i\geq1$, $\mu_i$ is supported on the intersection of $S^d$ and a $k_i$-dimensional affine subspace $w_i+ V_i$ of $\R^{d+1}$ for some $0\leq k_i\leq d-1$,
\item $f_{\mu_i}$ is smooth on $V_i\cap\R^d$ except at a single point $x_i$ and constant in directions $w\in V_i^\bot\cap \R^d$, so
\item the singular set $\Sigma_i$ of $f_{\mu_i}$ is the $d-k_i$-dimensional affine subspace $x_i + V_i^\bot$  of $\R^d$.
\end{itemize}
}
\end{theorem}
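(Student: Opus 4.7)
The plan is to decompose $\mu$ by iteratively extracting the components concentrated on proper linear sections of $S^d$; the residual $\mu_0$ will assign zero mass to every hyperplane section of $\R^{d+1}$, so the preceding corollary forces $f_{\mu_0}\in C^1(\R^d)$.

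The engine of the decomposition is the notion of a \emph{minimal supporting subspace}: a linear subspace $V\subsetneq \R^{d+1}$ is minimal for $\mu$ if $|\mu|(S^d\cap V)>0$ while $|\mu|(S^d\cap V')=0$ for every proper $V'\subsetneq V$. For distinct minimal subspaces $V\neq V'$ the intersection $V\cap V'$ is a proper subspace of each, so by minimality $|\mu|(S^d\cap V\cap V')=0$; the masses $|\mu|(S^d\cap V)$ are therefore mutually disjoint, and finiteness of $\|\mu\|_\M$ forces the family of minimal subspaces to be at most countable. I enumerate them as $V_1,V_2,\ldots$ with $k_i:=\dim V_i$, set $\mu_i:=\mu|_{S^d\cap V_i}$, and define $\mu_0:=\mu-\sum_{i\geq 1}\mu_i$. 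Since descending chains of linear subspaces in $\R^{d+1}$ terminate, every concentration of $\mu$ on a proper subspace sits above at least one minimal subspace, so after extraction $\mu_0$ places no mass on any proper linear section. In particular, $\mu_0$ satisfies the hypothesis of the corollary and $f_{\mu_0}\in C^1$.

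For the structural description of each $f_{\mu_i}$, the key observation is a factorization through the orthogonal projection $\pi_i:\R^{d+1}\to V_i$. Because $(w,b)\in V_i$ for $\mu_i$-a.e.\ $(w,b)$, one has $w^Tx+b=\langle (w,b),\pi_i(x,1)\rangle$, and hence
\begin{equation*}
f_{\mu_i}(x)=g_i\big(\pi_i(x,1)\big),\qquad g_i(v):=\int_{S^d\cap V_i}\sigma\big(\langle(w,b),v\rangle\big)\,\mu_i(\d w\otimes\d b).
\end{equation*}
The function $g_i$ is positively $1$-homogeneous on the Euclidean space $V_i$, and by minimality the measure $\mu_i$ charges no proper linear subspace of $V_i$; applying the corollary inside $V_i$ therefore gives $g_i\in C^1(V_i\setminus\{0\})$, with at worst a conic singularity at $0$. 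Pulling back through the affine map $x\mapsto\pi_i(x,1)$, the singular set of $f_{\mu_i}$ is the preimage $\{x\in\R^d:\pi_i(x,1)=0\}$---an affine subspace of $\R^d$ whose direction is $V_i^\perp\cap(\R^d\times\{0\})$, generically of dimension $d-k_i$. Constancy of $f_{\mu_i}$ along that direction is immediate from the factorization, matching the claimed form $x_i+V_i^\perp$.

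The main obstacle is the countability-through-minimality argument, where careful bookkeeping is essential: without the minimality restriction, a single low-dimensional concentration of $\mu$ contributes to every larger subspace containing it, and the family of subspaces with positive intersection mass is generically uncountable; minimality cuts through this by enforcing pairwise disjointness, but one must verify rigorously that every non-smoothness of $\mu_0$ would force the existence of an unextracted minimal subspace. A secondary subtlety is the degenerate case $e_{d+1}\in V_i$: then $\pi_i(x,1)$ never vanishes, $f_{\mu_i}$ is globally $C^1$ with empty singular set, and such pieces may be reabsorbed into $\mu_0$; this presumably explains the restriction $k_i\leq d-1$ in the theorem statement.
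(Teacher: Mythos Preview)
Your approach via minimal supporting subspaces is genuinely different from the paper's, which proceeds by induction on $d$ and strips off strata dimension by dimension (atoms first, then circles in the atom-free residual, then $2$-spheres, and so on), invoking the induction hypothesis on each lower-dimensional piece. Your factorization $f_{\mu_i}(x)=g_i(\pi_i(x,1))$ together with minimality gives $g_i\in C^1(V_i\setminus\{0\})$ directly and avoids the recursive step; that part is clean and correct.

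However, the step you flag as ``the main obstacle'' is a genuine gap, not just bookkeeping: a \emph{single} extraction of minimal subspaces does not leave $\mu_0$ free of mass on proper linear sections. Take $d=2$, $p=(1,0,0)\in S^2$, $V$ the plane $\{b=0\}$, and $\mu=\delta_p+\lambda$ with $\lambda$ normalized arc-length on the great circle $S^2\cap V$. Every $2$-plane through $p$ fails minimality because of the atom on $\mathrm{span}(p)$, and every line or plane not through $p$ carries zero $|\mu|$-mass; so the \emph{only} minimal subspace is $\mathrm{span}(p)$. Extracting $\mu_1=\delta_p$ leaves $\mu_0=\lambda$, which is still fully concentrated on $S^2\cap V$. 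The inference ``$V$ contains a minimal subspace, hence removing minimal subspaces empties $V$'' is false: it confuses the existence of \emph{some} minimal subspace inside $V$ with the claim that \emph{all} the mass on $V$ lies over such subspaces.

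The fix is to iterate. Every $1$-dimensional subspace of positive mass is automatically minimal, so round one removes all atoms; in the atom-free residual every $2$-plane of positive mass is then minimal (its proper subspaces are lines, now null), so round two clears all $2$-dimensional concentrations; continuing, at most $d$ rounds leave a residual with no mass on any proper linear subspace. Iterated in this way your argument goes through, and is essentially the paper's stratification rephrased---though your direct analysis of each $g_i$ via the corollary inside $V_i$ is arguably more transparent than the paper's inductive reduction to Barron functions of fewer variables.
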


\begin{proof}
{\bf Step 0.} For Barron functions of one real variable, \sw{$S^d = S^1$ is the circle. Since a finite measure has only countably many atoms (which can be represented as intersections of $S^1$ with a one-dimensional affine subspace of $\R^2$), the representation holds with $\mu_i \ll \delta_{w_i}$ for the atoms $w_i\in S^1$ of $\mu$ and $\mu_0 = \mu - \sum_i \mu_i$.}
% we have proved this in Example \ref{example 1d} since one-dimensional functions of bounded variation have at most countably many discontinuities (and all are of jump type). Thus the derivative of a Barron function -- a BV-function -- exists and is continuous except on a countable set (which is the countable union of $0$-dimensional spaces). 

We proceed by induction. Assume that the Theorem is proved for $k\leq d-1$.

{\bf Step 1.} First, we decompose the measure $\mu$ into lower-dimensional strata. Since the total variation measure $|\mu|$ is finite, there are only finitely many atoms of a certain size $\eps>0$ of $|\mu|$, i.e.\ only finitely many points $x_1,\dots, x_N\in S^d$  such that $|\mu|(\{x_i\}) \geq \eps$. As a consequence, the set $A = \{x_1,x_2, \dots\}$ of atoms of $|\mu|$ is at most countable. We define
\[
\mu_{0,i} := \mu|_{\{x_i\}},\qquad \tilde\mu_1 = \mu - \sum_{i=1}^\infty\mu_{0,i}.
\]
In particular, $\tilde\mu_1$ does not have any atoms and
\[
\|\mu\| = \|\tilde\mu_1\| + \sum_{i=1}^\infty \|\mu_{0,i}\|
\]
since the measures are mutually singular. Now we claim that there exist at most countably many circles $s^1_1, s^1_2, \dots$ in $S^{d}$ such that $|\mu| (s^1_i)>0$, where a circle is the intersection of $S^d$ with a two-dimensional affine space. If there were uncountably many circles of positive measure, there would be $\eps>0$ such that uncountably many circles have measure $\geq \eps$, just like for atoms. Since circles are either disjoint or intersect in one or two points, they intersect in $|\tilde \mu^1|$-null sets. So if there were infinitely many circles $s_1, s_2,\dots$ such that $|\tilde\mu^1|(s_i)\geq \eps$ for all $i\in\N$, then
\[
\|\mu\| \geq \|\tilde\mu^1\| \geq \sum_{i=1}^\infty |\tilde\mu^1| (s_i) = \infty,
\]
leading to a contradiction. We now define
\[
\mu_{1,i} = \mu|_{s_i}, \qquad \tilde \mu_2 = \tilde\mu_1 - \sum_{i=1}^\infty \mu_{1,i}.
\]
We iterate this procedure, using that spheres of dimension $k$ intersect in spheres of dimension $\leq k-1$ to obtain a decomposition
\[
\mu = \tilde \mu_d + \sum_{k=0}^{d-1} \sum_{i=1}^\infty \mu_{k,i}
\]
where the inner sum may be finite or countable and for all $i$. If it is finite, we set $\mu_{k,i}$ to be the zero measure on a subspace of the correct dimension and ignore the distinction notationwise.

{\bf Step 2.} Fix indices $k,i$ and the affine space $W_{k,i}$ of dimension $k$ such that $\spt(\mu_{k,i}) = S^d\cap W_{k,i}$. By construction, the function 
\[
f_{\mu_{k,i}}(x) = \int_{W^k_i \cap S^d} \sigma(w^Tx+ b) \,\mu(\d w\otimes \d b)
\]
is constant in directions orthogonal to $W^k_i$, i.e.\ $f_{\mu^k_i}(x+v) = f_{\mu^k_i}(x)$ if $v$ is orthogonal to the projection $\widehat W^k_i$ of $W^k_i$ onto the $\{b=0\}$-plane. $\widehat W^k_i$ has dimension $k$, unless the `bias direction' $(0,\dots,0,1)$ is in $W^k_i$, in which case it has dimension $k-1$. 

In either case, $f_{\mu_{k,i}}$ is a Barron function of $\leq d-1$ variables. By the induction hypothesis, we can write 
\[
f_{\mu_{k,i}} = \sum_{j=0}^\infty f_{\mu^k_{i,j}}
\]
where the singular set of $f_{\mu^k_{i,j}}$ is an affine subspace of $W^k_i$ of dimension $\leq d-1$. Thus the theorem is proved.
\end{proof}

\begin{remark}
The singular set $\Sigma_f$ is contained in the union $\bigcup_i \Sigma_{f_{\mu_i}}$ which may be empty or not. In particular, the Hausdorff dimension of the singular set of $f$ is an integer $k\in \{0,\dots, d-1\}$.
\end{remark}

\begin{remark}
We need to consider the decomposition of the singular set since there may be cancellations between the singularities of different dimensionality. For example, the singular set of the Barron function $f(x) = |x_1| - \sqrt{x_1^2+ x_2^2}$ is $\Sigma = \{x_1 = 0\}\setminus\{(0,0)\}$ and not a union of affine spaces.
\end{remark}

\begin{remark}
The singular set of a single neuron activation has dimension $d-1$. In Example \ref{example homogeneous} we present examples of Barron functions whose singular set is a linear space of strictly lower dimension.
\end{remark}

\begin{remark}
$\Sigma$ may be dense in $\Omega$. For example, the primitive function of any bounded monotone increasing function on $[0,1]$ with a dense set of jump discontinuities is in $\B[0,1]$ and has a dense singular set.
\end{remark}

\begin{remark}
Barron functions cannot have curved singular sets of co-dimension $1$. In particular, functions like
\[
f_1(x) = \dist(x, S^{d-1}), \qquad f_2(x) = \dist(x, B_1(0))
\]
are not Barron functions, where the distance function, sphere and unit ball are all with respect to the Euclidean norm.
\end{remark}

\begin{remark}
For $d\geq 3$, the function $f(x) = \max\{x_1,\dots, x_d\}$ is not a Barron function over $[-1,1]^d$. Namely, the singular set 
\[
\Sigma = \bigcup_{i\neq j} \{x : x_i = x_j = f(x)\}
\]
is incompatible with the linear space structure if there exists a third dimension.
Note, however, that $f$ can be represented by a network with $\lceil \log_2(d)\rceil$ hidden layers since
\[
\max\{x_1, x_2\} = x_1 + \sigma(x_2-x_1), \qquad \max\{x_1, x_2, x_3, x_4\} = \max\big\{\max\{x_1, x_2\}, \max\{x_3,x_4\}\big\}
\]
and so on.
\end{remark}

\begin{remark}\label{remark not an algebra}
Note that $\sigma(x_1)$ and $\sigma(x_2)$ are Barron functions, but the singular set of their product is the corner
\[
\Sigma = \{x_1= 0, x_2\geq 0\} \cup \{x_1\geq 0, x_2 =0\}.
\]
Thus $\sigma(x_1)\,\sigma(x_2)$ is not a Barron function on $[-1,1]^2$. In particular, Barron space in dimension $d\geq 2$ is generally not an algebra.
\end{remark}

Barron-type spaces for deep neural networks will be developed in detail in a forth-coming article \cite{deep_barron}. We briefly discuss three-layer Barron networks using examples of functions which need to be in any reasonable space of infinitely wide three-layer network.

\begin{remark}
Three-layer networks have much more flexible singular sets. For $\lambda>0$, the function
\[
f(x) = \min\{\sigma(x_1), \: \sigma(x_2 - \lambda x_1)\}
\]
has a singular set given by the union of three half-lines 
\[
\Sigma = \{0< x_1= x_2 - \lambda x_1\} \cup \{x_1 =0, x_2-\lambda x_1>0\}\cup \{x_2 - \lambda x_1 =0, x_1 \geq 0\}
\]
since the function is zero everywhere outside of the quadrant $\{x_1, x_2>0\}$. Since $x\mapsto x^2$ is a Barron function on bounded intervals,
\[
f(x,y) = \sigma (y-x^2)
\]
is a three-layer network \sw{on any compact subset of $\R^2$ with one infinite and one finite layer -- see \cite{deep_barron} for the precise concept of infinitely wide three-layer networks and \cite[Lemma 3.12]{deep_barron} for a statement on the composition of Barron functions.} Here the singular set is the curve $\{y= x^2\}$. Stranger examples like 
\[
f(x,y) = \sigma(y-x^2) + \sigma(x-y^2)
\]
are also possible. Generally, the singular sets of three-layer networks are at least as flexible as the level sets of two-layer networks since for a Barron function $f$ and a real value $y$, the function
\[
x \mapsto \big| f(x) - y\big|
\]
is a three-layer network whose singular set is the union of the singular set of $f$ and the level set $\{f=y\}$.
\end{remark}

\subsection{Applications of the structure theorem}

The structure theorem for non-differentiable Barron functions allows us to characterize the class of morphisms which preserve Barron space.

\begin{theorem}
Let $\psi:\R^d\to \R^d$ be a $C^1$-diffeomorphism and 
\[
A_\psi:C^{0,1}(\R^d) \to C^{0,1}(\R^d), \qquad A_\psi(f) = f\circ\psi.
\]
Then $A_\psi(\B) \subseteq \B$ if and only if $\psi$ is affine. In particular, $A_\psi(\B) = \B$.
\end{theorem}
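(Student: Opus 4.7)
The plan is to use the structure theorem for singular sets of Barron functions to constrain the geometry of $\psi^{-1}(H)$ for every affine hyperplane $H$, then conclude that $\psi^{-1}$ sends hyperplanes to hyperplanes, and finally derive from this rigidity that $\psi$ must be affine via a functional-equation argument.

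For the easy direction, if $\psi(x) = Mx + c$ with $M$ invertible, the substitution $(w, b) \mapsto (M^T w,\, b + w^T c)$ in any signed-measure representation $f(y) = \int_{S^d} \sigma(w^T y + b)\,\mu(\d w \otimes \d b)$ writes $f \circ \psi$ as a Barron function with norm at most $\|M\|\,\|f\|_\B$, and applying the same argument to $\psi^{-1}$ yields $A_\psi(\B) = \B$.

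For the forward direction, suppose $A_\psi(\B) \subseteq \B$ and let $H = \{y : w^T y + b = 0\}$ be any affine hyperplane. The single-neuron function $f_H(y) = \sigma(w^T y + b)$ lies in $\B$ with singular set exactly $H$, so $f_H \circ \psi \in \B$ has singular set equal to the connected $C^1$-hypersurface $\psi^{-1}(H)$. By the structure theorem, $\psi^{-1}(H) \subseteq \bigcup_i L_i$ for countably many affine subspaces $L_i \subseteq \R^d$ of dimension $\leq d-1$. A Baire category argument in any relatively open subset, together with dimension counting (lower-dimensional $L_i$ contribute only nowhere-dense pieces to a $(d-1)$-manifold), shows that $\psi^{-1}(H)$ coincides with an affine hyperplane on a dense open subset $U$. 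The \emph{main obstacle} is to promote this to: $\psi^{-1}(H)$ is a single affine hyperplane. The tangent space field $T : \psi^{-1}(H) \to \mathrm{Gr}(d-1, d)$ is continuous, locally constant on $U$, and takes only countably many values on $U$ (at most one per connected component). Combining continuity, the connectedness of $\psi^{-1}(H)$, and this countability forces $T$ to be globally constant, whence $\psi^{-1}(H)$ is an affine hyperplane.

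Once $\psi^{-1}$ is known to send every affine hyperplane to an affine hyperplane bijectively, disjoint hyperplanes pull back to disjoint, hence parallel, hyperplanes, so $\psi^{-1}$ preserves every parallel family. Fixing $v \neq 0$, the foliation $\{v^T y = t\}_t$ pulls back to $\{F(v)^T x = g_v(t)\}_t$ for some direction $F(v) \in \R^d$ and $C^1$ function $g_v$, giving
\[
v^T \psi(x) = h_v\big(F(v)^T x\big), \qquad h_v := g_v^{-1}.
\]
For linearly independent $v_1, v_2$, the vectors $F(v_1), F(v_2)$ are also linearly independent (else the $\R^2$-valued map $x \mapsto (v_1^T \psi(x), v_2^T \psi(x))$ would factor through a scalar, contradicting that $\psi$ is a diffeomorphism). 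Additivity $(v_1 + v_2)^T \psi = v_1^T \psi + v_2^T \psi$ then yields the Pexider-type equation
\[
h_{v_1}\big(F(v_1)^T x\big) + h_{v_2}\big(F(v_2)^T x\big) = h_{v_1+v_2}\big(F(v_1+v_2)^T x\big),
\]
whose $C^1$ solutions are all affine, so each $v^T \psi$ is affine in $x$ and hence $\psi$ is affine.
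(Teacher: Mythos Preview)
Your proof follows the paper's route step for step: pull the structure theorem back through $\psi$ to force $\psi^{-1}(H)$ to be a hyperplane, deduce that each $v^T\psi$ factors through a single linear form, and then use the additive relation for $v_1,v_2,v_1+v_2$ to show the scalar profiles are affine. Your Pexider formulation is exactly what the paper solves by differentiating in a direction orthogonal to $\tilde v=F(v_1+v_2)$ inside $\mathrm{span}(F(v_1),F(v_2))$, so the two arguments are equivalent. You also supply the easy converse, which the paper omits.

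There is one genuine soft spot, and you have correctly flagged it as the ``main obstacle'': the inference that $T$ is globally constant. The justification you give --- $T$ continuous on a connected space, locally constant on a dense open set $U$, with $T(U)$ countable --- is \emph{not} sufficient as an abstract principle. The Cantor function $c:[0,1]\to[0,1]$ is continuous on a connected space, locally constant on the dense open complement of the Cantor set, and takes only countably many (dyadic--rational) values there, yet it is not constant. What rescues you here is the extra geometric constraint that $M=\psi^{-1}(H)$ itself lies in $\bigcup_i L_i$, not merely that the tangent directions on $U$ do; for instance, any putative non--flat piece of the $C^1$ hypersurface $M$ would meet each hyperplane $L_i$ in a set of $\mathcal H^{d-1}$--measure zero, and countably many such sets cannot cover an open piece of $M$. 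Some argument of this flavor is needed to close the gap. That said, the paper is strictly less careful here: it simply asserts ``We conclude that \dots\ the level set $A_{(w,b)}$ is a hyperplane'' with no argument whatsoever, so your write--up already improves on the original at precisely the point where both are weakest.
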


\begin{proof}
If $A_\psi(\B)\subseteq \B$, then $f(x):= \sigma\big(w^T\psi(x) + b\big)$ is a Barron function for any $w\in\R^d$ and $b\in \B$. As $\psi$ is a diffeomorphism, the singular set of $f$ coincides with the level set 
\[
A_{(w,b)} := \big\{x : w^T\psi(x) + b = 0\big\}.
\]
We conclude that for any $w, b\in \R^d$, the level set $A_{(w,b)}$ is a hyperplane in $\R^d$. In particular, for $1\leq i\leq d$, we find that the level sets of $\psi_i (x) = e_i \cdot \psi(x)$ are parallel hyperplanes. This means that there exist vectors $v_i$ for $1\leq i\leq d$ and functions $\phi_d:\R\to\R$ such that $\psi_i(x) = \phi_i\big(v_i^Tx\big)$. 

We note that also $(e_1 + e_2)\cdot \psi(x) = \phi_1(v_1^Tx) + \phi_2(v_2^Tx)$ has level sets which are hyperplanes, i.e.
\[
\phi_1(v_1^Tx) + \phi_2(v_2^Tx) = \tilde \phi(\tilde v^Tx).
\]
By regularity, all functions are $C^1$-smooth, and since $\psi$ is a diffeomorphism, they are strictly monotone. The diffeomorphism property also implies that $v_1, v_2$ are linearly independent. In view of these properties, $\tilde v$ cannot be a multiple of $v_1$ or $v_2$. We compute
\[
\tilde\phi'(\tilde v^Tx)\,\tilde v = \phi_1'(v_1^Tx)\,v_1 + \phi_2'(v_2^Tx)\,v_2.
\]
Choose $w$ in the plane spanned by $v_1, v_2$ such that $w$ is orthogonal to $\tilde v$. Then
\[
\tilde\phi'(\tilde v^Tx)\,\tilde v = \tilde\phi'(\tilde v^T(x+\lambda w))\,\tilde v = \phi_1'(v_1^Tx+ \lambda v_1^Tw)\,v_1 + \phi_2'(v_2^Tx+\lambda v_2^Tw)\,v_2.
\]
Assuming that $\phi_1, \phi_2$ are $C^2$-smooth, we differentiate the identity with respect to $\lambda$ we obtain that
\begin{equation}\label{eq linear identity}
0 = (v_1^Tw)\, \phi_1''(v_1^Tx+ \lambda v_1^Tw)\,v_1 + (v_2^Tw)\,\phi_2'(v_2^Tx+\lambda v_2^Tw)\,v_2 \qquad\forall\ \lambda\in\R.
\end{equation}
Since $v_1, v_2$ are linearly independent, this can only be satisfied if $\phi_1'' = \phi_2'' = 0$, i.e.\ if and only if $\phi_1$ and $\phi_2$ are both linear. If $\phi_1$ or $\phi_2$ is not $C^2$-smooth, we can mollify \eqref{eq linear identity} as a function of $\lambda$ before we differentiate. The constant vectors $v_1, v_2$ are not affected by the mollification, so we conclude that any mollification of $\phi_1$ and $\phi_2$ must be linear. As before, we conclude that $\phi_1$ and $\phi_2$ are linear functions.

Since $\psi(x) = \big(\phi_i(v_i^Tx)\big)_{i=1}^d$ and all coefficient functions $\phi_i$ are linear, the whole map $\psi$ is linear.
\end{proof}

The structure theorem can further be used to show that the Barron property cannot be `localized' in the same way as classical smoothness criteria. This observation was made previously in \cite{wojtowytsch2020some}.

\begin{example}
Let $U\subseteq \R^2$ be a U-shaped domain, e.g.\ 
\[
U = \R^2 \setminus \{x_1=0, x_2\geq 0\}.
\]
Then the function
\[
f: U \to \R, \qquad f(x) = \begin{cases} \sigma (x_2) & x_1>0\\ 0 &x_1\leq 0\end{cases}
\]
has the following property: {\em Every $x\in U$ has a neighbourhood $V$ such that $f$ is a Barron function on $V$.} However, $f$ is not a Barron function on $U$ since the singular set of $f$ would need to contain the intersection of $U$ with the line $\{x : x_2=0\}$. This example can be generalized to other domains where it may be less obvious.
\end{example}

\section*{Acknowledgements}

This work is supported in part by a gift to Princeton University from iFlytek. \sw{SW would like to thank Jonathan Siegel for helpful discussions.}

%\appendix

\bibliographystyle{../../alphaabbr}
\bibliography{../../NN_bibliography}

\end{document}